\newcommand{\argmax}{{\text{argmax}}}
\newcommand{\argmin}{{\text{argmin}}}
\newtheorem{theorem}{Theorem}
\newtheorem{lemma}{Lemma}
\newtheorem{assumption}{Assumption} 
\newtheorem{proposition}{Proposition} 
\newtheorem{remark}{Remark}
\newtheorem{corollary}{Corollary}
\newtheorem{definition}{Definition}
\newtheorem*{proof2*}{\bf{Proof Sketch}}
\newcommand{\BlackBox}{\rule{1.5ex}{1.5ex}}  
\newenvironment{proof}{\par\noindent{\bf Proof\ }}{\hfill\BlackBox\\[2mm]}
\title{A Bayesian Approach to (Online) Transfer Learning: Theory and Algorithms}
\author{Xuetong Wu$^1$, Jonathan H. Manton$^1$, Uwe Aickelin$^2$, Jingge Zhu$^1$ }
\affil{$^1$ Department of Electronic and Electrical Engineering \\ $^2$ Department of Computing and Information Systems  \\University of Melbourne, Australia \\ 
\textnormal{\texttt{xuetongw1@student.unimelb.edu.au}} \\
\textnormal{\texttt{\{jmanton, uwe.aickelin, jingge.zhu\}@unimelb.edu.au } }}
\begin{document}
\maketitle

\begin{abstract}
Transfer learning is a machine learning paradigm where knowledge from one problem is utilized to solve a new but related problem. While conceivable that knowledge from one task could be useful for solving a related task, if not executed properly, transfer learning algorithms can impair the learning performance instead of improving it --- commonly known as \textit{negative transfer.} In this paper, we study transfer learning from a Bayesian perspective, where a parametric statistical model is used. Specifically, we study three variants of transfer learning problems, instantaneous, online, and time-variant transfer learning. For each problem, we define an appropriate objective function, and provide either exact expressions or upper bounds on the learning performance using information-theoretic quantities, which allow simple and explicit characterizations when the sample size becomes large. Furthermore, examples show that the derived bounds are accurate even for small sample sizes. The obtained bounds give valuable insights into the effect of prior knowledge for transfer learning, at least with respect to our Bayesian formulation of the transfer learning problem. In particular, we formally characterize the conditions under which negative transfer occurs. Lastly, we devise two (online) transfer learning algorithms that are amenable to practical implementations, one of which does not require the parametric assumption. We demonstrate the effectiveness of our algorithms with real data sets, focusing primarily on when the source and target data have strong similarities.

\noindent \bf{Key Words:}  Transfer learning, Conditional Mutual Information, Prior Information, Negative Transfer

\end{abstract}

\tableofcontents

\section{Introduction}
Machine learning has been widely studied and used in many real-world applications. One crucial assumption for traditional machine learning algorithms is that the training and target data are drawn from the same distribution. However, this assumption may not always hold in practice. This may be due to the fact that training and testing data are time-varying, or that it is difficult to collect data from testing distributions due to annotation expenses or privacy considerations. To tackle the issues of distribution mismatch, there is a need to develop learning algorithms that can ``transfer'' knowledge across different domains. Such transfer learning algorithms leverage knowledge from one or more \emph{source} domains to resolve the problem (or improve the performance) in a related \emph{target} domain. This problem has been brought to the fore due to the rapid growth of different types of data and the rise of complicated learning models such as pre-trained neural networks \citep{devlin2018bert,yang2019xlnet} and has been widely employed in natural language processing \citep{ruder2019transfer}, computer vision \citep{wang2018deep} and recommender systems \citep{pan2010transfer}. 

Currently, transfer learning (also known as domain adaptation) problems are widely investigated with different setups. Most existing transfer learning methods focus on offline settings where both batch  target and source data are available in the training phase \citep{pan2009survey,weiss2016survey,zhuang2020comprehensive,yang2020transfer}. In the testing phase, the performance is evaluated with previously unseen target data. We refer to this setup as instantaneous transfer learning (ITL); see Figure~\ref{subfigure:inst}. To accommodate applications where data arrive sequentially (e.g., predicting stock market prices), we also study online transfer learning (OTL), first proposed by \cite{zhao2014online}. The framework of OTL is illustrated in Figure~\ref{subfigure:online}, where the decisions are sequentially made with the aid of source and historical target data. The online framework has been extended to many other problems such as multi-source transfers \citep{wu2017online,kang2020online},  multi-task problems \citep{he2011graphbased} and iterative domain adaptation \citep{bhatt2016multi}. The third setup we consider is time-variant transfer learning (TVTL) which is an extension of OTL where the target data are drawn sequentially from a possibly time-variant distribution (see Figure~\ref{subfigure:tv}). This assumption is relevant for some practical problems such as climate change and geographical process detection, where the data distribution may vary from time to time. Similar problems have been studied in \cite{liu2020learning, wang2020continuously}. 

In this work, we propose a general framework for instantaneous, online and time-variant transfer learning problems that is suitable for a very general transfer learning setup. Specifically, we formulate the transfer learning problems under the assumptions that the source and target data distributions are parameterized by some unknown but fixed parameters. Then we define the corresponding evaluation criterion for each case, and propose an information-theoretic based algorithm for learning in the target domain. In a nutshell, the upper bounds of the learning performance are characterized by the conditional mutual information (CMI) between the model parameters and the testing samples conditioned on training samples, and the asymptotic approximation w.r.t. the sample sizes is derived if the prior distribution is proper. Practically, the bound can also be applied to the scenario where only limited source and target data are available.

\begin{figure}[h!]
    \centering
    \subfigure[ITL\label{subfigure:inst}]{\includegraphics[height = 3cm]{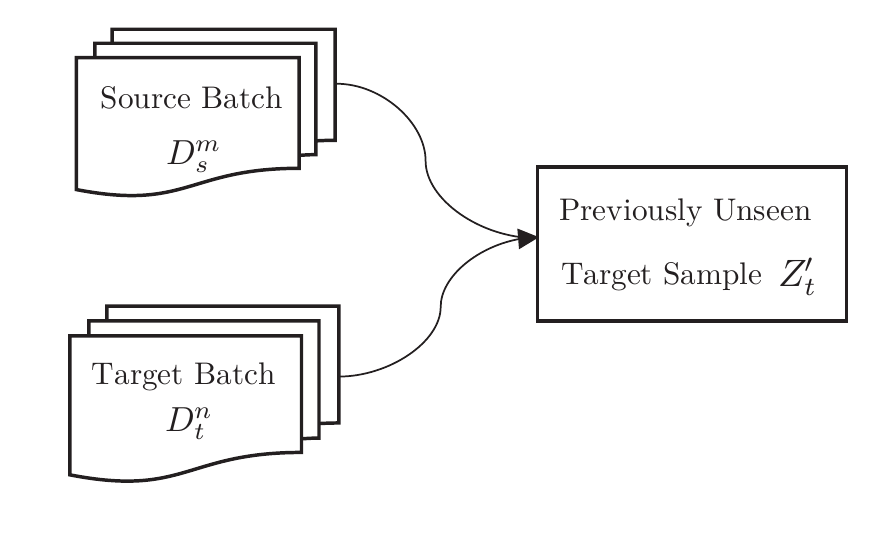}}
    \subfigure[OTL\label{subfigure:online}]{\includegraphics[height = 3cm]{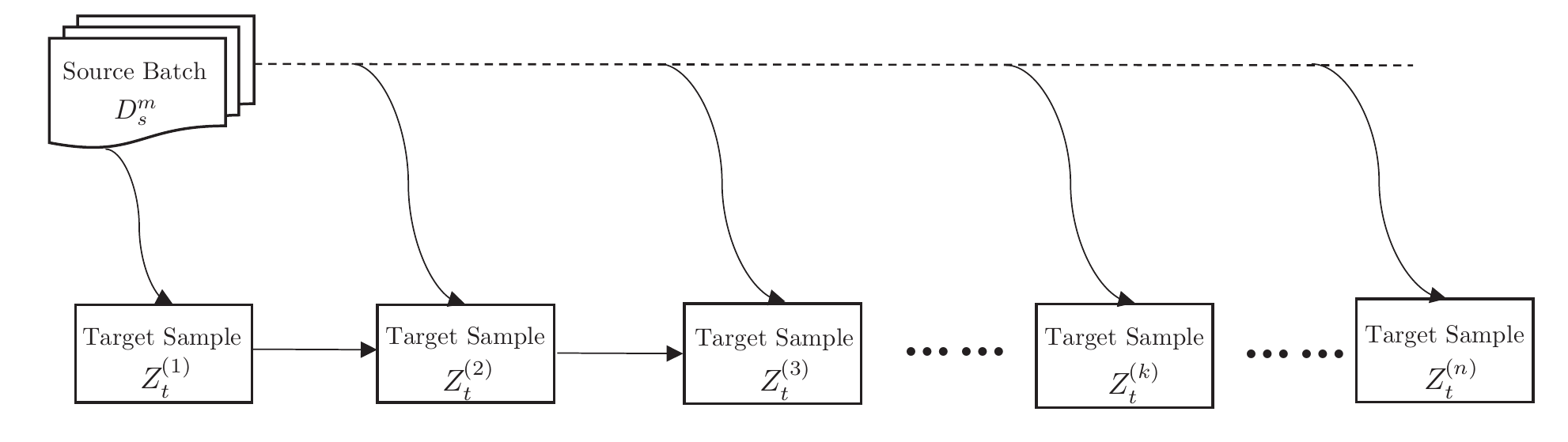}}
    \subfigure[TVTL\label{subfigure:tv}]{\includegraphics[height = 4cm]{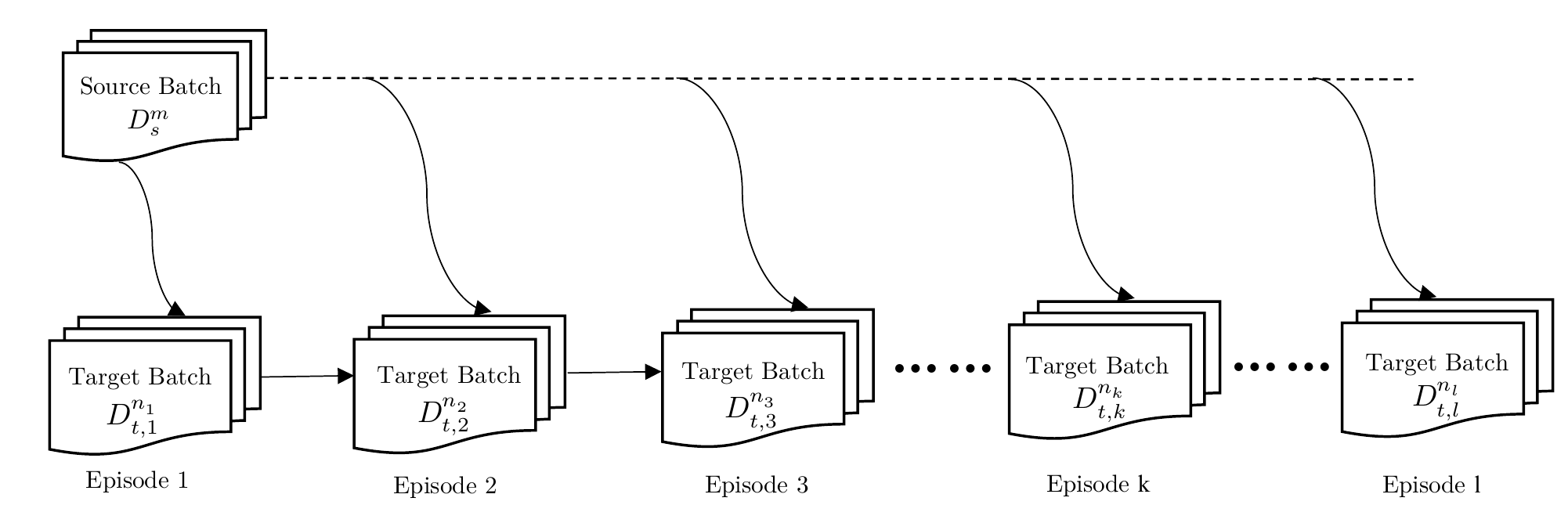}}
    \caption{We will investigate various transfer learning frameworks in this work where (a) describes the instantaneous transfer learning: given both batch target and source data, the task is to predict for a single previously unseen target data. (b) shows the learning regime for the online transfer learning where a batch source data is available and the target data will arrive sequentially from a fixed distribution, the task is to make sequential predictions for incoming target data. (c) illustrates the process for the time-variant transfer learning where a batch source data is available and the target data will arrive sequentially from a possibly time-variant distribution, the task is to make sequential predictions for incoming target data for each episode. }
    \label{fig:otl}
\end{figure}

\subsection{Related Work}\label{Sec2}
\subsubsection*{Transfer Learning}
We refer to three excellent surveys on transfer learning, \cite{pan2009survey}, \cite{weiss2016survey} and \cite{zhuang2020comprehensive}, which formally define the problem of transfer learning, and provide a comprehensive overview of transfer learning methodologies. We mainly focus on the condition that both source and target data have the same feature and label spaces, which is known as \emph{homogeneous transfer learning}. Most technologies to conquer the distribution shifting are roughly categorized into instance-based, feature-based, parameter-based and deep learning-based methods. Instance-based methods identify source samples that bear a likeness to target samples by importance re-weighting \citep{gretton2009covariate,cortes2008sample}. Feature-based methods map both the source and target data to a new latent space where the discrepancy of their (empirical) distribution embeddings (e.g., kernel embeddings) is small under some metric and then construct learning models with the new representations \citep{pan2010transfer,long2013transfer,zhang2017joint}. The idea of parameter-based methods is to initially construct a model using the source data and then learned model parameters are shared with the target domain as a pre-obtained model for further fine-tuning or regularization \citep{duan2009domain,kuzborskij2013stability}. Deep learning-based methods use deep neural networks to either learn new representations in both source and target domains for efficient knowledge transfer \citep{long2015learning,tzeng2017adversarial,long2017deep} or pre-train a model from the source domain that generalizes well in the target domain \citep{devlin2018bert,yang2019xlnet}, which takes advantage of both feature-based and parameter-based methods. However, the majority of these methods focus on the empirical verification on the effect of source samples instead of focusing on a rigorous theoretical analysis of their algorithms. For example, we lack the understanding of how the source data explicitly affects the generalization error in the target domain, and it is not clear when negative transfer happens given a specific algorithm \citep{rosenstein2005transfer}. Moreover, there is no unified framework for analysing this type of problem. Current theoretical analyses for transfer learning focus on either the co-variate shift or conditional distribution alignment \citep{redko2020survey}. To this end, various metrics, such as $\mathcal{H}\Delta\mathcal{H}$ divergence \citep{ben2010theory,zhang2019bridging}, maximum mean discrepancy \citep{pan2010transfer,long2013transfer,zhang2017joint}, KL divergence \citep{wu2020information} and density ratios of the joint distribution between the source and target \citep{wang2019characterizing}, are developed to measure the similarity between the source and target domains. It is widely recognized that minimising such divergence is more likely to bring about successful knowledge transfer, and prior knowledge over the source and target domains effectively improves the prediction as demonstrated in many papers \citep{badreddine2019injecting,cao2013practical,mieth2019using}. However, to the best of our knowledge, there is no work that theoretically defines this prior knowledge and its effect on the learning guarantees. 

\subsubsection*{Online Transfer Learning}
In contrast to conventional batch settings, online transfer learning has attracted more attention in recent research work, where the target data may arrive sequentially. In this particular setting, key questions include the following. Can the source data can help reduce the prediction loss for the target? Under what conditions will the source data be helpful? How does the source data interfere with the prediction of target data and how does the learning performance vary quantitatively?  To answer these questions formally, \cite{zhao2014online} first proposed the OTL framework for binary classification with linear models, while the learning metrics and loss functions are limited, i.e. performance is evaluated using very specific metrics such as the number of mistakes. Such a learning framework in general does not exploit the structures (or distributions) of the data or model parameters. Other related works such as \cite{hoffman2014continuous,mancini2019adagraph,liu2020learning} consider the time-evolving target domains and empirically show the usefulness of their proposed methods. Yet the theoretical guarantees on when or whether negative transfer happens are less investigated. \cite{kumagai2016learning} studies the time-variant domain adaptation problem using variational Bayesian inference, which is close to our framework. However, the latest learned hypothesis is restricted to normal distributions with a linear combination of previously learned hypotheses, which may not always be the case in real problems. Among the many other papers on OTL algorithms, relatively few focus on a rigorous theoretical analysis of the learning performance. 

\subsubsection*{Negative Transfer}
The key points for successful transfer learning are knowing how to transfer, what to transfer and when to transfer \citep{yang2020transfer}. Researchers have so far focused more on how and what to transfer but have paid less attention to when to transfer. Characterizing and determining when to transfer is of great importance since the source data is not always useful. If the source is very different from the target or if we do not execute the learning procedures properly, the source data will instead impair the performance on the target domain \citep{rosenstein2005transfer, wang2019characterizing}. As the effectiveness of transfer learning is not always guaranteed, there is a need to develop a robust methodology to overcome the \emph{negative transfer} problem (see \cite{zhang2020overcoming} for an overview). Roughly speaking, negative transfer can be a consequence of poor source and target data quality, the ``distance'' between the source and target domains, or a difference between the source and target learning objectives. The idea of avoiding negative transfer is to minimise the aforementioned \emph{domain divergence} between the source and target in terms of the underlying distributions. Despite the fact that the effectiveness of source data depends on such divergences, negative transfer is inevitable in many empirical experiments.  Notably, few papers formally study the problem of negative transfer, which is a crucial question in transfer learning.


\subsubsection*{Information-theoretic Framework and Universal Prediction}
An information-theoretic framework has been established and studied in many online learning and reinforcement learning problems (see \cite{lazaric2008transfer,lazaric2012transfer,zhan2015online,taylor2009transfer, merhav1998universal,russo2016information} for references). One advantage of this framework is that information-theoretic tools are powerful in studying asymptotic behaviours as well as deriving learning performance bounds for various statistical problems. Additionally, information-theoretic quantities such as mutual information and KL divergence (relative entropy) give natural interpretations for the learning bounds. In this paper, we establish learning bounds for various transfer learning setups by using the information-theoretic concept of universal prediction \cite{merhav1998universal}. Here, ``universal'' means that the predictor does not depend on the unknown underlying distribution and yet performs essentially as well as if the distribution was known in advance. By modelling the data distributions with parametric settings, we follow the line of work from \cite{feder1992universal,cover1996universal,merhav1998universal,shkel2018sequential,zhan2015online} and derive the CMI-based learning bounds for different transfer learning problems. Taking advantage of the asymptotic properties of the CMI, we could theoretically identify the usefulness of the source data and rigorously claim the scenarios when negative or positive transfer will happen.

\subsection{Contributions}
We brief summarize the main contributions of our paper as follows.
\begin{itemize}
    \item We formulate the instantaneous, online and time-variant transfer learning problems under parametric distribution conditions where the source data are sampled from $P_{\theta^*_s}$ and target data are sampled from $P_{\theta^*_t}$ ($P_{\theta^*_{t,i}}$ for TVTL at episode $i$) from a parametric family $\mathcal{P}_{\theta}, \theta \in \Lambda$. With the proposed \emph{mixture strategy}, the learning performance for each problem is characterized by the conditional mutual information (CMI) between the distribution parameters and data samples. Furthermore, we quantitatively give an asymptotic estimation of CMI for each problem, which clearly shows how the learning performance depends on the target and source sample sizes. These bounds also give explicit connection between the learning performance and the prior knowledge over $\theta^*_t$ ($\theta^*_{t,i}$ for TVTL) and $\theta^*_s$ along with their common parameters.
    \item  Based upon the asymptotic bounds, we define the concept of \emph{proper prior} and show that the improper prior will lead to the negative transfer. We also identify scenarios when the positive transfer will happen, which provides an effective guideline to avoid the negative transfer. To the best of our knowledge, this is the first work that theoretically and quantitatively characterizes phenomena of negative and positive transfer.
    \item To extend our theoretical study to practical implementation, we devise two efficient transfer learning algorithms inspired by the mixture strategy.  Specifically, one algorithm is not restricted to the parametric models, thus extending our results to more general applications. Experimenting on both the synthetic and real data sets, the results empirically confirm the theoretical results and show the usefulness of our proposed algorithms. 
\end{itemize}

This paper is structured as follows.  In Section~\ref{Sec3}, we formally formulate the problems for instantaneous, online and time-variant transfer learning. The main theoretical results and discussions are presented in Section~\ref{Sec4}. The proposed algorithms and  experiments are then followed and presented in Section~\ref{Sec5} along with some practical applications. Section~\ref{Sec7} concludes the work and carries out some future works and Section~\ref{Sec6} gives main proofs of the paper.

\section{Problem Formulation}\label{Sec3}
In this section, we formally formulate the ITL, OTL and TVTL problems. We will use the convention that capital letters denote random variables and lower-case letters their realizations. We define $a \vee b=\max (a, b)$ and $a \wedge b=\min (a, b)$, and denote the support of a probability distribution by $\operatorname{supp}(\cdot)$. We use $f(n) \asymp g(n)$ to signify $f(n)$ grows at the same asymptotic rate as $g(n)$, meaning there exists some positive integer $n_0$ such that for all $n > n_0$, $c_1g(n) \leq f(n) \leq c_2g(n)$ always holds for some positive values $c_1$ and $c_2$. We also use $f(n) \lesssim g(n)$ (for convenience, we will also use $f(n) = O(g(n))$ interchangeably) to mean $f(n)$ grows asymptotically no faster than $g(n)$, that is, there exists some integer $n_0$ such that for all $n > n_0$, $ f(n) \leq c_3g(n)$ always holds for some positive value $c_3$. 

\subsubsection*{Instantaneous Transfer Learning} 
Assume the source data $ D^m_s = (Z^{(1)}_s,\cdots,Z^{(m)}_s) \in \mathcal{Z}^{m}$ and the target data $ D^n_t = (Z^{(1)}_t,\cdots,Z^{(n)}_t) \in \mathcal{Z}^{n}$ are given, where each sample takes on a value in $\mathcal{Z}$. Note that $\mathcal{Z}$ is a separable metric space with probability measures assumed to be defined on the Borel subsets of $\mathbb{R}^k$. We will predict a single previously unseen target sample $Z'_t$ using $D^m_s$ and $D^{n}_t$ with the predictor $b : \mathcal{Z}^{m} \times \mathcal{Z}^{n} \rightarrow \hat{\mathcal{Z}}$. For simplicity we assume $\hat{\mathcal{Z}} = \mathcal{Z}$, but note that $\hat{\mathcal{Z}}$ could be different from $\mathcal{Z}$ in general. Associated with this predictor $b$ and the actual outcome $z'_t$, we introduce a loss function $\ell: \hat{\mathcal{Z}} \times \mathcal{Z} \rightarrow \mathbb{R}$ to evaluate the prediction performance.  We will later show how to construct $b$ properly for different loss functions. We make the following assumptions on data distributions. 
\begin{assumption}[Parametric Distributions]\label{asp:para-dist}
    We assume that source and target data are generated independently in an i.i.d.\@ fashion. Specifically, the joint distribution of the source and target data $P_{\theta^*_s,\theta^*_t}({D_t^n, D_s^m}, Z'_t)$ can be written as 
    \begin{equation}
    P_{\theta^*_s,\theta^*_t}(D_t^n, D_s^m, Z'_t)= P_{\theta_t^*}(Z'_t) \prod_{i=1}^n P_{\theta_t^*}(Z_t^{(i)}) \prod_{j=1}^m P_{\theta_s^*}(Z_s^{(j)}),
    \end{equation}
    where $P_{\theta_t^*}(Z)$ and $P_{\theta_s^*}(Z)$ are two probability density functions in a parametrized family of distributions $\mathcal{P} = \{P_{\theta}\}_{\theta \in \Lambda}$ with respect to a fixed $\sigma$-finite measure $\mu(dz)$. Here $\Lambda$ is a closed set on $\mathbb{R}^{d}$ and $\theta^*_t,\theta^*_s$ are the interior points of $\Lambda$.
\end{assumption}
After observing $n$ target samples and $m$ source samples, assume $\ell$ is integrable w.r.t. the measure $\mu$ given any $b$, we want to minimize the corresponding \emph{excess risk} defined as 
\begin{align}
    \mathcal{R}_{I} := \mathbb{E}_{\theta^*_s,\theta^*_t} \left[ \ell\left(b, Z'_t\right) - \ell(b^*, Z'_t) \right], \label{eq:excess-risk}
\end{align}
where $b$ is the predictor we made based on the source data $D^m_s$ and target data $D^{n}_t$ but without the knowledge of $\theta^*_s$ and $\theta^*_t$. Under certain loss functions, the predictor $b^*$ is set to be the optimal one that can depend on true target distributions $P_{\theta^*_t}$, which will be specified later and shown to have a unique optimal. For other general loss functions, under suitable continuity conditions, we do have optimal predictor depending on $P_{\theta^*_t}$, see \cite{haussler1998sequential,merhav1998universal} for examples. If not otherwise specified, the notation $\mathbb{E}_{\theta_s,\theta_t}[\cdot]$ (similarly, $\mathbb{E}_{\theta_t}[\cdot]$ and $\mathbb{E}_{\theta_s}[\cdot]$) denotes the expectation taken over all source and target samples that are drawn from ${P}_{\theta_s}$ and ${P}_{\theta_t}$. We will call this problem setup "\emph{instantaneous transfer learning}" where the subscript of $R_I$ originates. In this model, both target and source data are given in batch in the training phase, and the learned predictor will be applied to a single unseen target sample. 

\subsubsection*{Online Transfer Learning}
Assume the source data $ D^m_s = (Z^{(1)}_s,\cdots,Z^{(m)}_s) \in \mathcal{Z}^{m}$ are given in batch while the target data are received sequentially as $Z^{(1)}_t, Z^{(2)}_t, \cdots, Z^{(k)}_t, \cdots $ where each sample takes value in $\mathcal{Z}$. At each time instant $k$, after having seen $D^{k-1}_{t} = (Z^{(1)}_t, Z^{(2)}_t,\cdots, Z^{(k-1)}_t)$, we predict $Z^{(k)}_t$ using $D^m_s$ and $D^{k-1}_t$ with the predictor $b_k : \mathcal{Z}^{m} \times \mathcal{Z}^{k-1} \rightarrow \hat{\mathcal{Z}}$. Assume the target sequence and source batch are sampled in a i.i.d. way as described in Assumption~\ref{asp:para-dist}. After observing $n$ target samples, we want to minimise the corresponding \emph{expected regret} defined as
\begin{align}
    \mathcal{R}_{O} := \mathbb{E}_{\theta^*_s,\theta^*_t} \left[\sum_{k=1}^{n} \ell(b_{k}, Z^{(k)}_t) - \sum_{k=1}^{n} \ell(b_k^*, Z^{(k)}_t) \right], \label{eq:regret}
\end{align}
where $b_k$ is the predictor we made based on the source data $D^m_s$ and target data  $D^{k-1}_t$ but without the knowledge of $\theta^*_s$ and $\theta^*_t$. Similarly, under certain loss functions, the predictor $b_k^*$ is set to be the optimal one at each time $k$ that can depend on true target distributions $P_{\theta^*_t}$. The subscript "$O$" is short for "Online”.

\subsubsection*{Time-variant Transfer Learning} 
In the above OTL scenario, we assume the distributions of target samples are time-invariant, which may not always be the case in many real-world applications. In this model, we consider the \emph{time-evolving} target data that the distributions are parametrized by $\theta^*_{t,i}$ where $i \in \mathbb{N+}$ represents the episode of the sequential target distributions. That is, at each $i$, we will receive $n_i$ target samples sequentially drawn from an unknown but fixed distribution $P_{\theta^*_{t,i}}$. Furthermore, at episode $i$, we assume that $\theta^*_{t,i}$ shares $c_i$ common parameters with $\theta^*_{t,i-1}$, known as the target common parameters. We also assume that the source parameter $\theta^*_s$ shares $j_i$ common parameters with $\theta^*_{t,i}$ and $\theta^*_{t,i-1}$, known as the source sharing parameters. For simplicity, we suppose that the target common parameters and the source sharing parameters are not overlapped. Denote $D^m_s$ the source dataset and denote $D^{n_i}_{t,i} = (Z^{(k)}_{t,i} )_{k=1,2,\cdots,n_i}$ the received target dataset till time $n_i$ at episode $i$, we predict $Z^{(n_i+1)}_{t,i}$ using the source data $D^m_s$ and  target data $D^{n_{i-1}}_{t,i-1}$ in previous episode and $D^{n_i}_{t,i}$ in current episode with the predictor $b_{n_i+1,i} : \mathcal{Z}^{m} \times \mathcal{Z}^{n_{i-1}} \times \mathcal{Z}^{n_i} \rightarrow \hat{\mathcal{Z}}$. We further make the following assumption.
\begin{assumption}\label{asp:para-dist-timevarying}
In time-variant transfer learning, we assume that source and time variant target data are generated independently in an i.i.d. fashion. More precisely, the joint distribution of the data sequence till time $n_l$ in episode $l$ can be factorized as,
\begin{align}
    P_{\theta^*_{t,1}, \theta^*_{t,2}, \cdots, \theta^*_{t,l},\theta^*_s}(D^{n_1}_{t,1}, D^{n_2}_{t,2}, \cdots, D^{n_{l}}_{t,l}, D_s^m) = 
    \prod_{i=1}^{l} \prod_{k=1}^{n_i} P_{\theta^*_{t,i}}(Z^{(k)}_{t,i})
    \prod_{j=1}^{m}P_{\theta^*_{s}}(Z^{(j)}_{s}),
\end{align}
where $(P_{\theta_{t,i}^*})_{i=1,2,\cdots,l}$ and $P_{\theta^*_s}$ are in a parametrized family of distributions $\mathcal{P} = \{P_{\theta}\}_{\theta \in \Lambda}$. Here $\Lambda \subseteq \mathbb{R}^{d}$ is some measurable space, and $(\theta_{t,i}^*)_{i=1,2,\cdots,l}$ and $\theta^*_s$ are points in the interior of $\Lambda$. To be consistent, we define $n_0 = 0$ and $\theta^*_{t,0}$ is arbitrarily chosen in $\Lambda$ as a trivial initialization.
\end{assumption}
 Assume the number of target samples $n_i$ at each episode $i$ is known, we are interested in minimising the expected regret till episode $l$ as:
\begin{align}
    \mathcal{R}_{TV} := \sum_{i=1}^{l}\mathbb{E}_{\theta^*_s,\theta^*_{t,i},\theta^*_{t,i-1}} \left[\sum_{k=1}^{n_i} \ell(b_{k,i}, Z^{(k)}_{t,i}) - \sum_{k=1}^{n_i} \ell(b_{k,i}^*, Z^{(k)}_{t,i}) \right], \label{eq:time-variant-regret}
\end{align}
where $b_{k,i}$ is chosen based on the source data $D^m_s$ and target data $D^{k-1}_{t,i}$ and $D^{n_{i-1}}_{t,i-1}$ but without the knowledge of $\theta^*_s$, $\theta^*_{t,i-1}$ and $\theta^*_{t,i}$. The predictor $b_{k,i}^*$ is the optimal decision at each time $k$ that can depend on true target distributions $P_{\theta^*_{t,i}}$ and $P_{\theta^*_{t,i-1}}$. The subscript "$TV$" stands for "Time-Variant”.

\section{Main Results}\label{Sec4}

In this section, we will present our main theoretical results. Under the Assumption~\ref{asp:para-dist}, the parametric conditions allow us to characterize the excess risk in Equation~(\ref{eq:excess-risk}) and the expected regrets in Equation~(\ref{eq:regret}), (\ref{eq:time-variant-regret}) using information-theoretic quantities under the logarithmic loss or any bounded loss functions. To be specific, the CMI captures the performance of both the excess risk and expected regret, and their asymptotic estimations are derived when we have sufficient source and target samples.

\subsection{Information-theoretic Characterization}\label{subsec:ITC}
\subsubsection*{Instantaneous Transfer Learning}
For the sake of simplicity, we first present our main results under the \emph{logarithmic loss} for the ITL setup, which is formally defined as follows.
\begin{definition}[Logarithmic Loss] Let the predictor $b$ be a probability distribution over the target sample $Z'_t$. The logarithmic loss is then defined as
\begin{equation}
    \ell(b,Z'_t) = - \log b(Z'_t).
\end{equation}
\end{definition}
\noindent Under the log loss, the predictor $b$ can be naturally viewed as a conditional distribution $Q$ over the unseen target data given the training data $D^m_s$ and $D^n_t$. With this interpretation in mind, we define the expected loss on test data as
\begin{equation}
    L := - \mathbb{E}_{\theta^*_t,\theta^*_s}\left[ \log Q(Z'_t|D^n_t, D^m_s) \right].
\end{equation}
From \cite{merhav1998universal}, the optimal predictor $b^*$ is given by the underlying target distribution as $b^*(Z'_t) = P_{\theta^*_t}(Z'_t|D^n_t) = P_{\theta^*_t}(Z'_t)$ under the Assumption~\ref{asp:para-dist}. Then the excess risk can be expressed as

\begin{align}
    \mathcal{R}_I &= \mathbb{E}_{\theta^*_t,\theta^*_s}\left[ \ell(b, Z'_t) |D^{n}_t, D^m_s \right] - \mathbb{E}_{\theta^*_t,\theta^*_s}\left[ \ell(b^*, Z'_t) |D^{n}_t, D^m_s \right] \\
    &= \mathbb{E}_{\theta^*_t,\theta^*_s}\left[  \log \frac{P_{\theta^*_t}(Z'_t)}{Q(Z'_t|D^n_t, D^m_s)}\right]. \label{eq:excessrisk}
\end{align}
We define $\Theta_s$ and $\Theta_t$ as random variables over $\Lambda$, which can be interpreted as a random guess of $\theta^*_s$ and $\theta^*_t$, and we choose some probability distribution $\omega$ over $\Theta_s$ and $\Theta_t$ with respect to Lebesgue measure as our prior knowledge on these parameters. The predictor $Q$ is then defined as
\begin{align}
    Q(Z'_t|D^n_t, D^m_s) &= \frac{\int P_{\theta_t}(D^n_t, Z'_t) P_{\theta_s}(D^m_s) \omega(\theta_t,\theta_s) d\theta_t d\theta_s }{\int P_{\theta_t}(D^n_t) P_{\theta^*_s}(D^m_s) \omega(\theta_t,\theta_s) d\theta_t d\theta_s  } \\
    &= \int  P_{\theta_t}(Z'_t) Q(\theta_t,\theta_s|D^m_s, D^n_s) d\theta_sd\theta_t.  \label{eq:excessrisk-mixture}
\end{align}
This choice is known as \emph{mixture strategy} \citep{merhav1998universal,xie2000asymptotic}. We assign a probability distribution $\omega$ over $\Theta_s$ and $\Theta_t$ w.r.t. the Lebesgue measure to represent our prior knowledge, and we update the posterior with the incoming data to approximate the underlying distributions. The Equation~(\ref{eq:excessrisk-mixture}) gives a natural interpretation of a two-stage prediction method on $Z_t'$.  In the first stage, the joint posterior $Q(\theta_s,\theta_t|D_s,D_t)$ gives the estimation of $\theta_s$ and $\theta_t$. In the second stage, the learned $\theta_t$ is applied for prediction in terms of the parametric distribution $P_{\theta_t}(Z'_t)$. One way to comprehend the mixture strategy is that we encode our prior knowledge over target and source domain distributions in terms of the prior distribution $\omega(\Theta_s, \Theta_t)$, and its induced conditional distribution $\omega(\Theta_t|\Theta_s)$ shows our belief over target parameters given the source parameters, e.g., how close $\Theta_t$ and $\Theta_s$ are. 
\begin{remark}
Different from many other transfer learning algorithms where the predictive hypothesis is learned via the empirical risk minimization (ERM) algorithm \citep{ben2010theory, zhang2012generalization, redko2017theoretical}, we learn the distribution parameters and make the prediction from a Bayesian approach. On the one hand, with the mixture strategy, we could encode the prior knowledge on the distribution parameters with the prior distribution $\omega$ and estimate the posterior from the data, which provides us with some insights on how to incorporate the source with the prediction in the target domain. Because the ERM algorithm does not take the data distribution into account, thus it is relatively difficult to see the usefulness of the source data. On the other hand, the mixture strategy is optimal under minimax settings \citep{merhav1998universal} and we later show that it can achieve the excess risk with the optimal rate $O(\frac{1}{n})$ under certain priors, while the bounds for ERM algorithm will usually involve the domain divergence term where it does not converge to zero even with sufficient data \citep{redko2020survey}.
\end{remark}
\noindent The following theorem gives an exact characterization of the excess risk of the predictor $b$ under the logarithmic loss.
\begin{theorem}[Excess Risk with Logarithmic Loss for ITL]\label{thm:excessrisk-log}
Under the logarithmic loss, let the predictor $b = Q(Z'_t|D^{m}_s, D^n_t) = \frac{\int P_{\theta^*_t}(D^n_t, Z'_t) P_{\theta^*_s}(D^m_s) \omega(\theta_t,\theta_s) d\theta_t d\theta_s }{\int P_{\theta^*_t}(D^n_t) P_{\theta^*_s}(D^m_s) \omega(\theta_t,\theta_s) d\theta_t d\theta_s  }$ with the prior distribution $\omega$ described in Equation~(\ref{eq:excessrisk-mixture}), the excess risk can be written as
\begin{equation}
    \mathcal{R}_I  = \mathbb{E}_{\theta^*_t,\theta^*_s}\left[ \frac{P_{\theta^*_t}(Z'_t)}{Q(Z'_t|D^{m}_s, D^n_t)} \right] = I(Z'_t; \Theta_t = \theta^*_t, \Theta_s = \theta^*_s|D^{m}_s, D^n_t).
\end{equation}
\end{theorem}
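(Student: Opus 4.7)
The plan is to rewrite the excess risk as an expected KL divergence and then identify that expression with the stated conditional mutual information via the Bayesian interpretation of the mixture predictor $Q$. First I would pin down the optimal predictor under logarithmic loss. For any predictive density $q$ and $Z \sim P_{\theta^*_t}$, Gibbs' inequality (i.e.\@ $\mathrm{KL}(P_{\theta^*_t}\|q)\ge 0$) gives $\mathbb{E}[-\log q(Z)] \ge \mathbb{E}[-\log P_{\theta^*_t}(Z)]$ with equality iff $q = P_{\theta^*_t}$. Combined with Assumption~\ref{asp:para-dist} (so that $Z'_t$ and $D^n_t$ are conditionally independent given $\theta^*_t$, hence $P_{\theta^*_t}(Z'_t \mid D^n_t) = P_{\theta^*_t}(Z'_t)$), this identifies the optimal $b^*$ as $P_{\theta^*_t}$ and yields
\begin{equation*}
\mathcal{R}_I \;=\; \mathbb{E}_{\theta^*_s,\theta^*_t}\!\left[\log\frac{P_{\theta^*_t}(Z'_t)}{Q(Z'_t\mid D^n_t, D^m_s)}\right].
\end{equation*}

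Next I would peel off the inner expectation over $Z'_t$ for fixed data. Since $Z'_t \sim P_{\theta^*_t}$ is independent of $(D^n_t, D^m_s)$ given $\theta^*_t$, this inner expectation is the KL divergence $\mathrm{KL}\bigl(P_{\theta^*_t}(\cdot)\,\|\,Q(\cdot\mid D^n_t,D^m_s)\bigr)$, and Fubini (justified by the integrability hypothesis on $\ell$ assumed just before Equation~\eqref{eq:excess-risk}) gives
\begin{equation*}
\mathcal{R}_I \;=\; \mathbb{E}_{\theta^*_s,\theta^*_t}\!\left[\mathrm{KL}\bigl(P_{\theta^*_t}(\cdot)\,\|\,Q(\cdot\mid D^n_t,D^m_s)\bigr)\right].
\end{equation*}

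Then I would match this with the CMI. Read $Q$ Bayesianly: under the joint law induced by the prior $\omega$ on $(\Theta_t,\Theta_s)$ together with the parametric data model in Assumption~\ref{asp:para-dist}, the conditional density of $Z'_t$ given $(\Theta_t,\Theta_s,D^n_t,D^m_s)$ is $P_{\Theta_t}(Z'_t)$, while the posterior predictive of $Z'_t$ given only $(D^n_t,D^m_s)$ is exactly the mixture predictor displayed in Equation~\eqref{eq:excessrisk-mixture}. Writing the conditional mutual information as the expectation of the log-ratio of these two conditionals and specializing to the slice $\Theta_t=\theta^*_t,\Theta_s=\theta^*_s$ reproduces the preceding expression verbatim, which gives the claimed equality.

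The main obstacle is conceptual rather than technical: one must keep two laws distinct throughout — the true data-generating law $P_{\theta^*_t,\theta^*_s}$ under which $\mathcal{R}_I$ is defined, and the Bayesian prior-induced joint law which is used only to interpret $Q$ as a posterior predictive and to recognize the resulting KL as a CMI evaluated at a specific parameter realization. Once this distinction is made explicit, no computation beyond Gibbs' inequality and Fubini is required.
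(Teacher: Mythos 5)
Your proposal is correct and follows essentially the same route as the paper: both reduce $\mathcal{R}_I$ to $\mathbb{E}_{\theta^*_s,\theta^*_t}\bigl[\log\frac{P_{\theta^*_t}(Z'_t)}{Q(Z'_t\mid D^n_t,D^m_s)}\bigr]$ and then identify that quantity with the conditional mutual information evaluated at $\Theta_t=\theta^*_t,\Theta_s=\theta^*_s$, using the i.i.d.\ structure of Assumption~\ref{asp:para-dist} and the Bayesian reading of $Q$ as the posterior predictive. The only cosmetic difference is that the paper obtains the log-ratio representation of the CMI via the chain rule $I(Z'_t;\Theta_t,\Theta_s\mid D^n_t,D^m_s)=I(\Theta_t,\Theta_s;Z'_t,D^n_t,D^m_s)-I(\Theta_t,\Theta_s;D^n_t,D^m_s)$ before restricting to the slice, whereas you invoke the conditional-density-ratio form of the CMI directly.
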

All proofs in this paper can be found in Section~\ref{Sec6}. A similar learning strategy can be used for more general loss functions.  Given a general bounded loss function $\ell$, we define the predictor $b$ to be
\begin{equation}
    b = \argmin_{b} \mathbb{E}_{Q(Z'_t, D^n_t,D^{m}_s)}\left[ \ell(b,Z'_t)|D^m_s,D^{n}_t \right], \label{eq:general_bk_instant}
\end{equation}
with the choice of the mixture strategy $Q(Z'_t, D^n_t,D^{m}_s) = \int P_{\theta_t,\theta_s}(Z'_t, D^n_t,D^{m}_s)\omega(\theta_t, \theta_s)d \theta_t d\theta_s$ for some prior $\omega$. The optimal predictor is then given by
\begin{equation}
b^* = \argmin_{b} \mathbb{E}_{P_{\theta^*_t}(D^n_t,D^m_s, Z'_t)}\left[ \ell(b,Z'_t)|D^m_s, D^{n}_t \right]. \label{eq:general_bkstar_instant}
\end{equation}
We have the following theorem for general bounded loss functions.
\begin{theorem}[Excess Risk with Bounded Loss for ITL]\label{thm:excessrisk-general}
Assume the loss function satisfies $|\ell(b,z) - \ell(b^*,z)| \leq M$ for any observation $z$ and any two predictors $b,b^*$. Then the excess risk induced by $b$ and $b^*$ in Equation~(\ref{eq:general_bk_instant}) and~(\ref{eq:general_bkstar_instant}) can be bounded as
\begin{equation}
    \mathcal{R}_I  \leq M\sqrt{2I(Z'_t;\Theta_t = \theta^*_t, \Theta_s = \theta^*_s|D^{m}_s, D^n_s)}. \label{ineq:bound_instanT}
\end{equation}
\end{theorem}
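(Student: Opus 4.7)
The plan is to reduce the bounded-loss excess risk to a statistical distance between the true predictive distribution $P_{\theta^*_t}(Z'_t)$ and the mixture predictive distribution $Q(Z'_t\mid D^m_s, D^n_t)$, and then invoke Pinsker's inequality to link that distance to the conditional mutual information already appearing in Theorem~\ref{thm:excessrisk-log}.

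First, I would work conditionally on $D^m_s, D^n_t$. Writing $f(z) := \ell(b,z) - \ell(b^*,z)$, the assumption gives $|f| \le M$. Since $b$ is defined as the minimizer of $\mathbb{E}_Q[\ell(b,Z'_t) \mid D^m_s, D^n_t]$, we have $\mathbb{E}_Q[f(Z'_t)\mid D^m_s, D^n_t] \le 0$, so
\begin{equation*}
\mathbb{E}_{P_{\theta^*_t}}\!\left[f(Z'_t)\mid D^m_s, D^n_t\right] \;\le\; \mathbb{E}_{P_{\theta^*_t}}\!\left[f(Z'_t)\mid D^m_s, D^n_t\right] - \mathbb{E}_{Q}\!\left[f(Z'_t)\mid D^m_s, D^n_t\right].
\end{equation*}
The right-hand side is the integral of $f$ against the signed measure $P_{\theta^*_t}(\cdot) - Q(\cdot\mid D^m_s, D^n_t)$, and by the dual characterization of total variation together with $|f|\le M$ it is bounded by $2M\cdot \mathrm{TV}\bigl(P_{\theta^*_t}(\cdot),\, Q(\cdot\mid D^m_s, D^n_t)\bigr)$.

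Next, I apply Pinsker's inequality to get
\begin{equation*}
\mathrm{TV}\bigl(P_{\theta^*_t}(\cdot),\, Q(\cdot\mid D^m_s, D^n_t)\bigr) \;\le\; \sqrt{\tfrac{1}{2}\, D_{\mathrm{KL}}\!\bigl(P_{\theta^*_t}(\cdot)\,\|\,Q(\cdot\mid D^m_s, D^n_t)\bigr)},
\end{equation*}
so the conditional excess risk is at most $M\sqrt{2\, D_{\mathrm{KL}}(P_{\theta^*_t}\|Q(\cdot\mid D^m_s, D^n_t))}$. Taking expectation over $D^m_s, D^n_t$ drawn from $P_{\theta^*_s}\otimes P_{\theta^*_t}$ and using Jensen's inequality on the concave function $\sqrt{\cdot}$ yields
\begin{equation*}
\mathcal{R}_I \;\le\; M\sqrt{2\, \mathbb{E}_{\theta^*_s,\theta^*_t}\!\bigl[D_{\mathrm{KL}}\!\bigl(P_{\theta^*_t}(\cdot)\,\|\,Q(\cdot\mid D^m_s, D^n_t)\bigr)\bigr]}.
\end{equation*}

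Finally, the expected KL divergence inside the square root is precisely the quantity identified in the proof of Theorem~\ref{thm:excessrisk-log} with $I(Z'_t;\Theta_t=\theta^*_t,\Theta_s=\theta^*_s\mid D^m_s, D^n_t)$, since expanding the log ratio $\log(P_{\theta^*_t}(Z'_t)/Q(Z'_t\mid D^m_s, D^n_t))$ under $P_{\theta^*_s,\theta^*_t}$ gives exactly the same expression. Substituting produces the desired inequality~(\ref{ineq:bound_instanT}). I do not anticipate a real obstacle here; the only subtlety is making sure the two uses of ``conditional'' line up (both the loss optimality of $b$ and the KL divergence are taken pointwise in $D^m_s, D^n_t$ before the outer expectation), and that Jensen's inequality is applied in the right direction so that the square root ends up outside rather than inside the expectation.
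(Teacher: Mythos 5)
Your proposal is correct and follows essentially the same route as the paper's own proof: use the $Q$-optimality of $b$ to subtract a nonpositive term, bound the resulting integral against the signed measure $P_{\theta^*_t}-Q$ by the total variation, apply Pinsker's inequality, pull the outer expectation inside the square root via Jensen, and identify the expected conditional KL divergence with the CMI established in Theorem~\ref{thm:excessrisk-log}. The only difference is cosmetic: you state the total-variation duality step explicitly, whereas the paper writes it somewhat loosely without absolute values.
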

The above theorems imply that under both logarithmic loss and other bounded loss functions, with a certain prior $\omega$, the excess risk induced by the mixture strategy is captured by the conditional mutual information between the sample $Z'_t$ and $\Theta_t, \Theta_s$ that are evaluated at $\theta^*_t$ and $\theta^*_s$ given the source and target data. However, the expressions involving CMI is not very informative, in the sense that it does not clearly show the effect of source data in transfer learning. Our analysis in asymptotic estimation will provide insight into the usefulness of source data. We will give the asymptotic estimation of this quantity later.

\subsubsection*{Online Transfer Learning}
Techniques for instantaneous transfer learning can be extended to handle the online transfer learning problem. We first examine the expected regret under the \emph{logarithmic loss}. Assume we have $m$ source samples, at each time $k$, we may now view the predictor as a conditional probability distribution $b_k(z^{(k)}_t) = Q(z^{(k)}_t|D^{k-1}_t, D^{m}_s)$ conditioned on both source and target data. Using the same argument as in ITL, the optimal predictor $b^*_k$ is naturally given by the true target distribution over $z^{(k)}_t$ as $b^*_k(z^{(k)}_t) = P_{\theta^*_t}(z^{(k)}_t)$. Then the expected regret till time $n$ can be written explicitly as
\begin{align}
   \mathcal{R}_O &= \mathbb{E}_{\theta^*_s,\theta^*_{t}}\left[  \log \frac{1}{Q(D^n_t| D^{m}_s)}-  \log \frac{1}{P_{\theta^*_t}(D^n_t)} \right].  \label{eq:logloss}
\end{align}
The effect of source data is reflected in the conditional distribution $Q(D^n_t| D^{m}_s)$.  In particular, we choose the predictor $Q(D^n_t|D^m_s)$ with the mixture strategy as
\begin{align}
  Q(D^n_t|D^m_s) &= \frac{\int P_{\theta_t,\theta_s}(D^n_t, D^m_s)\omega(\theta_t, \theta_s)d \theta_t d\theta_s}{\int P_{\theta_s}(D^m_s)\omega(\theta_s)d\theta_s} \nonumber \\
  &= \frac{\int P_{\theta_t}(D^n_t)\omega(\theta_t|\theta_s) P_{\theta_s}(D^m_s)\omega(\theta_s)d \theta_t d\theta_s}{Q(D^m_s)} \nonumber \\
  &= \int P_{\theta_t}(D^n_t)\omega(\theta_t|\theta_s) d\theta_t \frac{ P_{\theta_s}(D^m_s)\omega(\theta_s)} {Q(D^m_s)} d\theta_s \nonumber \\
  &= \int \int P_{\theta_t}(D^n_t)\omega(\theta_t|\theta_s)  d\theta_t Q(\theta_s|D^m_s) d\theta_s, \label{eq:mixture-online}
\end{align}
where $\omega(\theta_s)$ and $\omega(\theta_t|\theta_s)$ are induced by the joint distribution $\omega(\theta_s, \theta_t)$. From Eq~(\ref{eq:mixture-online}), the mixture strategy quantitatively explains how transfer learning is implemented via the posterior updates of $\theta_t$ sequentially. Intuitively speaking, the posterior $Q(\theta_s|D^m_s)$ firstly gives an estimate of $\theta^*_s$ from the source data, then the conditional prior $\omega(\theta_t|\theta_s)$ reflects our belief upon $\theta^*_t$ given $\theta_s$ estimated from source data. With the choice of $Q(D^n_t|D^m_s)$, the expected regret can be explicitly characterized in the following theorem.
\begin{theorem} [Expected Regret with Logarithmic Loss for OTL]\label{thm:expreg-log}
With the mixture strategy $Q(D^n_t|D^m_s)$ in (\ref{eq:mixture-online}), the expected regret in (\ref{eq:logloss}) can be written as
\begin{align}
   \mathcal{R}_O = \mathbb{E}_{\theta^*_s,\theta^*_t}\left[ \log\frac{P_{\theta^*_t}(D^n_t)}{Q(D^n_t
   |D^m_s)} \right] = I(D^n_t;\Theta_t = \theta^*_t, \Theta_s = \theta^*_s|D^{m}_s), \label{eq:cmi} 
\end{align}
where $I(D^n_t;\Theta_t = \theta^*_t, \Theta_s = \theta^*_s|D^{m}_s)$ denotes the conditional mutual information $I(D^n_t;\Theta_t, \Theta_s | D^{m}_s)$ evaluated at $\Theta_t = \theta^*_t, \Theta_s = \theta^*_s$.
\end{theorem}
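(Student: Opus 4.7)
The plan is to reduce the expected regret to a single log-likelihood ratio via the chain rule for sequential log-loss, and then to identify that ratio with the stated conditional mutual information. The argument has two short steps; the only subtlety is giving precise meaning to the pointwise notation $I(\cdot;\Theta_t=\theta_t^*,\Theta_s=\theta_s^*\mid\cdot)$, since $\omega$ is absolutely continuous and assigns zero mass to singletons in $\Lambda$.

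First I would unpack the sum defining $\mathcal{R}_O$ in Equation~(\ref{eq:regret}). Under the log loss the per-round predictor is the conditional density $b_k(z)=Q(z\mid D_t^{k-1},D_s^m)$ and the optimal predictor is $b_k^*(z)=P_{\theta_t^*}(z)$, so the chain rule for conditional densities gives
\begin{equation*}
\sum_{k=1}^n \ell(b_k,Z_t^{(k)})=-\sum_{k=1}^n\log Q(Z_t^{(k)}\mid D_t^{k-1},D_s^m)=-\log Q(D_t^n\mid D_s^m),
\end{equation*}
while the i.i.d.\ factorization of Assumption~\ref{asp:para-dist} yields $\sum_k \ell(b_k^*,Z_t^{(k)})=-\log P_{\theta_t^*}(D_t^n)$. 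Subtracting and taking the expectation under $P_{\theta_s^*,\theta_t^*}$ produces the middle expression in the theorem.

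Second, I would interpret this expectation as a conditional mutual information. Under the mixture joint $\omega(\theta_t,\theta_s)P_{\theta_t}(D_t^n)P_{\theta_s}(D_s^m)$ induced by the prior, the standard CMI can be written as
\begin{equation*}
I(D_t^n;\Theta_t,\Theta_s\mid D_s^m)=\mathbb{E}_{\Theta_t,\Theta_s,D_s^m}\!\left[\log\frac{P_{\Theta_t}(D_t^n)}{Q(D_t^n\mid D_s^m)}\right],
\end{equation*}
where the numerator uses the conditional independence $D_t^n\perp D_s^m\mid\Theta_t$ from Assumption~\ref{asp:para-dist}. Evaluating pointwise at $\Theta_t=\theta_t^*,\Theta_s=\theta_s^*$ amounts to the disintegration that replaces the outer $(\Theta_t,\Theta_s)$-integral by the conditional distribution on $(D_t^n,D_s^m)$ at these parameter values, which by Assumption~\ref{asp:para-dist} is $P_{\theta_t^*}(D_t^n)P_{\theta_s^*}(D_s^m)$. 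The result is exactly $\mathbb{E}_{\theta_s^*,\theta_t^*}[\log P_{\theta_t^*}(D_t^n)/Q(D_t^n\mid D_s^m)]$, which matches the middle expression and closes the proof.

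The main obstacle is purely notational: pinning down what ``CMI evaluated at $(\theta_t^*,\theta_s^*)$'' means when $\omega$ is a density on $\Lambda$. Once the disintegration convention is agreed upon, no further computation is needed; the chain rule and the independence structure of Assumption~\ref{asp:para-dist} handle everything else.
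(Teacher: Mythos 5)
Your proposal is correct and follows essentially the same route as the paper: telescope the per-round log losses into $-\log Q(D^n_t|D^m_s)$ and $-\log P_{\theta^*_t}(D^n_t)$, then recognize the expected log-ratio as the integrand of the conditional mutual information (the paper writes $I(D^n_t;\Theta_t,\Theta_s|D^m_s)=I(\Theta_t,\Theta_s;D^n_t,D^m_s)-I(\Theta_s;D^m_s)$ as a difference of KL divergences, which collapses to exactly your conditional-density form via the same independence from Assumption~\ref{asp:para-dist}). Your explicit handling of what ``evaluated at $(\theta^*_t,\theta^*_s)$'' means matches the paper's notion of the mutual information density at those parameter values.
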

\noindent For general loss functions, we define the predictor $b_k$ at time $k$ to be
\begin{equation}
    b_k = \argmin_{b} \mathbb{E}_{Q(D^k_t,D^{m}_s)}\left[ \ell(b,z^{(k)}_t)|D^m_s,D^{k-1}_t \right], \label{eq:general_bk}
\end{equation}
with the choice of the mixture strategy $Q(D^k_t,D^{m}_s) = \int P_{\theta_t,\theta_s}(D^k_t, D^m_s)\omega(\theta_t, \theta_s)d \theta_t d\theta_s$ for some prior $\omega$. The optimal predictor is then given by
\begin{equation}
b_k^* = \argmin_{b} \mathbb{E}_{P_{\theta^*_t,\theta^*_s}(D^k_t, D^m_s)}\left[ \ell(b,z^{(k)}_t)| D^m_s, D^{k-1}_t\right]. \label{eq:general_bkstar}
\end{equation}
As a consequence, we arrive at the following theorem.
\begin{theorem}[Expected Regret with Bounded Loss for OTL]\label{thm:expreg-generalloss}
Assume the loss function satisfies $|\ell(b,z) - \ell(b^*,z)| \leq M$ for any observation $z$ and the predictors $b,b^*$. Then the true expected regret induced by $b_k$ and $b^*_k$ in Equation~(\ref{eq:general_bk}) and~(\ref{eq:general_bkstar}) can be bounded as
\begin{equation}
    \mathcal{R}_O \leq M\sqrt{2n I(D^n_t;\Theta_t = \theta^*_t, \Theta_s = \theta^*_s|D^{m}_s)}. \label{ineq:bound}
\end{equation}
\end{theorem}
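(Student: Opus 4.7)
The plan is to reduce the bounded-loss regret to a KL-divergence quantity on a per-step basis, and then to aggregate these single-step bounds using the chain rule for KL divergence together with Theorem~\ref{thm:expreg-log}. I would not try to re-prove anything about the mixture strategy from scratch; instead I would exploit the identity in Theorem~\ref{thm:expreg-log} at the end as a black box.

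Step one, the per-step bound. At round $k$, let $P_k$ denote the true predictive law $P_{\theta^*_t}(\cdot)$ on $Z^{(k)}_t$ (which by i.i.d.\ does not depend on $D^{k-1}_t$) and let $Q_k(\cdot)=Q(\cdot\mid D^m_s,D^{k-1}_t)$ denote the mixture predictive distribution. Because $b_k$ is the $Q_k$-minimizer of the expected loss in~(\ref{eq:general_bk}) and $b_k^*$ is the $P_k$-minimizer in~(\ref{eq:general_bkstar}), adding the nonpositive quantity $\mathbb{E}_{Q_k}[\ell(b_k^*,Z)-\ell(b_k,Z)]$ gives
\[
\mathbb{E}_{P_k}\!\bigl[\ell(b_k,Z^{(k)}_t)-\ell(b_k^*,Z^{(k)}_t)\,\big|\,D^m_s,D^{k-1}_t\bigr]\le (\mathbb{E}_{P_k}-\mathbb{E}_{Q_k})\bigl[\ell(b_k,Z)-\ell(b_k^*,Z)\bigr].
\]
Since $|\ell(b_k,\cdot)-\ell(b_k^*,\cdot)|\le M$, the right-hand side is bounded by $2M\cdot\mathrm{TV}(P_k,Q_k)$, and Pinsker's inequality then yields the single-step bound $M\sqrt{2\,D_{\mathrm{KL}}(P_k\|Q_k)}$.

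Step two, aggregation. Taking expectation over $(D^m_s,D^{k-1}_t)$, using Jensen's inequality (concavity of $\sqrt{\cdot}$) to pull the expectation inside the square root, then summing over $k=1,\dots,n$ and applying Cauchy--Schwarz, I obtain
\[
\mathcal{R}_O \le M\sqrt{2n\sum_{k=1}^n \mathbb{E}_{\theta^*_s,\theta^*_t}\!\bigl[D_{\mathrm{KL}}\bigl(P_{\theta^*_t}(Z^{(k)}_t)\,\big\|\,Q(Z^{(k)}_t\mid D^m_s,D^{k-1}_t)\bigr)\bigr]}.
\]
Finally, step three, the chain rule and identification with CMI. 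Under the i.i.d.\ assumption, $P_{\theta^*_t}(Z^{(k)}_t\mid D^{k-1}_t)=P_{\theta^*_t}(Z^{(k)}_t)$, so the chain rule for KL divergence collapses the sum of conditional KLs in the radicand into $\mathbb{E}_{\theta^*_s,\theta^*_t}\bigl[\log P_{\theta^*_t}(D^n_t)/Q(D^n_t\mid D^m_s)\bigr]$, which Theorem~\ref{thm:expreg-log} identifies with $I(D^n_t;\Theta_t=\theta^*_t,\Theta_s=\theta^*_s\mid D^m_s)$. Substituting gives the claimed bound.

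The principal obstacle I expect is Step three: one must be careful that the chain-rule terms $D_{\mathrm{KL}}(P_{\theta^*_t}(Z^{(k)}_t\mid D^{k-1}_t)\,\|\,Q(Z^{(k)}_t\mid D^m_s,D^{k-1}_t))$ after expectation over $D^{k-1}_t\sim P_{\theta^*_t}$ and $D^m_s\sim P_{\theta^*_s}$ really match the per-step quantities produced by Step one (the $D^m_s$-expectation matters because $Q_k$ depends on the source through the mixture), and that the reduction to the marginal $P_{\theta^*_t}(Z^{(k)}_t)$ in Step one relies on i.i.d.\ sampling. The remaining ingredients --- Pinsker, Jensen and Cauchy--Schwarz --- are standard, and the asymmetric comparison between the $Q$-optimal action $b_k$ and the $P$-optimal action $b_k^*$ is handled cleanly by the add-and-subtract trick in Step one.
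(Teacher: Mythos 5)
Your proposal is correct and follows essentially the same route as the paper's proof: the per-step add-and-subtract comparison between the $Q_k$-optimal $b_k$ and the $P_k$-optimal $b_k^*$, the bounded-loss/total-variation step, Pinsker, Jensen to move the expectation inside the square root, Cauchy--Schwarz (which the paper labels as Jensen) to aggregate over $k$, and finally the chain rule for KL divergence together with the identity of Theorem~\ref{thm:expreg-log} to identify the radicand with the conditional mutual information. The subtlety you flag in Step three about matching the conditional KL terms after expectation over $D^{k-1}_t$ and $D^m_s$ is exactly what the paper's steps $(e)$--$(g)$ handle.
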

We can see the analogy from the above theorem that under both the logarithmic loss and bounded loss, the expected regret induced by the mixture strategy is also characterized by CMI evaluated at $\theta^*_t$ and $\theta^*_s$. The expected regret can be thought as the accumulated excess risk from sequential prediction, where each single prediction is made from the posterior of the target parameters. Nevertheless, it is difficult to directly spectate the effects of the prior and sample sizes. 

\subsubsection*{Time-variant Transfer Learning}
The treatment of time-variant transfer learning is similar. Under the logarithmic loss, we rewrite the objective function in Equation~(\ref{eq:time-variant-regret}) as
\begin{align}
    \mathcal{R}_{TV} &= \sum_{i=1}^{l}\mathbb{E}_{\theta^*_s,\theta^*_{t,i},\theta^*_{t,i-1}} \left[\sum_{k=1}^{n_i} \ell(b_{k,i}, Z^{(k)}_{t,i}) - \sum_{k=1}^{n_i} \ell(b_{i,k}^*, Z^{(k)}_{t,i}) \right] \\
    &= \sum_{i=1}^{l}\mathbb{E}_{\theta^*_s,\theta^*_{t,i},\theta^*_{t,i-1}} \left[ \frac{P_{\theta^*_{t,i}}(D^{n_i}_{t,i})}{Q(D^{n_i}_{t,i}|D^m_s, D^{n_{i-1}}_{t,i-1})}\right],
\end{align}
\noindent  We will use the mixture strategy by defining the random variable $\Theta_s$, $\Theta_{t,i}$ and $\Theta_{t,i-1}$ over $\Lambda$ such that with some prior $\omega$, we can formulate the conditional distribution as
\begin{align}
   Q(D^{n_i}_{t,i}|D^m_s, D^{n_{i-1}}_{t,i-1}) &= \frac{Q(D^m_s, D^{n_{i-1}}_{t,i-1}, D^{n_i}_{t,i})}{Q(D^{n_{i-1}}_{t,i-1}, D^{m}_s)} \\
    &= \frac{\int P_{\theta_s, \theta_{t,i-1}, \theta_{t,i}}(D^m_s , D^{n_{i-1}}_{t,i-1}, D^{n_i}_{t,i} )\omega(\theta_s, \theta_{t,i-1}, \theta_{t,i}) d\theta_s d\theta_{t,i-1} d\theta_{t,i} }{\int P_{\theta_s, \theta_{t,i-1}}(D^m_s, D^{n_{i-1}}_{t,i-1})\omega(\theta_s, \theta_{t,i-1})d\theta_s d\theta_{t,i-1}}  \\
    &= \int P_{\theta_{t,i}}(D^{n_i}_{t,i})\omega(\theta_{t,i}|\theta_s, \theta_{t,i-1} )  d\theta_{t,i} Q(\theta_s,\theta_{t,i-1}|D^m_s, D^{n_{i-1}}_{t,i-1}) d\theta_s d\theta_{t,i-1}.
\end{align}
The above prediction distribution suggested that the posterior $Q(\theta_s,\theta_{t,i-1}|D^m_s, D^{n_{i-1}}_{t,i-1})$ firstly gives an estimate of the source parameter and previous target parameter with marginal $\omega(\theta_s, \theta_{t,i-1})$, then the knowledge transfer is reflected on the conditional prior $\omega(\theta_{t,i}| \theta_s, \theta_{t,i-1})$ that may result in a good approximation of $\theta^*_{t,i}$. The conditional prior $\omega(\theta_{t,i}| \theta_s, \theta_{t,i-1})$ can be interpreted as our prior knowledge of the current target state given the previous state and auxiliary source parameters. 
\begin{remark}
At each episode $i$, we view the sequential predictors as the conditional distribution $Q(D^{n_i}_{t,i}|D^m_s, D^{n_{i-1}}_{t,i-1})$ since we only use the previous target data from episode $i-1$, and it should be recognized that this choice is not necessarily the optimal choice. The reasons that we discard earlier target data are two-fold. On the one hand, if $i$ becomes large, the posterior will be hard to compute using all earlier target data and the mixture strategy will become very complicated and inefficient. On the other hand, as the relationship between the target data at episode $i$ and the target data earlier than episode $i-1$ is not explicitly recognized, if the prior distribution is chosen improperly without any prior knowledge, introducing such data may possibly result in worse performance. 

\end{remark}
\noindent By this specific strategy, we have the expected regret in the following theorem.
\begin{theorem}[Expected Regret with Logarithmic Loss for TVTL]\label{coro:log-loss-time-variant}
Under the logarithmic loss, the expected regret in (\ref{eq:time-variant-regret}) can be written as
\begin{align}
    \mathcal{R}_{TV}  = \sum_{i=1}^{l} I(D^{n_i}_{t,i};\Theta_{t,i} = \theta^*_{t,i},\Theta_{t,i-1} = \theta^*_{t,i-1}, \Theta_s = \theta^*_s|D^m_s, D^{n_{i-1}}_{t,i-1}).
\end{align}
\end{theorem}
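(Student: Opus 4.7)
The proof plan is to reduce the time-variant case to a per-episode application of the identity already used for OTL (Theorem~\ref{thm:expreg-log}), then sum over episodes.

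First, I would expand the per-episode log-loss regret. At each time $k$ within episode $i$, the optimal predictor under log loss is $b^*_{k,i}(\cdot)=P_{\theta^*_{t,i}}(\cdot)$, while the predictor $b_{k,i}$ produced by the mixture strategy is the conditional density $Q(Z^{(k)}_{t,i}\mid D^m_s, D^{n_{i-1}}_{t,i-1}, D^{k-1}_{t,i})$. Using the chain rule of probability applied to both $P_{\theta^*_{t,i}}$ (which under Assumption~\ref{asp:para-dist-timevarying} factorises over the $n_i$ target samples in episode $i$) and to $Q$, the sum
\[
\sum_{k=1}^{n_i}\ell(b_{k,i},Z^{(k)}_{t,i})-\sum_{k=1}^{n_i}\ell(b^*_{k,i},Z^{(k)}_{t,i})
\]
telescopes to $\log\frac{P_{\theta^*_{t,i}}(D^{n_i}_{t,i})}{Q(D^{n_i}_{t,i}\mid D^m_s, D^{n_{i-1}}_{t,i-1})}$, exactly the form that appears in the restated regret in the excerpt.

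Second, I would identify the expectation of this log-ratio with a conditional mutual information evaluated at the true parameters. Taking $\mathbb{E}_{\theta^*_s,\theta^*_{t,i},\theta^*_{t,i-1}}$ gives
\[
\mathbb{E}_{D^m_s,D^{n_{i-1}}_{t,i-1}}\!\Bigl[\, D\bigl(P_{\theta^*_{t,i}}(D^{n_i}_{t,i})\,\big\|\, Q(D^{n_i}_{t,i}\mid D^m_s, D^{n_{i-1}}_{t,i-1})\bigr)\Bigr],
\]
where the inner KL divergence is between the true product distribution of the $n_i$ new target samples and the mixture distribution obtained by integrating $P_{\theta_{t,i}}$ against the posterior $Q(\theta_{t,i}\mid D^m_s, D^{n_{i-1}}_{t,i-1})$ induced by $\omega(\theta_s,\theta_{t,i-1},\theta_{t,i})$. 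Using the standing convention (established in Theorem~\ref{thm:expreg-log}) that $I(X;\Theta=\theta^*\mid Y)$ denotes the KL divergence between $P_{X\mid \Theta=\theta^*,Y}$ and the mixture $Q(X\mid Y)=\int P_{X\mid\Theta=\theta,Y}\,\omega(\theta\mid Y)\,d\theta$, averaged over $Y$, this quantity is precisely
\[
I\bigl(D^{n_i}_{t,i};\,\Theta_{t,i}=\theta^*_{t,i},\Theta_{t,i-1}=\theta^*_{t,i-1},\Theta_s=\theta^*_s\,\big|\,D^m_s,D^{n_{i-1}}_{t,i-1}\bigr).
\]
Summing over $i=1,\dots,l$ yields the claimed formula.

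The main obstacle is purely bookkeeping: one must check that the mixture distribution displayed above the theorem really does equal $\int P_{\theta_{t,i}}(D^{n_i}_{t,i})\,Q(\theta_{t,i}\mid D^m_s,D^{n_{i-1}}_{t,i-1})\,d\theta_{t,i}$ so that the KL divergence matches the CMI convention, and that the conditioning on the trivial episode $i=0$ (with $n_0=0$, so $D^{n_0}_{t,0}$ is empty and $\theta^*_{t,0}$ arbitrary) degenerates correctly. Everything else is a direct translation of the OTL argument applied episode by episode, so no new analytical difficulty arises beyond Theorem~\ref{thm:expreg-log}.
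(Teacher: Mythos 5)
Your proposal is correct and follows essentially the same route as the paper: the per-episode telescoping of the log-loss into $\log\bigl(P_{\theta^*_{t,i}}(D^{n_i}_{t,i})/Q(D^{n_i}_{t,i}\mid D^m_s, D^{n_{i-1}}_{t,i-1})\bigr)$ is done in the paper's main text just before the theorem, and the proof section identifies its expectation with the conditional mutual information density at the true parameters via the chain rule $I(D^{n_i}_{t,i};\Theta\mid D^m_s,D^{n_{i-1}}_{t,i-1}) = I(\Theta;D^{n_i}_{t,i},D^{n_{i-1}}_{t,i-1},D^m_s) - I(\Theta;D^{n_{i-1}}_{t,i-1},D^m_s)$ together with Assumption~\ref{asp:para-dist-timevarying}, exactly as you describe. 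Summing over episodes then gives the claim, so no new content is needed.
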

\noindent Similarly, we can easily generalize the logarithmic loss to other bounded loss $\ell$. Given any loss function $\ell$, we define the predictor $b_{k,i}$ at episode $i$ to be
\begin{equation}
    b_{k,i} = \argmin_{b} \mathbb{E}_{Q(D^{k}_{t,i},D^{n_{i-1}}_{t,i-1}, D^{m}_s)}\left[ \ell(b,z^{(k)}_{t,i})|D^m_s,D^{n_{i-1}}_{t,i-1}, D^{k-1}_{t,i} \right], \label{eq:general_bk_tv}
\end{equation}
with the choice of the mixture strategy
\begin{equation}
Q(D^{k}_{t,i},D^{n_{i-1}}_{t,i-1}, D^{m}_s) = \int P_{\theta_{t,i}, \theta_{t,i-1},\theta_s}(D^{k}_{t,i},D^{n_{i-1}}_{t,i-1}, D^{m}_s)\omega(\theta_{t,i}, \theta_{t,i-1} \theta_s)d \theta_{t,i} d\theta_{t,i-1} d\theta_s
\end{equation}
for some prior $\omega$. The optimal predictor is then given by
\begin{equation}
b_{k,i}^* = \argmin_{b} \mathbb{E}_{P_{\theta^*_{t,i},\theta^*_{t,i-1}, \theta^*_s }(D^{k}_{t,i},D^{n_{i-1}}_{t,i-1}, D^{m}_s)}\left[ \ell(b,z^{(k)}_{t,i})|D^m_s,D^{n_{i-1}}_{t,i-1}, D^{k-1}_{t,i}  \right]. \label{eq:general_bkstar_tv}
\end{equation}
Then following Thereom~\ref{thm:excessrisk-general} and \ref{thm:expreg-generalloss}, we arrive at the theorem that describes the expected regret for time-variant transfer learning.
\begin{theorem}[Expected Regret with Bounded Loss for TVTL]\label{coro:general-loss-time-variant}
Assume the loss function satisfies $|\ell(b,z) - \ell(b^*,z)| \leq M$ for any observation $z$ and the predictors $b,b^*$. Then the true expected regret induced by $b_k$ and $b^*_k$ in Equation~(\ref{eq:general_bk_tv}) and~(\ref{eq:general_bkstar_tv}) can be bounded as
   \begin{align}
    \mathcal{R}_{TV}  & \leq M \sqrt{2 l \sum_{i=1}^{l} n_{i} I(D^{n_i}_{t,i};\Theta_{t,i} = \theta^*_{t,i},\Theta_{t,i-1} = \theta^*_{t,i-1}, \Theta_s = \theta^*_s|D^m_s, D^{n_{i-1}}_{t,i-1}) }.
\end{align}
\end{theorem}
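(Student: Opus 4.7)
The plan is to reduce the time-variant regret to a sum of per-episode regrets, each of which can be handled by an argument parallel to the proof of Theorem~\ref{thm:expreg-generalloss} for the online setup, and then to aggregate the per-episode bounds via the Cauchy--Schwarz inequality.

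First, I would decompose
$$\mathcal{R}_{TV} = \sum_{i=1}^{l} \mathcal{R}_{TV,i}, \qquad \mathcal{R}_{TV,i} := \mathbb{E}_{\theta^*_s,\theta^*_{t,i},\theta^*_{t,i-1}} \left[\sum_{k=1}^{n_i} \ell(b_{k,i}, Z^{(k)}_{t,i}) - \sum_{k=1}^{n_i} \ell(b_{k,i}^*, Z^{(k)}_{t,i}) \right],$$
and treat each episode $i$ separately. Conditioned on $D^m_s$ and $D^{n_{i-1}}_{t,i-1}$, episode $i$ is structurally an online transfer learning instance whose ``augmented source'' is the pair $(D^m_s, D^{n_{i-1}}_{t,i-1})$ and whose sequentially arriving target data are $Z^{(1)}_{t,i},\ldots,Z^{(n_i)}_{t,i}$. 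Applying Theorem~\ref{thm:expreg-generalloss} to this sub-problem yields
$$\mathcal{R}_{TV,i} \leq M \sqrt{2 n_i \, I(D^{n_i}_{t,i}; \Theta_{t,i}=\theta^*_{t,i}, \Theta_{t,i-1}=\theta^*_{t,i-1}, \Theta_s=\theta^*_s \mid D^m_s, D^{n_{i-1}}_{t,i-1})}.$$
The internal mechanics of this step mirror the proof of Theorem~\ref{thm:expreg-generalloss}: pointwise apply Pinsker's inequality to relate $\ell(b_{k,i},\cdot)-\ell(b^*_{k,i},\cdot)$ to the total variation between the mixture predictive distribution $Q(\,\cdot\mid D^m_s, D^{n_{i-1}}_{t,i-1}, D^{k-1}_{t,i})$ and the true distribution $P_{\theta^*_{t,i}}$, upper bound that total variation by $\sqrt{(1/2)\,\mathrm{KL}}$, and telescope the chain rule of KL divergence along the $n_i$ coordinates within episode $i$ to recover the claimed conditional mutual information.

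Second, I would aggregate across episodes via the Cauchy--Schwarz inequality in the form $\sum_{i=1}^{l} \sqrt{a_i} \leq \sqrt{l \sum_{i=1}^{l} a_i}$, taken with $a_i = 2 n_i \, I(D^{n_i}_{t,i}; \Theta_{t,i}=\theta^*_{t,i}, \Theta_{t,i-1}=\theta^*_{t,i-1}, \Theta_s=\theta^*_s \mid D^m_s, D^{n_{i-1}}_{t,i-1})$. Pulling $M$ outside the square root then produces exactly the bound in the statement.

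The main subtlety, and where I would spend most of the effort, is justifying that the per-episode invocation of Theorem~\ref{thm:expreg-generalloss} is legitimate for the particular mixture predictor used in the time-variant construction. Concretely, one must verify that the posterior predictive distribution over $D^{n_i}_{t,i}$ given $(D^m_s, D^{n_{i-1}}_{t,i-1})$ retains a mixture form: the effective prior on $\Theta_{t,i}$ is $\omega(\theta_{t,i}\mid\theta_s,\theta_{t,i-1})$ averaged against the posterior $Q(\theta_s,\theta_{t,i-1}\mid D^m_s, D^{n_{i-1}}_{t,i-1})$ derived from the joint mixture, so that the Pinsker-plus-KL-chain-rule calculation applies verbatim with $(D^m_s, D^{n_{i-1}}_{t,i-1})$ playing the role of the conditioning data. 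Once this structural check is in place, the per-episode bound follows by an essentially mechanical repetition of the OTL argument and the Cauchy--Schwarz step closes the proof.
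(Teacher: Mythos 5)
Your proposal is correct and follows essentially the same route as the paper: a per-episode application of the bounded-loss OTL argument (Pinsker plus the KL chain rule, with $(D^m_s, D^{n_{i-1}}_{t,i-1})$ as the conditioning data) to get $\mathcal{R}_{TV,i} \leq M\sqrt{2n_i I(\cdot)}$, followed by Cauchy--Schwarz in the form $\sum_i \sqrt{a_i} \leq \sqrt{l\sum_i a_i}$. The structural check you flag about the mixture form of the predictive distribution is exactly what the paper's construction of $Q(D^{n_i}_{t,i}\mid D^m_s, D^{n_{i-1}}_{t,i-1})$ provides, so no gap remains.
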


In this section, we characterize the excess risk and expected regrets for instantaneous and online transfer learning scenarios using information-theoretic quantities from a Bayesian perspective. Given bounded or logarithmic loss functions, the performance is captured by the conditional mutual information between the parameters and the test data. The bound implicitly embeds our prior knowledge over target and source parameters in the prior distribution $\omega$. However, the bounds in their current forms are less informative as they do not show what are the effects of the prior knowledge $\omega$ and sample sizes of the source and target domains. To this end, we will give an asymptotic approximation for conditional mutual information in the next subsection.

\subsection{Asymptotic Approximation for Conditional Mutual Information}
\subsubsection*{Instantaneous Transfer Learning}
\noindent To investigate the effect of sample size and prior, first we make the regular assumptions on parametric conditions \citep{clarke1999asymptotic, clarke1990information} and define the proper prior.
\begin{assumption}[Parametric Condition] \label{asp:para-trans}
We make the following assumptions for the source and target distributions:
\begin{itemize}
    \item The source and target distributions $P_{\theta^*_s}(Z_s)$ and $P_{\theta^*_t}(Z_t)$ are twice continuously differentiable at $\theta^*_s$ and $\theta^*_t$ for almost every $Z_s$ and $Z_t$.
    \item  For any $\theta_t, \theta_s \in \Lambda $, there exist $\delta_s, \delta_t > 0$ satisfying, 
\begin{align}
    \mathbb{E}_{\theta_t} \left[ \sup _{\left\|\theta_t-\theta^*_t\right\| \leq \delta}\left|\frac{\partial}{\partial \theta_{t,i}} \log P_{\theta_t}\left(Z_t \right)\right| \right] &< \infty \\
    \mathbb{E}_{\theta_s} \left[ \sup _{\left\|\theta_s-\theta^*_s\right\| \leq \delta}\left|\frac{\partial}{\partial \theta_{s,i}} \log P_{\theta_s}\left(Z_s \right)\right| \right] &< \infty  
\end{align}
for $i = 1,\cdots, d$. In addition, we assume,
\begin{align}
\mathbb{E}_{\theta_t} \left[ \sup _{\left \|\theta_t-\theta^*_t\right\| \leq \delta}\left|\frac{\partial^{2}}{\partial \theta_{i} \partial \theta_{j}} \log P_{\theta_t}\left(Z_t \right)\right|^{2}\right] <\infty \\
\mathbb{E}_{\theta_s} \left[ \sup _{\left\|\theta_s-\theta^*_s\right\| \leq \delta}\left|\frac{\partial^{2}}{\partial \theta_{i} \partial \theta_{j}} \log P_{\theta_s}\left(Z_s \right)\right|^{2}\right] <\infty 
\end{align}
for any $i,j = 1,\cdots, d$. 
\item Let $D_{\textup{KL}}(P_{\theta_s^*}\|P_{\theta_s})$ and  $D_{\textup{KL}}(P_{\theta_t^*}\|P_{\theta_t})$ denote the information (KL) divergence for source distribution $P_{\theta_s}$ and target distribution $P_{\theta_t}$. We assume they are twice continuously differentiable at $\theta^*_s$ and $\theta^*_t$, with the Hessian $J_s(\theta^*_s)$ and $J_t(\theta^*_t)$ positive definite, which are defined by:
\begin{align*}
J_s\left(\theta_s\right) &=\left[\frac{\partial^{2}}{\partial \theta_{i} \partial \theta_{j}} D_{\textup{KL}}(P_{\theta_s^*}\|P_{\theta_s})\right]_{i, j=1 \cdots, d}, \\ 
J_t\left(\theta_t\right) &=\left[\frac{\partial^{2}}{\partial \theta_{i} \partial \theta_{j}} D_{\textup{KL}}(P_{\theta_t^*}\|P_{\theta_t})\right]_{i, j=1 \cdots, d}.
\end{align*}
\item The convergence of a sequence of parameter values is equivalent to the weak convergence of the distributions they index, e.g.,
\begin{equation*}
\theta \rightarrow \theta^* \Leftrightarrow P_{\theta} \rightarrow P_{\theta^*,}
\end{equation*}
for both source and target distributions.
\end{itemize}

\end{assumption}
\begin{definition}[Proper Prior] \label{def:proper-prior}
Given a prior $\omega(\Theta_s, \Theta_t)$, we say,
\begin{itemize}
    \item the induced marginal density $\omega(\Theta_s)$ is proper if it is continuous and positive over the whole support $\Lambda \subseteq \mathbb{R}^d$.
    \item the conditional density $\omega(\Theta_t|\Theta_s)$ is proper if there exist some $\delta_s > 0$ and $\delta_t > 0$ such that $\omega(\theta_t|\theta_s) > 0$
    for any $\theta_s$ and $\theta_t$ satisfying $\|\theta_s - \theta^*_s\| \leq \delta_s$ and $\|\theta_t - \theta^*_t\| \leq \delta_t$. 
    \item $\omega(\Theta_s, \Theta_t)$ is proper if both $\omega(\Theta_s)$ and $\omega(\Theta_t|\Theta_s)$ are proper.
\end{itemize}
\end{definition}
\begin{remark}
We define the proper prior to ensure that the posterior distribution of $\Theta_t$ and $\Theta_s$ given $D^{n}_t$ and $D^{m}_s$ will asymptotically concentrate on neighborhoods of $\theta^*_{t}$ and $\theta^*_s$, respectively. Specifically, if $\omega(\Theta_s)$ is positive and continuous in $\Lambda$, the posterior of $\Theta_s$ will concentrate around $\theta^*_s$ with sufficient source data. With the posterior of $\Theta_s$, the knowledge is transferred via the conditional distribution $\omega(\Theta_T|\Theta_s)$ where we could eventually obtain a good estimation of $\theta^*_t$ only if $\omega(\Theta_T|\Theta_s)$ has positive density around $\theta^*_t$.
\end{remark}
Allowing both $k$ and $m$ to be sufficiently large, we can obtain the following asymptotic results for the excess risk when both $\Theta_s$ and $\Theta_t$ are scalars.
\begin{theorem}[Instantaneous Prediction with Scalar Parameters] \label{thm:inst-scalar} 
Under Assumptions~\ref{asp:para-dist} and \ref{asp:para-trans}, for $\Lambda = \mathbb{R}$ and $\theta^*_s \neq \theta^*_t$,   as $n, m \rightarrow \infty$, the mixture strategy with a proper prior $\omega(\Theta_s,\Theta_t)$ for logarithmic loss yields
\begin{equation}
 \mathcal{R}_I  \asymp \frac{1}{n}.
\end{equation}
\end{theorem}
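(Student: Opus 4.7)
Starting from Theorem~\ref{thm:excessrisk-log}, we have
\begin{equation*}
\mathcal{R}_I \;=\; \mathbb{E}_{\theta^*_t,\theta^*_s}\!\left[\log\frac{P_{\theta^*_t}(Z'_t)}{\int P_{\theta_t}(Z'_t)\,Q(\theta_t,\theta_s\mid D^m_s, D^n_t)\,d\theta_t\,d\theta_s}\right].
\end{equation*}
Since $Z'_t$ depends on $(\theta_t,\theta_s)$ only through $\theta_t$, the inner integral depends on the joint posterior only through its $\theta_t$-marginal, so the predictor equals $\int P_{\theta_t}(Z'_t)\,\pi_{n,m}(\theta_t)\,d\theta_t$ with $\pi_{n,m}(\theta_t):=\int Q(\theta_t,\theta_s\mid D^m_s, D^n_t)\,d\theta_s$. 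Writing $\pi_{n,m}(\theta_t) = \int Q(\theta_s\mid D^m_s)\,Q(\theta_t\mid\theta_s,D^n_t)\,d\theta_s$ (where $Q(\theta_t\mid\theta_s,D^n_t)$ uses the conditional prior $\omega(\theta_t\mid\theta_s)$ together with $P_{\theta_t}(D^n_t)$), we view the proof as an $m\to\infty$ concentration step followed by an $n\to\infty$ Clarke--Barron expansion.

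The first step is to show that as $m\to\infty$, the source posterior $Q(\theta_s\mid D^m_s)$ concentrates on $\theta^*_s$ in the strong posterior-consistency sense, thanks to the properness of the marginal prior $\omega(\Theta_s)$ and the regularity assumptions in Assumption~\ref{asp:para-trans} (twice continuous differentiability, positive-definite $J_s(\theta^*_s)$, and the identifiability via weak convergence). A standard Laplace / Bernstein--von Mises argument for scalar parameters then yields that $\pi_{n,m}(\theta_t)\to\pi_n^{(\theta^*_s)}(\theta_t)$ in total variation, where $\pi_n^{(\theta^*_s)}(\theta_t)\propto P_{\theta_t}(D^n_t)\,\omega(\theta_t\mid\theta^*_s)$. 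Crucially, because the conditional prior $\omega(\theta_t\mid\theta^*_s)$ is positive in a neighborhood of $\theta^*_t$ by Definition~\ref{def:proper-prior}, and because $\theta^*_s\neq\theta^*_t$ guarantees that the "slice" prior $\omega(\cdot\mid\theta^*_s)$ is a legitimate (locally positive) prior unrelated to the likelihood $P_{\theta_t}$, the reduced problem is just standard Bayesian sequential prediction in $n$ target samples with a proper prior.

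For the second step, I would apply the Clarke--Barron asymptotic expansion for the Bayes redundancy (see Clarke \& Barron 1990, 1994, and the version stated in \citet{clarke1999asymptotic}): for a proper smooth prior over a scalar parameter, under Assumption~\ref{asp:para-trans},
\begin{equation*}
\mathbb{E}_{\theta^*_t}\!\left[\log\frac{P_{\theta^*_t}(Z'_t)}{\int P_{\theta_t}(Z'_t)\,\pi_n^{(\theta^*_s)}(\theta_t)\,d\theta_t}\right] \;=\; \frac{1}{2n} + o\!\left(\tfrac{1}{n}\right),
\end{equation*}
which furnishes both sides of the $\asymp$. The final step is to transfer this expansion through the $m\to\infty$ limit: by dominated convergence together with the total-variation estimate from the first step (plus a uniform log-integrability bound on $\log(P_{\theta^*_t}(Z'_t)/\int P_{\theta_t}(Z'_t)\,\pi_{n,m}(\theta_t)\,d\theta_t)$ that follows from the regularity assumptions), the redundancy computed with $\pi_{n,m}$ has the same leading-order $\frac{1}{2n}$ behavior as the one computed with $\pi_n^{(\theta^*_s)}$.

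The main obstacle is the interchange of the two limits: controlling the $\frac{1}{n}$ asymptotic of the Bayes redundancy while $m$ is simultaneously growing requires a uniform-in-$m$ Laplace approximation for $\pi_{n,m}$. The cleanest way around this is to fix $\theta_s=\theta^*_s$, apply Clarke--Barron to obtain the $\frac{1}{2n}+o(\tfrac{1}{n})$ expansion for $\pi_n^{(\theta^*_s)}$, and then bound the additional excess incurred by averaging over $Q(\theta_s\mid D^m_s)$ by an $O(m^{-1})$ posterior-concentration error, so that as long as $m\to\infty$ (at any rate) the $\frac{1}{n}$ term dominates. The hypothesis $\theta^*_s\neq\theta^*_t$ enters precisely here: it rules out the degenerate situation in which source and target data pool together at rate $1/(n+m)$, confirming that the asymptotic order is driven purely by the target sample size.
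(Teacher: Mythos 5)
Your conclusion is right and each ingredient is individually sound, but your route differs from the paper's in a way worth spelling out. The paper does not re-analyze the single-sample predictive density at all: it writes the instantaneous CMI as a difference of two cumulative quantities via the chain rule, $I(Z'_t;\Theta_t=\theta^*_t,\Theta_s=\theta^*_s\mid D^n_t,D^m_s)=I(Z'_t,D^n_t;\cdot\mid D^m_s)-I(D^n_t;\cdot\mid D^m_s)$, applies its already-proven OTL expansion (Theorem~\ref{thm:consistency-scalar}) to both terms with $n+1$ and $n$ target samples respectively, and observes that the constants $\frac{1}{2}\log I_t(\theta^*_t)+\log\frac{1}{\omega(\theta^*_t|\theta^*_s)}$ cancel, leaving exactly $\frac{1}{2}\log(1+\frac{1}{n})\asymp\frac{1}{n}$. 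This buys two things you have to work for separately: the source-posterior concentration and the joint $n,m\to\infty$ limit are already packaged inside Theorem~\ref{thm:consistency-scalar}, so there is no limit-interchange problem to manage, and no single-sample expansion is ever invoked. Your approach instead fixes $\theta_s=\theta^*_s$ by a Bernstein--von Mises step and then appeals to a $\frac{1}{2n}+o(\frac{1}{n})$ expansion of the one-step predictive KL risk. That expansion is true for scalar regular models, but it is not the Clarke--Barron theorem you cite: Clarke--Barron controls the \emph{cumulative} redundancy $\frac{1}{2}\log\frac{n}{2\pi e}+\frac{1}{2}\log I_t(\theta^*_t)+\log\frac{1}{\omega(\theta^*_t)}+o(1)$, and differencing two such expansions yields $\frac{1}{2}\log(1+\frac{1}{n})$ plus a difference of $o(1)$ terms that is not automatically $o(\frac{1}{n})$. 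So as written, your second step either needs a reference to the predictive-density risk literature (Aitchison/Hartigan/Komaki-type results) or collapses back into the paper's telescoping argument. Your closing remark about $\theta^*_s\neq\theta^*_t$ matches the paper's reading: with scalar, non-shared parameters the source cannot change the $\frac{1}{n}$ rate, only constants, and in the paper's derivation those constants cancel identically in the difference.
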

When $\theta^*_s \neq \theta^*_t$, the above theorem characterizes the excess risk with the rate of $O(\frac{1}{n})$, which achieves the optimal convergence rate for parametric distribution estimation \citep{haussler1995general,xu2020minimum}. However, the expression does not involve the source sample $m$ and the prior distribution $\omega(\Theta_s,\Theta_t)$, indicating that the source data does not help the prediction asymptotically. This is intuitive because given enough target data, we could have a precise estimation of the underlying target distribution, hence the source data is not needed in this asymptotic regime. However, we will see that this is not the case anymore in the OTL setup. 

The above result can be extended to a more typical transfer learning scenario where $\Theta_t, \Theta_s \in \mathbb{R}^d$ with $d > 1$ share some common parameters $\Theta_c \in \mathbb{R}^j$ for $0 \leq j \leq d$. To illustrate, we can write the parameters in the following way,
\begin{align}
    \Theta_s &= (\Theta_{c,1}, \Theta_{c,2},\cdots, \Theta_{c,j}, \quad \Theta_{s,1}, \cdots, \Theta_{s,d-j}  ) =  (\Theta_c, \Theta_{sr}),  \label{eq:paraways}\\
    \Theta_t &= (\underbrace{\Theta_{c,1}, \Theta_{c,2},\cdots, \Theta_{c,j}}_{\text{common parameters}}, \quad \underbrace{\Theta_{t,1}, \cdots, \Theta_{t,d-j})}_{\text{task-specific parameters}} =  (\Theta_c, \Theta_{tr}), \label{eq:parawayt} 
\end{align}
where $\Theta_c \in \mathbb{R}^{j}$ denotes the common parameter vector and $\Theta_{sr} , \Theta_{tr} \in \mathbb{R}^{d-j}$ are task-specific parameter vectors for source and target data, respectively. Then we reach the following theorem that gives the asymptotic estimation of the excess risk with $d > 1$.
\begin{theorem}[Instantaneous Prediction with General Parameterization]\label{thm:instant-gene} Under Assumptions~\ref{asp:para-dist} and \ref{asp:para-trans}, assume $\theta^*_s$ and $\theta^*_t$ are characterized in (\ref{eq:paraways}) and (\ref{eq:parawayt}), and $m \asymp n^p$ for some $p \geq 0$. Let $n \rightarrow \infty$, the mixture strategy with a proper prior $\omega(\Theta_s,\Theta_t)$ for logarithmic loss yields
\begin{equation}
 \mathcal{R}_I  \asymp \frac{d - j}{n} + \frac{j}{n \vee n^p }  , \label{eq:instant_general}
\end{equation}
\end{theorem}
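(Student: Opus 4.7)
The plan is to start from Theorem~\ref{thm:excessrisk-log}, which gives the exact identity $\mathcal{R}_I=I(Z'_t;\Theta_t,\Theta_s\mid D^m_s,D^n_t)$ evaluated at $(\theta^*_s,\theta^*_t)$, and reduce the conditional mutual information via the Markov structure of the model. Because $Z'_t\sim P_{\theta^*_t}$ is independent of $(\Theta_{sr},D^m_s,D^n_t)$ once $\Theta_t=(\Theta_c,\Theta_{tr})$ is given, the standard chain-rule identity collapses the quantity to
\[
\mathcal{R}_I \;=\; I(Z'_t;\Theta_c,\Theta_{tr}\mid D^m_s,D^n_t).
\]
The goal of the remainder is to show that $\Theta_c$ is effectively learned from $n+m$ observations while $\Theta_{tr}$ is effectively learned from only $n$ observations, which will produce the two rate terms in the theorem.

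The main step is a Bayesian asymptotic expansion of the joint posterior $Q(\theta_c,\theta_{sr},\theta_{tr}\mid D^m_s,D^n_t)$ in the spirit of Clarke and Barron, exactly generalising the machinery used in the proof of Theorem~\ref{thm:inst-scalar}. Under Assumption~\ref{asp:para-trans} and a proper prior (Definition~\ref{def:proper-prior}), a Laplace approximation around the joint MLE yields a Gaussian-like posterior with block precision matrix
\[
\Sigma^{-1}\;\approx\;
\begin{pmatrix}
nJ^{cc}_t+mJ^{cc}_s & mJ^{c,sr}_s & nJ^{c,tr}_t \\
mJ^{sr,c}_s & mJ^{sr,sr}_s & 0 \\
nJ^{tr,c}_t & 0 & nJ^{tr,tr}_t
\end{pmatrix},
\]
where the superscripts index the $j$-dimensional common block, the $(d-j)$-dimensional source-specific block, and the $(d-j)$-dimensional target-specific block of $J_s(\theta^*_s)$ and $J_t(\theta^*_t)$. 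A Schur-complement elimination of the $\Theta_{sr}$ coordinates then produces the marginal posterior covariance $\Sigma_{(\Theta_c,\Theta_{tr})\mid\mathrm{data}}$ whose $\Theta_c$-block is of order $1/(n+m)$ and whose $\Theta_{tr}$-block is of order $1/n$, with cross terms of order at most $1/n$.

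The final step is to plug this into the standard predictive-MI expansion
\[
I(Z'_t;\Theta_c,\Theta_{tr}\mid D^m_s,D^n_t)\;\approx\;\tfrac{1}{2}\mathrm{tr}\!\bigl(\Sigma_{(\Theta_c,\Theta_{tr})\mid\mathrm{data}}\,J_t(\theta^*_t)\bigr),
\]
which by the block scalings yields two contributions of respective sizes $\frac{j}{2(n+m)}$ and $\frac{d-j}{2n}$ (with positive constants coming from $J_s$ and $J_t$). Since $m\asymp n^p$ forces $n+m\asymp n\vee n^p$, this gives $\mathcal{R}_I\asymp\frac{d-j}{n}+\frac{j}{n\vee n^p}$; the matching lower bound comes from the posterior covariance being bounded below by the inverse of the total observed Fisher information, which has the same block scalings.

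The hard part will be handling the \emph{mixed effective sample sizes} rigorously: because $\Theta_c$ concentrates at rate $n+m$ while $\Theta_{tr}$ concentrates at the slower rate $n$, a single-rate Bernstein--von Mises statement does not apply directly, and one must either rescale the coordinates non-uniformly before invoking a Laplace argument or control the Schur complement directly to extract the block scalings of the marginal posterior on $\Theta_t$. The positive-definiteness of $J_s(\theta^*_s)$ in Assumption~\ref{asp:para-trans} is exactly what keeps the cross-block terms from inflating the marginal $\Theta_c$-variance, and the proper-prior condition is what guarantees the prior contribution remains subdominant at $(\theta^*_c,\theta^*_{sr},\theta^*_{tr})$ in every block; verifying these carefully, together with the $o(\cdot)$ remainders in the Laplace expansion uniformly across both scales, is the most delicate portion of the argument.
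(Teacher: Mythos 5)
Your proposal is correct in outline but takes a genuinely different route from the paper. The paper never analyses the single-sample predictive distribution directly: it writes the ITL excess risk as a telescoping difference of two OTL-type quantities, $I(Z'_t,D^n_t;\Theta_t,\Theta_s\,|\,D^m_s)-I(D^n_t;\Theta_t,\Theta_s\,|\,D^m_s)$, and then substitutes the already-proved log-determinant asymptotics of Theorem~\ref{theorem:gene-para} for cumulative regret at sample sizes $n+1$ and $n$. All the constant terms (including the prior term $\log\frac{1}{\omega(\theta^*_t|\theta^*_s)}$ and the Fisher-information constants) cancel exactly in the difference, and the rates fall out of the elementary expansion $\det(\mathbf{I}+A/n^{k})=1+\operatorname{tr}(A)/n^{k}+o(1/n^{k})$ applied to $\log\det(cn^{p}\Delta_s+(n+1)\Delta_t)-\log\det(cn^{p}\Delta_s+n\Delta_t)$. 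You instead reduce to the marginal posterior of $(\Theta_c,\Theta_{tr})$, extract its block covariance by a Schur complement (your block precision matrix agrees with the paper's $\mathbf{I}_{\boldsymbol\theta^*}$ after eliminating $\Theta_{sr}$, giving the $\Theta_c$-block $n\Delta_t+m\Delta_s$), and evaluate $\tfrac12\operatorname{tr}(\Sigma J_t)$. Both routes rest on the same Clarke--Barron machinery, but the paper's buys you the two-sided $\asymp$ essentially for free from the exact asymptotic constants of the cumulative-regret theorem, whereas your route must establish a multi-rate Bernstein--von Mises statement and a rigorous second-order expansion of the predictive KL from scratch --- precisely the "delicate portion" you flag. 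Your approach is more self-contained and arguably more transparent about \emph{why} the two rates appear (effective sample sizes $n+m$ versus $n$ per block), but to finish it you would still need to justify the trace formula with controlled remainders and the matching lower bound (e.g.\ via a van Trees/Bayesian Cram\'er--Rao argument), neither of which is needed in the paper's telescoping argument.
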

\begin{remark}
From the theorem above, we can conclude that if there is no common parameter $j = 0$ and $d = 1$, we could then recover the result in Theorem~\ref{thm:inst-scalar}. If $j > 0$, the source domain will share some parameters with the target domain and the source data will indeed help improve the "learning cost" of the common parameters. Compared with the typical result without the source as
\begin{align}
 \mathcal{R}_I \asymp \frac{d}{n}, \label{eq:instant_general_without}
\end{align}
the improvement is associated with the component $\frac{j}{n \vee n^p}$, which can be interpreted as the learning cost of $\theta^*_c$. If $m$ is superlinear in $n$ (e.g., $p > 1$), the source samples indeed improve the convergence rate of the estimation for $\theta^*_c$ (but does not change the rate of the estimation for $\theta^*_{tr}$). Moreover, if we consider the extreme case $j = d$ such that the source and target have the same parameterization, the risk will be
\begin{align}
    \mathcal{R}_I \asymp  \frac{d}{n^p \vee n} ,
\end{align}
where source data can yield a better convergence rate for the excess risk compared to Equation~(\ref{eq:instant_general_without}) if $p > 1$.  
\end{remark}

\subsubsection*{Online Transfer Learning}
 For online transfer learning where both $\Theta_s$ and $\Theta_t$ are scalars, we give the asymptotic estimation for CMI as follows.
\begin{theorem}[Online Prediction with Scalar Parameters]\label{thm:consistency-scalar}
 Under Assumptions~\ref{asp:para-dist} and \ref{asp:para-trans}, for $\Lambda = \mathbb{R}$ and $\theta^*_s \neq \theta^*_t$, as $n, m \rightarrow \infty$, the mixture strategy with proper prior $\omega(\Theta_s,\Theta_t)$ for logarithmic loss yields
\begin{align}
      \mathcal{R}_O  -  \frac{1}{2}\log \frac{n}{2\pi e}  \rightarrow \frac{1}{2}\log I_t(\theta^*_t)  + \log\frac{1}{\omega(\theta^*_t|\theta^*_s)},  \label{eq:withsource}
\end{align}
where we define the Fisher information matrix $\mathbb{E}_{\Theta_t} \left[ - \nabla^2_{\Theta_t} \log P_{\Theta_t}(Z_t)\right]$ evaluated at $\Theta_t = \theta^*_t$ as $I_t(\theta^*_t)$.
\end{theorem}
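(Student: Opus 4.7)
The plan is to combine two asymptotic approximations: posterior concentration of $\Theta_s$ around $\theta^*_s$ as $m\to\infty$, and a Clarke--Barron / Laplace-type expansion for the target mixture as $n\to\infty$. By Theorem~\ref{thm:expreg-log}, $\mathcal{R}_O = \mathbb{E}_{\theta^*_s,\theta^*_t}[\log(P_{\theta^*_t}(D^n_t)/Q(D^n_t|D^m_s))]$, so the objective is to extract a leading term $\tfrac{1}{2}\log(n/(2\pi e))$ and identify the residual as $\tfrac{1}{2}\log I_t(\theta^*_t) + \log(1/\omega(\theta^*_t|\theta^*_s))$.

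First, I would introduce the intermediate mixture $\tilde Q_{\theta_s}(D^n_t) := \int P_{\theta_t}(D^n_t)\,\omega(\theta_t|\theta_s)\,d\theta_t$ so that, by Equation~(\ref{eq:mixture-online}), $Q(D^n_t|D^m_s) = \int \tilde Q_{\theta_s}(D^n_t)\,Q(\theta_s|D^m_s)\,d\theta_s$. Applying Laplace's method to $\tilde Q_{\theta^*_s}(D^n_t)$ around the target MLE $\hat\theta_{t,n}$, using Assumption~\ref{asp:para-trans} (twice continuous differentiability, positive definite $J_t(\theta^*_t) = I_t(\theta^*_t)$) and the properness of $\omega(\theta_t|\theta^*_s)$ to guarantee a strictly positive continuous density near $\theta^*_t$, I get
\begin{equation*}
\log\frac{P_{\theta^*_t}(D^n_t)}{\tilde Q_{\theta^*_s}(D^n_t)} = \log\frac{P_{\theta^*_t}(D^n_t)}{P_{\hat\theta_{t,n}}(D^n_t)} + \frac{1}{2}\log\frac{n\,I_t(\hat\theta_{t,n})}{2\pi} - \log \omega(\hat\theta_{t,n}|\theta^*_s) + o(1).
\end{equation*}
Taking expectation under $P_{\theta^*_t}$, MLE consistency together with Wilks' theorem (so $\mathbb{E}[\log(P_{\hat\theta_{t,n}}/P_{\theta^*_t})]\to 1/2$ in the scalar case) converts this into exactly $\tfrac{1}{2}\log(n/(2\pi e)) + \tfrac{1}{2}\log I_t(\theta^*_t) + \log(1/\omega(\theta^*_t|\theta^*_s)) + o(1)$.

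Second, I must show that replacing $Q(D^n_t|D^m_s)$ by $\tilde Q_{\theta^*_s}(D^n_t)$ costs only $o(1)$ in expectation. Since $\omega(\Theta_s)$ is proper and $P_{\theta_s}$ satisfies Assumption~\ref{asp:para-trans}, standard Bayesian consistency (a one-dimensional Bernstein--von Mises argument) implies that $Q(\theta_s|D^m_s)$ concentrates on any neighbourhood of $\theta^*_s$ as $m\to\infty$. I would split the difference $\log Q(D^n_t|D^m_s) - \log \tilde Q_{\theta^*_s}(D^n_t)$ into a contribution from a shrinking neighbourhood $\|\theta_s - \theta^*_s\|\le \delta_m$, on which $\tilde Q_{\theta_s}(D^n_t)\to\tilde Q_{\theta^*_s}(D^n_t)$ uniformly, and a tail contribution whose posterior mass is negligible. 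Jensen's inequality gives a clean one-sided bound $\log Q(D^n_t|D^m_s)\ge \int \log\tilde Q_{\theta_s}(D^n_t)\,Q(\theta_s|D^m_s)d\theta_s$; the matching upper bound comes from concentrating the integral on the neighbourhood and bounding the tail using the proper prior and the likelihood ratio.

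The main obstacle is the second step: $\tilde Q_{\theta_s}(D^n_t)$ is itself random and contains a $\tfrac{1}{2}\log n$ contribution, so one must check that perturbations in $\theta_s$ do not inflate with $n$. The natural route is to observe that the Laplace expansion above isolates the $\theta_s$-dependence into the single factor $\log\omega(\hat\theta_{t,n}|\theta_s)$, which is continuous in $\theta_s$ at $\theta^*_s$ by properness and is bounded in a neighbourhood by twice continuous differentiability assumptions; combined with the posterior concentration $Q(\theta_s|D^m_s)\Rightarrow \delta_{\theta^*_s}$, this yields the required $o(1)$ control. Assembling the two steps and rearranging produces the claimed limit.
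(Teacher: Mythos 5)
Your proposal is correct in substance but takes a genuinely different route from the paper. The paper's proof first rewrites $\mathcal{R}_O$ as a difference of two unconditional Bayes redundancies, $D(P_{\theta^*_t,\theta^*_s}(D^n_t,D^m_s)\|Q(D^n_t,D^m_s)) - D(P_{\theta^*_s}(D^m_s)\|Q(D^m_s))$, applies the Clarke--Barron asymptotic formula to each term (the joint term contributing $\frac{1}{2}\log\det\operatorname{diag}(nI_t(\theta^*_t), mI_s(\theta^*_s)) + \log\frac{1}{2\pi e} + \log\frac{1}{\omega(\theta^*_s,\theta^*_t)}$ and the source term contributing $\frac{1}{2}\log(mI_s(\theta^*_s)) + \frac{1}{2}\log\frac{1}{2\pi e} + \log\frac{1}{\omega(\theta^*_s)}$), and obtains the result by subtraction, with the conditional prior $\omega(\theta^*_t|\theta^*_s)$ emerging as the ratio $\omega(\theta^*_s,\theta^*_t)/\omega(\theta^*_s)$. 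You instead condition on the source parameter directly: you freeze $\theta_s = \theta^*_s$, run the Laplace/Wilks expansion on the one-dimensional target mixture $\tilde Q_{\theta^*_s}(D^n_t)$, and then separately argue that the source posterior concentration makes the substitution cost $o(1)$. Your route is more self-contained and makes transparent exactly where $\log\frac{1}{\omega(\theta^*_t|\theta^*_s)}$ comes from (the conditional prior density evaluated at the target MLE), and it isolates the one genuinely new issue --- that the $\theta_s$-dependence sits only in the bounded, continuous factor $\log\omega(\hat\theta_{t,n}|\theta_s)$ and so does not interact with the growing $\frac{1}{2}\log n$ term. The paper's route is shorter because it invokes Clarke--Barron wholesale, but it silently extends that theorem to the heterogeneous joint sample $(D^n_t, D^m_s)$ with two different sample sizes, which is itself an unproved generalization; your decomposition avoids that. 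Both arguments share the same residual technical burden, namely the uniform integrability needed to pass the pointwise Laplace expansion to an expectation and the question of whether the double limit in $n$ and $m$ is taken jointly or iteratively, which neither your sketch nor the paper fully resolves.
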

\begin{remark} \label{remark:consist}
Compared to the result without the source data when target sample is abundant \citep{clarke1990information},
\begin{align}
     \mathcal{R}_O   - \frac{1}{2}\log \frac{n}{2\pi e} \rightarrow   \frac{1}{2}\log I_t(\theta^*_t)  + \log\frac{1}{\hat{\omega}(\theta^*_t)} \label{eq:withoutsource}
\end{align}
for some prior $\hat{\omega}(\Theta_t)$, the difference between Equation~(\ref{eq:withsource}) and (\ref{eq:withoutsource}) is $\frac{\hat{\omega}(\theta^*_t)}{\omega(\theta^*_t|\theta^*_s)}$. It says that if the distribution $\omega$ can be chosen such that $\frac{\hat{\omega}(\theta^*_t)}{\omega(\theta^*_t|\theta^*_s)} < 1$, the source data will help to reduce the regret by some constant in the scalar parameter case. However, it should be noted that $\omega$ is chosen without knowing the exact value of $\theta^*_t$ and $\theta^*_s$ so it is not immediately clear if this is always possible. We will show later that if the conditional prior $\omega(\theta_t|\theta_s)$ is proper, it is always possible to find a distribution such that $\frac{\hat{\omega}(\theta^*_t)}{\omega(\theta^*_t|\theta^*_s)} < 1 $. On the contrary, if the prior information between the source and target is incorrect, we cannot guarantee that $\frac{\hat{\omega}(\theta^*_t)}{\omega(\theta^*_t|\theta^*_s)} > 1$ always hold, hence negative transfer will occur in the worst-case scenario. This result provides a formal  characterization of \textit{negative transfer}.
\end{remark}
Notice that the source samples change the constant from $\log\frac{1}{\hat{\omega}(\theta_t)}$ to $\log\frac{1}{\omega(\theta_t|\theta_s)}$, which is independent from $n$. Hence the effect of the source samples vanishes asymptotically as $n$ goes to infinity. However, the asymptotic analysis is still useful for two reasons. Firstly, we will show later in Corollary~\ref{coro:rate-gene-online} that when both $n$ and $m$ approach infinity, the sample complexity of the regret (i.e., how regret scales in terms of $m$ and $n$) will change, depending on how fast $m$ and $n$ grow relative to each other. Secondly, our numerical results show that the asymptotic bound is in fact very accurate even for relatively small $m$ and $n$.

Theorem~\ref{thm:consistency-scalar} holds when the distributions are parametrized by scalars, that is, the source and target do not share parameters and the knowledge transfer is totally reflected on the prior knowledge. Therefore, the effect of the source sample size $m$ is not exhibited in this case. The following theorem characterize the expected regret under general parametrization where $\Theta_s$ and $\Theta_t$ will share $j$ common parameters.
\begin{theorem}[Online Prediction with General Parametrization]\label{theorem:gene-para}
 Under Assumptions~\ref{asp:para-dist} and \ref{asp:para-trans}, with $\Theta_s,\Theta_t \in \mathbb{R}^d$ defined above and as $n \rightarrow \infty$ and $m = cn^p$ for some $c>0$, the mixture strategy with proper prior $\omega(\Theta_s,\Theta_t)$ yields
 \begin{align}
\mathcal{R}_O  - \frac{1}{2} \log \operatorname{det}(\mathbf{I}_{j\times j} + \frac{1}{cn^{p-1}} \Delta_t \Delta^{-1}_s)   - \frac{1}{2} \log \operatorname{det}(n I_t(\theta^{*}_{tr}))  \rightarrow  \frac{d-j}{2}\log \frac{1}{2\pi e} + \log \frac{1}{\omega(\theta^*_t|\theta^*_s)} \label{eq:gene-para-reg}
\end{align}
where $\Delta_t$, $\Delta_s$ and $I_t(\theta^{*}_{tr})$ are Fisher information matrices related quantities and their definitions can be found in (\ref{eq:fisher-start}) - (\ref{eq:fisher-end}) in Section~\ref{proof:gene-para}.
\end{theorem}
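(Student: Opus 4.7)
The plan is to analyse the exact CMI formula $\mathcal{R}_O = I(D^n_t;\Theta_t,\Theta_s|D^m_s)$ from Theorem~\ref{thm:expreg-log}, written as $\mathbb{E}[\log P_{\theta^*_t}(D^n_t)] - \mathbb{E}[\log Q(D^n_t|D^m_s)]$, and extract the asymptotic expansion of the marginal log-likelihood $\log Q(D^n_t|D^m_s)$ through a combined Laplace / Bernstein--von Mises argument. Using the parameter decomposition $\theta_s = (\theta_c,\theta_{sr})$, $\theta_t = (\theta_c,\theta_{tr})$ one first rewrites the predictive as
\begin{equation*}
Q(D^n_t|D^m_s) = \int P_{\theta_t}(D^n_t)\,\tilde\omega(\theta_t|D^m_s)\,d\theta_t, \qquad \tilde\omega(\theta_t|D^m_s) = \int \omega(\theta_t|\theta_s)\,Q(\theta_s|D^m_s)\,d\theta_{sr},
\end{equation*}
so that the problem is recast as a Laplace expansion of a posterior-predictive integral with a data-dependent prior.

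Since $m = cn^p \to \infty$, Bernstein--von Mises under Assumption~\ref{asp:para-trans} approximates the source posterior $Q(\theta_s|D^m_s)$ by $\mathcal{N}(\hat\theta_s,(mI_s(\theta^*_s))^{-1})$; marginalising out $\theta_{sr}$, the source-induced posterior on the common parameter $\theta_c$ is $\mathcal{N}(\hat\theta^{(s)}_c,(m\Delta_s)^{-1})$, with $\Delta_s$ the Schur complement of the source Fisher information relative to the source-specific block. Plugging this Gaussian into the target integral and expanding $\log P_{\theta_t}(D^n_t)$ as a quadratic around the target MLE $\hat\theta_t$ with curvature $nI_t(\theta^*_t)$, the combined Hessian of the log-integrand on $(\theta_c,\theta_{tr})$ is the $d\times d$ block matrix
\begin{equation*}
H = \begin{pmatrix} nI_t^{cc} + m\Delta_s & nI_t^{cr} \\ nI_t^{rc} & nI_t^{rr} \end{pmatrix},
\end{equation*}
whose determinant factorises, via block Schur-complement identities, as $\det H = \det(nI_t(\theta^*_{tr}))\,\det(n\Delta_t + m\Delta_s)$, where $\Delta_t = I_t^{cc} - I_t^{cr}(I_t^{rr})^{-1}I_t^{rc}$ is the effective target Fisher information for $\theta_c$ after profiling out $\theta_{tr}$, and $I_t(\theta^*_{tr}) := I_t^{rr}$. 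Standard Laplace calculus then gives $\log Q(D^n_t|D^m_s) = \log P_{\hat\theta_t}(D^n_t) + \log\tilde\omega(\hat\theta_t|D^m_s) + \tfrac{d}{2}\log(2\pi) - \tfrac{1}{2}\log\det H + o(1)$.

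Taking expectations, applying Wilks' theorem in the form $\mathbb{E}[\log P_{\hat\theta_t}(D^n_t) - \log P_{\theta^*_t}(D^n_t)] = d/2 + o(1)$, and factoring
\begin{equation*}
\det(n\Delta_t + m\Delta_s) = \det(m\Delta_s)\,\det(\mathbf{I}_{j\times j} + (cn^{p-1})^{-1}\Delta_s^{-1}\Delta_t) = \det(m\Delta_s)\,\det(\mathbf{I}_{j\times j} + (cn^{p-1})^{-1}\Delta_t\Delta_s^{-1})
\end{equation*}
(using $\det(\mathbf{I}+AB)=\det(\mathbf{I}+BA)$), the stray $\det(m\Delta_s)$ factor is cancelled by the normalisation constant of the Gaussian source posterior present inside $\log\tilde\omega(\hat\theta_t|D^m_s)$, while the remaining piece of $\tilde\omega$ converges by consistency to $\log\omega(\theta^*_t|\theta^*_s)$. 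Collecting all terms yields the claimed limit. The principal technical obstacle is the precise bookkeeping of this cancellation: both the Laplace expansion of the target integral and the Bernstein--von Mises expansion of the source posterior produce $\det(m\Delta_s)$ factors, and one must verify that they cancel exactly so only the ratio $\det(\mathbf{I}_{j\times j}+(cn^{p-1})^{-1}\Delta_t\Delta_s^{-1})$ survives as the net contribution of the common block. This is most cleanly handled by centring the Laplace expansion at the joint MAP rather than at the target MLE, thereby absorbing the cross-term between $\hat\theta^{(t)}_c$ and $\hat\theta^{(s)}_c$ into the determinant; the remaining ingredients (Taylor remainders, uniform integrability and exchange of limit with expectation) are routine consequences of Assumption~\ref{asp:para-trans} and parallel the scalar treatment behind Theorem~\ref{thm:consistency-scalar}.
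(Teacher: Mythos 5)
Your proposal is correct in substance and rests on the same two pillars as the paper's proof --- asymptotic posterior normality and the block Schur-complement factorization $\det H = \det(nI_t(\theta^*_{tr}))\det(n\Delta_t + m\Delta_s)$, followed by $\det(n\Delta_t+m\Delta_s)/\det(m\Delta_s)=\det(\mathbf{I}_{j\times j}+\tfrac{n}{m}\Delta_t\Delta_s^{-1})$ --- but it packages them differently. The paper never expands the conditional predictive $Q(D^n_t|D^m_s)$ directly: it writes $\mathcal{R}_O = D(P_{\theta^*_t,\theta^*_s}(D^n_t,D^m_s)\|Q(D^n_t,D^m_s)) - D(P_{\theta^*_s}(D^m_s)\|Q(D^m_s))$ and applies the Clarke--Barron redundancy expansion to each term as a black box, so the $\tfrac{2d-j}{2}$ and $\tfrac{d}{2}$ dimension constants subtract to $\tfrac{d-j}{2}\log\tfrac{1}{2\pi e}$, the priors subtract to $\log\tfrac{1}{\omega(\theta^*_t|\theta^*_s)}$, and the $\det(m\Delta_s)$ cancellation you worry about happens automatically inside the determinant ratio $\det(\mathbf{I}_{\boldsymbol\theta^*})/\det(\mathbf{I}_{\boldsymbol\theta^*_s})$, with no cross-term bookkeeping required. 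Your route --- Laplace expansion of the posterior-predictive integral against the BvM-approximated, data-dependent prior $\tilde\omega(\theta_t|D^m_s)$ --- reaches the same answer but forces you to handle by hand exactly the delicate points you flag: centring at the joint MAP rather than the target MLE, taking the expectation of the quadratic cross-term between $\hat\theta_c^{(t)}$ and $\hat\theta_c^{(s)}$, and verifying that the Gaussian normalization of the source-induced prior on $\theta_c$ cancels the stray $\det(m\Delta_s)$ so that the naive $\tfrac{d}{2}\log\tfrac{1}{2\pi e}$ Laplace constant reduces to $\tfrac{d-j}{2}\log\tfrac{1}{2\pi e}$. What your version buys is a more transparent picture of \emph{why} the source data acts as an effective Gaussian prior of precision $m\Delta_s$ on the shared block; what the paper's subtraction trick buys is a proof that is essentially two citations plus linear algebra. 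Both are valid; if you execute yours, the cross-term expectation is the one step that genuinely requires care and should be written out rather than asserted.
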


\begin{corollary}[Rate of $\mathcal{R}_O$] \label{coro:rate-gene-online} Under the conditions in~Theorem~\ref{theorem:gene-para}, for $0 \leq  p < 1$:
\begin{align}
\mathcal{R}_O \asymp  j(1-p)\log n  + (d-j)\log n .
\end{align}
For $p \geq 1$:
\begin{align}
\mathcal{R}_O  \asymp  \frac{j}{n^{p-1}}   + (d-j)\log n .
\end{align}
\end{corollary}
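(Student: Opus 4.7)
The plan is to start from the asymptotic identity in Theorem~\ref{theorem:gene-para}, which I rearrange to
\[
\mathcal{R}_O = \tfrac{1}{2}\log\det\!\left(\mathbf{I}_{j\times j} + \tfrac{1}{cn^{p-1}}\Delta_t\Delta_s^{-1}\right) + \tfrac{1}{2}\log\det\!\left(nI_t(\theta_{tr}^*)\right) + O(1),
\]
and then read off the leading order of each $\log\det$; the constants $\tfrac{d-j}{2}\log\tfrac{1}{2\pi e} + \log\tfrac{1}{\omega(\theta_t^*|\theta_s^*)}$ from the theorem and the vanishing residual from the convergence statement are both $O(1)$ and cannot affect the $\asymp$-order. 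The task-specific piece is immediate: since $I_t(\theta_{tr}^*)$ is a fixed positive-definite matrix of size $(d-j)\times(d-j)$, one has $\log\det(nI_t(\theta_{tr}^*)) = (d-j)\log n + O(1)$, which accounts for the $(d-j)\log n$ contribution appearing in both regimes.

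For the common-parameter piece I split on $p$. When $0 \leq p < 1$, the scalar $\tfrac{1}{cn^{p-1}} = n^{1-p}/c$ grows without bound, so I factor it out:
\[
\det\!\left(\mathbf{I}_{j\times j} + \tfrac{n^{1-p}}{c}\Delta_t\Delta_s^{-1}\right) = \left(\tfrac{n^{1-p}}{c}\right)^{\!j} \det\!\left(\tfrac{c}{n^{1-p}}\mathbf{I}_{j\times j} + \Delta_t\Delta_s^{-1}\right).
\]
The second factor converges to the positive constant $\det(\Delta_t\Delta_s^{-1})$, so this contribution is $j(1-p)\log n + O(1)$, and adding it to the previous paragraph yields $\mathcal{R}_O \asymp j(1-p)\log n + (d-j)\log n$. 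When $p \geq 1$, the scalar $\tfrac{1}{cn^{p-1}}$ stays bounded (equal to $1/c$ at $p=1$ and shrinking to $0$ for $p>1$), so I use the expansion $\log\det(\mathbf{I}+A) = \operatorname{tr}(A) + O(\|A\|_F^2)$ for small $A$, obtaining $\tfrac{1}{cn^{p-1}}\operatorname{tr}(\Delta_t\Delta_s^{-1}) + O(n^{-2(p-1)}) = \Theta(n^{-(p-1)})$; summing with the $(d-j)\log n$ piece gives the stated rate.

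The main obstacle is explaining why the $p\geq 1$ contribution can be written as $j/n^{p-1}$ rather than just $1/n^{p-1}$: this requires $\operatorname{tr}(\Delta_t\Delta_s^{-1}) \asymp j$, which holds because $\Delta_t$ and $\Delta_s$ are the $j\times j$ Fisher-information sub-blocks for the shared coordinates and, by Assumption~\ref{asp:para-trans}, have eigenvalues uniformly bounded away from $0$ and $\infty$, making the trace a sum of $j$ order-one positive quantities. A minor bookkeeping issue is the boundary $p=1$, where $n^{-(p-1)}\equiv 1$ and the bound reduces to an $O(j)$ constant; the two-sided $\asymp$ then follows directly by reading off matching upper and lower bounds from the expansion above. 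Edge cases ($j=0$, $j=d$, $p=1$) collapse gracefully into the same formulas.
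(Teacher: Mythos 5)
Your proposal is correct and follows essentially the same route as the paper: both start from the asymptotic identity of Theorem~\ref{theorem:gene-para}, read off $(d-j)\log n$ from $\log\det(nI_t(\theta^*_{tr}))$, and handle the common-parameter term by factoring out the dominant scalar when $p<1$ and by the trace expansion of $\log\det(\mathbf{I}+A)$ when $p\geq 1$. The only differences are cosmetic (the paper rewrites the determinant as $\log\det(cn^p\Delta_s+n\Delta_t)-\log\det(cn^p\Delta_s)$ before expanding, and your extra care about $\operatorname{tr}(\Delta_t\Delta_s^{-1})\asymp j$ is harmless but unnecessary since $j$ is fixed as $n\to\infty$).
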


\begin{remark}\label{remark:knowledge_transfer}
We can intuitively interpret the term $\frac{1}{2} \log \operatorname{det}(\mathbf{I}_{j\times j} + \frac{1}{cn^{p-1}} \Delta_t \Delta^{-1}_s)$ in the Equation~(\ref{eq:gene-para-reg}) as the learning cost of $\theta_c$, which is captured by the source sample sizes $m$. Compared with the typical result without the source as
\begin{align}
    \mathcal{R}_O  - \frac{d}{2}\log \frac{n}{2\pi e}  \rightarrow   \frac{1}{2}\log  \operatorname{det}(I_t(\theta^*_{t}))  +  \log \frac{1}{\hat{\omega}(\theta^*_{t})} \label{eq:online_general_without}
\end{align}
with the rate of $O(d\log n)$, if $m$ is
\begin{itemize}
    \item sublinear in $n$ ($p < 1$), the rate is improved to $O(j(1-p)\log n)$. In this case, as the source data size is small compared to the target data size, the learning rate (for the common parameter part) is only improved by a constant factor $1-p$, changed from $j\log n$ to $j(1-p)\log n$.
    \item linear in $n$ ($p = 1$), the rate (for learning in the common parameters) in this case improves from $O(j\log n)$ to a constant $O(j)$.
    \item superlinear in $n$ ($p > 1$), the rate is $O(\frac{j}{n^{p-1}})$, indicating that abundant source samples indeed improve the performance and the cost for learning the common parameters vanishes in this case.
\end{itemize}
Furthermore, the prior knowledge $\omega(\theta^*_t|\theta^*_s)$ becomes involved in characterizing the expected regret as we discussed in Remark~\ref{remark:consist}. However, it can only change the constant but does not change the rate.
\end{remark}
As a special case, if there is no common parameters ($j = 0$), then as both $m$ and $n$ are sufficiently large:
\begin{align}
    \mathcal{R}_O  - \frac{d}{2}\log \frac{n}{2\pi e}  \rightarrow   \frac{1}{2}\log  \operatorname{det}(I_t(\theta^*_{tr}))  +  \log \frac{1}{\omega(\theta^*_{t}|\theta^*_{s})} 
\end{align}
Let $d = 1$, we can recover the results in Theorem~\ref{thm:consistency-scalar} and the knowledge transfer is only reflected on the prior knowledge $\omega(\theta^*_t|\theta^*_s)$. If the number of the common parameters is $d$ ($j = d$), that is, the source and target distributions are characterized by the same parameters, which yields the asymptotic estimation as
\begin{align}
   \mathcal{R}_O   & - \frac{1}{2}\log \operatorname{det}(\mathbf{I}_{d \times d} + \frac{1}{cn^{p-1}}\Delta_t  \Delta^{-1}_s)  \rightarrow \log\frac{1}{\omega(\theta^*_t|\theta^*_s)} 
\end{align}
Under this case, the rate of the regret depends on the source sample size by setting $j = d$ in~Corollary~\ref{coro:rate-gene-online}. Compared to the ITL case, as can be seen in Theorem~\ref{thm:excessrisk-log} and \ref{thm:excessrisk-general}, the excess risk induced by ITL is only associated with the sample number $m$ and $n$ since the posterior overwhelms the prior for single data prediction. While the expected regret in OTL is further determined by conditional prior distribution as shown in Theorem~\ref{thm:consistency-scalar} and \ref{theorem:gene-para}.

\subsubsection*{Time-variant Transfer Learning}
Under the time-variant transfer learning, with the aforementioned assumption that at episode $i$, the target parameter $\Theta_{t,i}$ have $c_i$ target common parameters with $\Theta_{t,i-1}$, and $\Theta_s$ shares $j_i$ source sharing parameters with $\Theta_{t,i-1}$ and $\Theta_{t,i}$. Let us define the random variables $\Theta_s$, $\Theta_{t,i-1}$ and $\Theta_{t,i}$ with the following parameterization,
\begin{small}
\begin{align}
    \Theta_s &= (\Theta_{c,1}, \Theta_{c,2},\cdots, \Theta_{c,j_i}, \quad \underbrace{\Theta_{s,1}, \cdots, \Theta_{s,d-j_i})}_{\text{source-specific parameters}} =  (\Theta_{c,i}, \Theta_{sr,i}), \\
    \Theta_{t,i-1} &= (\Theta_{c,1}, \Theta_{c,2},\cdots, \Theta_{c,j_i}, \quad \Theta_{v,1}, \cdots, \Theta_{v,c_i}, \quad   \Theta_{t',1}, \cdots, \Theta_{t',d- j_i - c_i}) =  (\Theta_{c,i}, \Theta_{v,i}, \Theta_{tr,i-1}), \\
    \Theta_{t,i} &= (\underbrace{\Theta_{c,1}, \Theta_{c,2},\cdots, \Theta_{c,j_i},}_{\text{source sharing parameters}} \quad \underbrace{\Theta_{v,1}, \cdots, \Theta_{v,c_i},}_{\text{target common parameters}} \quad \underbrace{\Theta_{t,1}, \cdots, \Theta_{t,d-j_i - c_i})}_{\text{target-specific parameters}} =  (\Theta_{c,i}, \Theta_{v,i}, \Theta_{tr,i}).
\end{align}
\end{small}
Here $\Theta_{c,i}$ and $\Theta_{v,i}$ represent the source sharing parameters and the target common parameters under episode $i$, which are not overlapped. The parameters changing from $\Theta_{tr,i-1}$ to $\Theta_{tr,i}$ exhibit the nature of time-varying target domains. By this particular parameterization and we assume at each episode $i$, the sample sizes $n_i$ are comparable with $n_{i-1}$ (e.g., $n_{i-1} \asymp n_i$), we reach the following theorem.

\begin{theorem}[Time-variant Target Regret Bounds]\label{thm:time-varying} Given the time-variant target domain described in the problem formulation, suppose that conditions in Therorem~\ref{thm:expreg-generalloss} and Assumptions~\ref{asp:para-dist-timevarying} and \ref{asp:para-trans} hold for each $\theta^*_{t,i}$ and $\theta^*_s$ for $i = 1,2,\cdots,k$. We further assume that source parameters will share $j_i$ parameters with every $\theta^*_{t,i}$. In addition, $\theta^*_{t,i}$, $\theta^*_{t,i-1}$ have $c_i$ common parameters. As $n_i\rightarrow \infty$ for any $i$ and assume $n_{i-1} \asymp n_i$ and $m  \asymp n^p_i$ for some $p \geq 0$, the mixture strategy with proper prior $\omega(\theta_s,\theta_{t,i}, \theta_{t,i-1})$ yields
\begin{align}
    \mathcal{R}_{TV} \lesssim &  \sqrt{ l \sum_{i=1}^{l}  n_i \Big( j_i (1 \wedge n_i^{1-p})+ c_i   + (d-c_i-j_i)\log n_i + \frac{2}{\omega(\theta^*_{t,i}|\theta^*_{t,i-1},\theta^*_s)} \Big) }.
\end{align}
\end{theorem}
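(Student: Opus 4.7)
The plan is to start from the bounded-loss regret bound of Theorem~\ref{coro:general-loss-time-variant}, which already reduces the task to controlling, for each episode $i$, the per-episode conditional mutual information
\[
I_i \;:=\; I\bigl(D^{n_i}_{t,i};\,\Theta_{t,i}=\theta^*_{t,i},\,\Theta_{t,i-1}=\theta^*_{t,i-1},\,\Theta_s=\theta^*_s\,\big|\,D^m_s,D^{n_{i-1}}_{t,i-1}\bigr).
\]
The target inequality follows once I prove
\[
I_i \;\lesssim\; j_i\bigl(1\wedge n_i^{1-p}\bigr) \;+\; c_i \;+\; (d-c_i-j_i)\log n_i \;+\; \frac{2}{\omega(\theta^*_{t,i}\mid\theta^*_{t,i-1},\theta^*_s)},
\]
because substituting into the bound of Theorem~\ref{coro:general-loss-time-variant} and pulling the factor $M$ into the $\lesssim$ produces exactly the claimed expression. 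So the entire effort concentrates on upper bounding $I_i$.

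Next, I would rewrite $I_i$ via the mixture strategy as $\mathbb{E}_{\theta^*}\!\bigl[\log P_{\theta^*_{t,i}}(D^{n_i}_{t,i})/Q(D^{n_i}_{t,i}\mid D^m_s,D^{n_{i-1}}_{t,i-1})\bigr]$ using the three-level predictive distribution derived just before Theorem~\ref{coro:log-loss-time-variant}, and then perform a staged Laplace approximation in the spirit of the proof of Theorem~\ref{theorem:gene-para}. The first stage handles the posterior $Q(\theta_s,\theta_{t,i-1}\mid D^m_s,D^{n_{i-1}}_{t,i-1})$: under Assumption~\ref{asp:para-trans} and properness, it concentrates around $(\theta^*_s,\theta^*_{t,i-1})$ at rates governed by $m\,J_s(\theta^*_s)$ and $n_{i-1}\,J_t(\theta^*_{t,i-1})$ on the corresponding coordinate blocks. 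The second stage integrates $P_{\theta_{t,i}}(D^{n_i}_{t,i})$ against $\omega(\theta_{t,i}\mid\theta_s,\theta_{t,i-1})$ using a standard $n_i$-sample Laplace expansion, producing the $\log(1/\omega(\theta^*_{t,i}\mid\theta^*_{t,i-1},\theta^*_s))$ prior term plus a $\tfrac{1}{2}\log\det$ Fisher-information cost.

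The substantive step is to identify how many samples effectively inform each coordinate block of $\Theta_{t,i}$, based on the parameterization decomposition:
\begin{itemize}
    \item For the $j_i$ source-sharing coordinates, both $D^m_s$ (through $\Theta_s$) and $D^{n_{i-1}}_{t,i-1}$ (through $\Theta_{t,i-1}$) contribute, so the effective sample size is $\asymp\max(m,n_{i-1})\asymp\max(n_i^p,n_i)$. Combined with the $n_i$ current samples, the Fisher block term is of the order $\tfrac{j_i}{2}\log\det\bigl(\mathbf{I}_{j_i}+n_i/\max(n_i^p,n_i)\bigr)\asymp j_i(1\wedge n_i^{1-p})$, matching the first term.
    \item For the $c_i$ target-common coordinates, only $D^{n_{i-1}}_{t,i-1}$ and $D^{n_i}_{t,i}$ inform them, and since $n_{i-1}\asymp n_i$ the corresponding block cost is $\tfrac{c_i}{2}\log(1+n_i/n_{i-1})\asymp c_i$.
    \item For the remaining $d-c_i-j_i$ target-specific coordinates, only $D^{n_i}_{t,i}$ provides information, yielding the classical $\tfrac{d-c_i-j_i}{2}\log n_i$ cost.
\end{itemize}
Finally, the Laplace constant involving the conditional prior is kept as $\log\bigl(1/\omega(\theta^*_{t,i}\mid\theta^*_{t,i-1},\theta^*_s)\bigr)$, which I would then upper bound by $2/\omega(\theta^*_{t,i}\mid\theta^*_{t,i-1},\theta^*_s)$ via the elementary inequality $\log x\le x-1\le x$ applied twice (or equivalently $-2\log y\le 2/y$ for $y\in(0,1]$), giving the exact term in the statement.

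The main obstacle will be carrying out the two-stage Laplace expansion rigorously when three parameter vectors share overlapping coordinate blocks. The block structure of the joint Fisher information matrix is more intricate than in Theorem~\ref{theorem:gene-para}, and controlling the cross-terms between the $\theta_s$-block and the $\theta_{t,i-1}$-block (which both carry the $j_i$ source-sharing coordinates) requires a careful Schur-complement style argument to obtain the $\max(m,n_{i-1})$ effective-sample-size rate. Secondary, but still technical, is ensuring that the remainders in the Laplace expansion are uniformly controlled in the outer expectation over $D^m_s$ and $D^{n_{i-1}}_{t,i-1}$; this uses the concentration of the posteriors guaranteed by properness together with the regularity conditions of Assumption~\ref{asp:para-trans}, exactly as in the single-episode proofs.
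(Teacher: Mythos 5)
Your proposal is correct and follows essentially the same route as the paper: it starts from the bounded-loss bound of Theorem~\ref{coro:general-loss-time-variant}, reduces to an asymptotic estimate of the per-episode conditional mutual information via a Clarke--Barron/Laplace expansion written as a difference of two joint KL divergences (exactly the block-Fisher-information Schur-complement computation the paper carries out), and extracts the same per-block rates $j_i(1\wedge n_i^{1-p})$, $c_i$, and $(d-c_i-j_i)\log n_i$ from the effective sample sizes $m+n_{i-1}$, $n_{i-1}$, and $n_i$. Your explicit bounding of the Laplace prior constant $\log\bigl(1/\omega(\theta^*_{t,i}\mid\theta^*_{t,i-1},\theta^*_s)\bigr)$ by $2/\omega(\theta^*_{t,i}\mid\theta^*_{t,i-1},\theta^*_s)$ is in fact more careful than the paper, which writes the $2/\omega$ term directly without justification.
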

As can be seen in the theorem, the estimations of the parameters are now four-fold. In the summation, the first term stands for the estimation cost of source sharing parameters and the rate depends on the sample size of source samples. Hence, a large sample size ($p>1$) will contribute to boosting the rate. The second term, concerned with the target common parameters $\Theta_{v,i}$, is a constant $c_i$ calculating from the ratio $\frac{n_i}{n_{i-1}} \sim O(1)$ where it entails that the target data with the previous episode improves the estimation of $\Theta_{v,i}$. For target-specific parameter, the rate is $O((d-j_i-c_i)\log(n_i))$ which coincides with the typical regret growth results. Lastly, the prior knowledge $\omega(\theta^*_{t,i}|\theta^*_{t,i-1},\theta^*_s)$ (e.g., the knowledge over $\theta^*_{t,i}$ given the previous $\theta^*_{t,i-1}$ and $\theta^*_s$) also plays an important role in the prediction performance for each episode $i$ as can be seen in the OTL case.

\subsection{Negative and Positive Transfer}
As previously discussed, $\omega(\Theta_s,\Theta_t)$ should be chosen properly so that the posterior will asymptotically converge to the true parameter $\theta^*_s$ and $\theta^*_t$. However, if the prior distribution (particularly $\omega(\Theta_t|\Theta_s)$) is imposed improperly, the extra source data do not necessarily help improve our prediction for target data. Roughly speaking, if our prior knowledge on $\theta_s^*$ and $\theta_t^*$ is incorrect, under our scheme, this could translate to an improper prior distribution for the mixture strategy. We will show that in the worst case, with an improper prior, the extra source data will in fact cause a higher regret (i.e., worse performance) compared to the case without source data, known as the \emph{negative transfer}. We also study the \emph{positive transfer} case under the proposed mixture strategy where the source data indeed improve the prediction performance.
\subsubsection*{Negative Transfer}
To understand the negative transfer, let us first start with a simple Bernoulli transfer example. Consider $Z_s$, $Z_t$ take values in $\{0,1\}$ and assume $\theta^*_s$ and $\theta^*_t$ are the probabilities that the source and target samples take value in $1$. Also assume that our prior knowledge on the parameters is that $|\theta^*_s-\theta^*_t| \leq 0.1$ given any $\theta^*_s\in \Lambda$. Suppose the underlying parameters are $\theta^*_t = 0.6$ and $\theta^*_s = 0.8$. In other words, our prior knowledge is incorrect. In this case, even with the perfect knowledge of $\theta^*_s$, no algorithm can correctly estimate $\theta^*_t$ if the (incorrect) prior knowledge $|\theta_s-\theta_t| \leq 0.1$ is imposed, even with infinitely many target samples. As a consequence, the expected risk becomes higher compared to the case without knowing such prior. In section~\ref{subsec:Bernoulli}, we present both detailed analytical and experimental analysis for Bernoulli examples with different priors. Generally speaking, it is recognized that learning performance is captured by the divergence between the estimated distribution $P_{\theta_{est}}$ and the true distribution $P_{\theta^*}$.  Firstly, we consider the ITL problem under logarithmic loss and then concretely point out that improper prior will lead to the negative transfer. 
\begin{proposition}[Negative Transfer for Instantaneous Transfer Learning] \label{prop:neg-inst} 
Under Assumptions~\ref{asp:para-dist} and \ref{asp:para-trans}, as $n, m \rightarrow \infty$, the mixture strategy with a proper $\omega(\Theta_s)$ but an improper $\omega(\Theta_t|\Theta_s)$ for logarithmic loss yields
\begin{equation}
\mathcal{R}_I \geq  D_{\textup{KL}}(P_{\theta^*_t}\|P_{\tilde{\theta}_t}),
\end{equation}
where $\tilde{\theta}_t = \min_{\theta_t \in\operatorname{supp}(\omega(\Theta_t|\Theta^*_s))} D_{\textup{KL}}(P_{\theta^*_t}\|P_{{\theta_t}})$.
\end{proposition}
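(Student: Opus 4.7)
The plan is to combine Theorem~\ref{thm:excessrisk-log} with a posterior-concentration argument under Bayesian misspecification. First I would use Theorem~\ref{thm:excessrisk-log} to rewrite
\[
\mathcal{R}_I \;=\; \mathbb{E}_{D^n_t, D^m_s}\bigl[D_{\textup{KL}}\bigl(P_{\theta^*_t} \,\big\|\, \bar{Q}(\cdot\,|\,D^n_t, D^m_s)\bigr)\bigr],
\]
where $\bar{Q}(z\,|\,D^n_t, D^m_s) = \int P_{\theta_t}(z)\,Q(\theta_t, \theta_s\,|\,D^m_s, D^n_t)\,d\theta_t\,d\theta_s$, and note that by Bayes' rule the posterior $Q(\theta_t, \theta_s\,|\,D^m_s, D^n_t)$ is automatically supported inside $\operatorname{supp}(\omega(\Theta_s, \Theta_t))$. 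Because $\omega(\Theta_s)$ is proper, a standard Bayesian consistency argument (exploiting the parametric regularity conditions in Assumption~\ref{asp:para-trans}) yields that $Q(\theta_s\,|\,D^m_s)$ concentrates at $\theta^*_s$ as $m \to \infty$.

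Next I would analyse the conditional posterior $Q(\theta_t\,|\,\theta_s, D^n_t)$ when $\theta_s$ is localised in a neighbourhood of $\theta^*_s$. Because $\omega(\Theta_t\,|\,\Theta_s)$ is improper in the sense of Definition~\ref{def:proper-prior}, the true parameter $\theta^*_t$ lies outside $\operatorname{supp}(\omega(\Theta_t\,|\,\theta^*_s))$, so Bayesian inference on this slice is genuinely misspecified. The average log-likelihood $\tfrac{1}{n}\sum_k \log P_{\theta_t}(Z_t^{(k)})$ converges uniformly to $-D_{\textup{KL}}(P_{\theta^*_t}\|P_{\theta_t})$ up to a constant, so a Laplace-type argument (cutting the support into a shrinking ball around the minimiser $\tilde{\theta}_t$ and its complement) shows that $Q(\theta_t\,|\,\theta_s, D^n_t)$ concentrates on $\tilde{\theta}_t$. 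Combined with the $\theta_s$ localisation from the previous step, this implies that $\bar{Q}(\cdot\,|\,D^n_t, D^m_s) \to P_{\tilde{\theta}_t}$ weakly, almost surely under the true joint law, as $m, n \to \infty$.

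Finally, I would invoke the lower semicontinuity of $D_{\textup{KL}}$ in its second argument, together with Fatou's lemma applied to the expectation over $(D^n_t, D^m_s)$, to deduce
\[
\liminf_{m,n \to \infty} \mathcal{R}_I \;\geq\; D_{\textup{KL}}\bigl(P_{\theta^*_t}\,\big\|\,P_{\tilde{\theta}_t}\bigr),
\]
which is exactly the claimed bound in the asymptotic regime declared in the proposition. The main obstacle will be the misspecified posterior concentration step: I must control the conditional Laplace expansion uniformly in $\theta_s$ over a neighbourhood of $\theta^*_s$ so that the two limits ($m \to \infty$ and $n \to \infty$) can be combined consistently, and I must also handle the fact that $\tilde{\theta}_t$ can lie on the boundary of $\operatorname{supp}(\omega(\Theta_t\,|\,\theta^*_s))$. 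I would address the latter by using a uniform KL-gap argument to show that the posterior mass outside any fixed neighbourhood of $\tilde{\theta}_t$ decays exponentially in $n$, irrespective of boundary geometry, which suffices for the weak convergence needed in the closing step.
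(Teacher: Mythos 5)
Your proposal is correct in the asymptotic sense the proposition intends, but it takes a genuinely different route from the paper's own proof. Both arguments share the same opening moves: rewrite $\mathcal{R}_I$ via Theorem~\ref{thm:excessrisk-log} as the expected divergence between $P_{\theta^*_t}$ and the predictive mixture, and use properness of $\omega(\Theta_s)$ to collapse the source posterior onto $\theta^*_s$ as $m\to\infty$, so that only the conditional prior $\omega(\cdot|\theta^*_s)$ and its induced target posterior $P_{\omega}(\theta_t|D^n_t)$ remain. From there the paper finishes in essentially one line: it bounds $\mathbb{E}\left[\log\int P_{\theta_t}(Z'_t)\,P_{\omega}(\theta_t|D^n_t)\,d\theta_t\right]$ above by $\max_{\tilde{\theta}_t\in\operatorname{supp}(\omega(\Theta_t|\Theta^*_s))}\mathbb{E}\left[\log P_{\tilde{\theta}_t}(Z'_t)\right]$, i.e.\ it asserts that the predictive mixture cannot beat the best single element of the prior's support in expected log-likelihood, and the stated KL lower bound drops out immediately. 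You instead establish misspecified (Berk-type) posterior concentration onto the KL minimiser $\tilde{\theta}_t$, conclude that the predictive law converges to $P_{\tilde{\theta}_t}$, and close with lower semicontinuity of $D_{\textup{KL}}$ in its second argument plus Fatou. Your route is longer and needs the uniform Laplace control you flag, but it is arguably the more defensible one: the ``mixture is no better than its best component'' inequality the paper invokes is not valid for a genuinely spread-out mixture (a mixture can be strictly closer in KL to $P_{\theta^*_t}$ than every one of its components), and only becomes safe once the posterior has essentially collapsed --- which is precisely the concentration step you supply. Two shared caveats worth noting: improperness as defined in Definition~\ref{def:proper-prior} does not literally imply $\theta^*_t\notin\operatorname{supp}(\omega(\Theta_t|\theta^*_s))$ (though this is the reading the paper itself adopts), and both arguments implicitly require the minimiser $\tilde{\theta}_t$ to be unique, so that the limiting predictive law is a single $P_{\tilde{\theta}_t}$ rather than a mixture of distinct minimisers.
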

Compared to the result in Theorem~\ref{thm:excessrisk-log}, the excess risk in this case does not converge to zero even when $n$ goes to infinity because it can be shown that $\tilde{\theta}_t$ does not coincide with $\theta^*_t$. In other words, if the wrong prior information about the parameters leads to a choice of an improper prior distribution, the posterior of $\theta_t$ will approach $\tilde{\theta}_t$ instead of the true parameter $\theta^*_s$ and the source data will hurt the performance instead even if we have abundant target data. It can be shown that a similar phenomenon occurs in online predictions, as characterized by the following proposition.
\begin{proposition}[Negative Transfer for Online Transfer Learning]\label{prop:neg-online}
Let $\mathcal{R}^{\omega(\Theta_s,\Theta_t)}_O (n)$ denote the regret induced by the mixture strategy  $Q(D^n_t|D^m_s)$ with the source data and the prior $\omega(\Theta_s,\Theta_t)$. Under logarithmic loss, if $\omega(\Theta_s)$ is proper but $\omega(\Theta_t|\Theta_s)$ is improper\footnote{We say $\omega(\Theta_t|\Theta_s)$ is improper if it does not satisfy conditions in Def~\ref{def:proper-prior}.}, the following inequality holds when both $n$ and $m$ are sufficiently large.
\begin{align}
     \mathcal{R}^{\omega(\Theta_s,\Theta_t)}_O(n) \geq  nD_{\textup{KL}}(P_{\theta^*_t}\|P_{\tilde{\theta}_t}),
\end{align}
where $\tilde{\theta}_t = \argmin_{\theta_t \in\operatorname{supp}(\omega(\Theta_t|\Theta^*_s))} D_{\textup{KL}}(P_{\theta^*_t}\|P_{{\theta_t}})$. Furthermore, for any proper $\hat{\omega}(\Theta_t)$, let $\mathcal{R}^{\hat{\omega}(\Theta_t)}_O (n)$ denote the regret induced by $\hat{Q}(D^n_t)$ with the prior $\hat{\omega}(\Theta_t)$ but without the source data. We have
\begin{align}
    \mathcal{R}^{\omega(\Theta_s,\Theta_t)}_O(n) > \mathcal{R}^{\hat{\omega}(\Theta_t)}_O(n).
\end{align}
\end{proposition}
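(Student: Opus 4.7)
The plan is to apply Theorem~\ref{thm:expreg-log} to rewrite
\[
\mathcal{R}^{\omega(\Theta_s,\Theta_t)}_O(n) = \mathbb{E}_{\theta^*_s,\theta^*_t}\left[\log\frac{P_{\theta^*_t}(D^n_t)}{Q(D^n_t|D^m_s)}\right],
\]
where, from Equation~(\ref{eq:mixture-online}), the predictive factors as $Q(D^n_t|D^m_s) = \int \tilde{Q}(D^n_t|\theta_s)\,Q(\theta_s|D^m_s)\,d\theta_s$ with $\tilde{Q}(D^n_t|\theta_s) := \int P_{\theta_t}(D^n_t)\,\omega(\theta_t|\theta_s)\,d\theta_t$. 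I would then analyse this expectation in two asymptotic stages: first let $m\to\infty$ to collapse the source posterior, then extract the $n$-dependence.

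In the first stage, because $\omega(\Theta_s)$ is proper, standard Bayesian consistency under Assumption~\ref{asp:para-trans} implies $Q(\theta_s|D^m_s)$ concentrates at $\theta^*_s$; combined with continuity of $\theta_s\mapsto\tilde{Q}(D^n_t|\theta_s)$, this yields $Q(D^n_t|D^m_s) = \tilde{Q}(D^n_t|\theta^*_s)(1+o_m(1))$ almost surely, so for $m$ large the regret reduces to $\mathbb{E}_{\theta^*_t}[\log(P_{\theta^*_t}(D^n_t)/\tilde{Q}(D^n_t|\theta^*_s))]$ up to a vanishing term. The central step is the second stage: since $\omega(\cdot|\theta^*_s)$ is improper, its support excludes some neighbourhood of $\theta^*_t$, so the KL projection
\[
\tilde{\theta}_t \;=\; \argmin_{\theta_t\in\operatorname{supp}(\omega(\cdot|\theta^*_s))} D_{\textup{KL}}(P_{\theta^*_t}\|P_{\theta_t})
\]
is distinct from $\theta^*_t$, forcing $D_{\textup{KL}}(P_{\theta^*_t}\|P_{\tilde{\theta}_t})>0$. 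A misspecified Laplace / Bernstein--von Mises argument (e.g.\ of Kleijn--van der Vaart type) applied to $\int P_{\theta_t}(D^n_t)\omega(\theta_t|\theta^*_s)\,d\theta_t$ yields
\[
\log\tilde{Q}(D^n_t|\theta^*_s) \;=\; \log P_{\tilde{\theta}_t}(D^n_t) \;-\; \tfrac{d}{2}\log\tfrac{n}{2\pi} \;+\; O(1)
\]
almost surely under $P_{\theta^*_t}$. Taking expectations and using the identity $\mathbb{E}_{\theta^*_t}[\log(P_{\theta^*_t}(D^n_t)/P_{\tilde{\theta}_t}(D^n_t))] = nD_{\textup{KL}}(P_{\theta^*_t}\|P_{\tilde{\theta}_t})$ gives
\[
\mathcal{R}^{\omega(\Theta_s,\Theta_t)}_O(n) \;=\; nD_{\textup{KL}}(P_{\theta^*_t}\|P_{\tilde{\theta}_t}) \;+\; \tfrac{d}{2}\log n \;+\; O(1),
\]
so for $n,m$ sufficiently large the leading linear term delivers $\mathcal{R}^{\omega(\Theta_s,\Theta_t)}_O(n) \geq nD_{\textup{KL}}(P_{\theta^*_t}\|P_{\tilde{\theta}_t})$.

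For the second inequality I would invoke Theorem~\ref{thm:consistency-scalar} (equivalently, the classical Clarke--Barron asymptotic) in the no-source regime to obtain $\mathcal{R}^{\hat{\omega}(\Theta_t)}_O(n) = \tfrac{d}{2}\log n + O(1)$; since this grows only logarithmically while the improper-prior regret is $\Omega(n)$, the strict inequality follows for all sufficiently large $n$. The main technical obstacle is the second-stage misspecified Laplace approximation: standard Bayesian asymptotics presume the truth lies in the interior of the prior support, whereas here $\tilde{\theta}_t$ may sit on the boundary of $\operatorname{supp}(\omega(\cdot|\theta^*_s))$, so the contraction rates and normalising constants need to be handled with care. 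Fortunately, boundary effects only perturb the $O(1)$ correction and leave the $\Theta(n)$ leading term intact, so the argument that drives both inequalities is robust to these details.
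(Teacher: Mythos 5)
Your proposal is correct and rests on the same core mechanism as the paper's proof: because $\omega(\Theta_s)$ is proper, the source posterior collapses at $\theta^*_s$, after which the target mixture $\int P_{\theta_t}(D^n_t)\,\omega(\theta_t|\theta^*_s)\,d\theta_t$ behaves like $P_{\tilde{\theta}_t}(D^n_t)$ with $\tilde{\theta}_t$ the KL projection of $\theta^*_t$ onto $\operatorname{supp}(\omega(\cdot|\theta^*_s))$, giving the $\Omega(n)$ regret. The technical execution differs in both halves, however. For the first inequality, the paper avoids any Laplace machinery: it simply upper-bounds the expected log-mixture by $\max_{\theta_t\in\operatorname{supp}(\omega(\cdot|\theta^*_s))}\mathbb{E}_{\theta^*_t}[\log P_{\theta_t}(D^n_t)]$ and identifies the maximizer with $\tilde{\theta}_t$ via the duality between minimizing KL and maximizing cross-entropy, then uses the i.i.d.\ tensorization $D_{\textup{KL}}(P_{\theta^*_t}(D^n_t)\|P_{\tilde{\theta}_t}(D^n_t))=nD_{\textup{KL}}(P_{\theta^*_t}\|P_{\tilde{\theta}_t})$. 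Your misspecified Laplace / Kleijn--van der Vaart expansion buys a sharper statement (the full $nD_{\textup{KL}}+\tfrac{d}{2}\log n+O(1)$ asymptotics) at the cost of importing heavier asymptotic theory and having to fuss over the boundary location of $\tilde{\theta}_t$; the paper's cruder bound sidesteps that entirely. For the strict comparison with the no-source regret, the paper does \emph{not} compare growth rates: it directly evaluates $\mathbb{E}_{\theta^*_s,\theta^*_t}\bigl[\log\bigl(Q(D^m_s)\hat{Q}(D^n_t)/Q(D^n_t,D^m_s)\bigr)\bigr]$, rewrites it as $-\mathbb{E}\bigl[\log\int\int\hat{Q}(\theta_t|D^n_t)\tfrac{\omega(\theta_t|\theta_s)}{\hat{\omega}(\theta_t)}\,d\theta_t\,Q(\theta_s|D^m_s)\,d\theta_s\bigr]$, and shows positivity from posterior concentration plus the vanishing of $\omega(\cdot|\theta^*_s)$ near $\theta^*_t$. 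Your rate-comparison argument ($\Omega(n)$ versus the Clarke--Barron $O(\log n)$) is arguably cleaner once the first inequality is in hand, and it yields the stronger conclusion that the gap itself is $\Omega(n)$ rather than merely positive. Both you and the paper implicitly strengthen "improper" to "the support of $\omega(\cdot|\theta^*_s)$ excludes a neighbourhood of $\theta^*_t$" so that $D_{\textup{KL}}(P_{\theta^*_t}\|P_{\tilde{\theta}_t})>0$; this reading is needed for the argument but is shared with the paper, so it is not a gap in your proposal.
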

For online transfer learning, if we misapply the wrong prior information, the rate of regret is $\Omega(n)$ while the regret is of $O(\log n)$ without the source data introduced. Such improper prior leads to a higher regret and the negative transfer is characterized by the parameters that are the closest to the true parameters $\theta^*_t$ in terms of the KL divergence. It immediately follows that the averaged regret $\frac{1}{n}\mathcal{R}^{\omega(\Theta_s,\Theta_t)}_O(n)$ does not asymptotically go to zero. As such, we concretely demonstrate when the negative transfer will happen for both ITL and OTL scenarios and theoretically confirm that the regret is quantitatively captured by the KL divergence between the true distribution and the estimated posterior. 

\subsubsection*{Positive Transfer}
In contrast, if $\omega(\Theta_s,\Theta_t)$ is chosen properly, we can always find a prior such that the knowledge transfer from source data encourages lower regret, leading to the \emph{positive transfer}. Take the Bernoulli case as an example again, without the source, the prior $\hat{\omega}(\Theta_t)$ can be selected as a uniform distribution over the whole probability range $[0,1]$. With the source, if the prior knowledge $\omega(\Theta_s|\Theta_t)$ could encourage a tighter support near $\theta^*_t$, say $[\theta^*_t - c, \theta^*_t + c] \subset [0,1]$, then there will exist some proper prior that leads to a lower regret. It can be interpreted that the source data narrow down the uncertainty on the choice of $\theta^*_t$ and it is more likely to obtain a more accurate estimation. To mathematically interpret the positive transfer, we firstly consider the OTL case under logarithmic loss and establish the following proposition. 
\begin{proposition}[Positive Transfer for OTL]\label{claim:positive}
When both $n$ and $m$ are sufficiently large, for any proper $\hat{\omega}(\Theta_t)$, we can always find a proper prior $\omega(\Theta_s,\Theta_t)$ satisfying the following inequality if the support of $\omega(\Theta_t|\Theta_s)$ is a proper subset of $\Lambda$ for any $\|\theta_s - \theta^*_s\| \leq \delta_s$ with some $\delta_s > 0$.
\begin{align}
    \mathcal{R}^{\omega(\Theta_s,\Theta_t)}_O(n) < \mathcal{R}^{\hat{\omega}(\Theta_t)}_O(n)
\end{align}
\end{proposition}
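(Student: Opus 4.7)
The plan is to read off the asymptotic regret expansions already established for both the with-source and without-source predictors and reduce the claim to a strict inequality between the prior evaluations $\omega(\theta_t^*|\theta_s^*)$ and $\hat\omega(\theta_t^*)$; I then exhibit a concrete proper prior achieving this inequality. In the scalar-parameter case, subtracting the limits in Theorem~\ref{thm:consistency-scalar} and Remark~\ref{remark:consist} gives
\[
\mathcal{R}_O^{\omega(\Theta_s,\Theta_t)}(n)-\mathcal{R}_O^{\hat\omega(\Theta_t)}(n) \;\longrightarrow\; \log\frac{\hat\omega(\theta_t^*)}{\omega(\theta_t^*|\theta_s^*)},
\]
so it suffices to make the right-hand side negative. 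In the general multidimensional case, Theorem~\ref{theorem:gene-para} combined with Equation~(\ref{eq:online_general_without}) shows that the with-source regret has a strictly smaller leading term in $n$ whenever $j\ge 1$ (see Corollary~\ref{coro:rate-gene-online}), so the only possibly adversarial term is once again the prior ratio. Handling this prior ratio therefore handles both cases simultaneously.

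\textbf{Construction.} Set $r:=\hat\omega(\theta_t^*)$, which is finite and strictly positive because $\hat\omega$ is proper. Because $\theta_t^*$ lies in the interior of $\Lambda$ (Assumption~\ref{asp:para-dist}), choose a compact set $A\subsetneq\Lambda$ containing an open neighborhood of $\theta_t^*$, and pick any continuous probability density $\rho$ on $A$ with $\rho(\theta_t^*)=2r$ (for instance a tight Gaussian-like bump centered at $\theta_t^*$ normalized to integrate to $1$; the peak can be pushed above $2r$ by shrinking the bandwidth). Define the joint prior as the product $\omega(\Theta_s,\Theta_t):=\hat\omega(\Theta_s)\,\rho(\Theta_t)$. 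Then $\omega(\Theta_s)=\hat\omega(\Theta_s)$ is proper by hypothesis; the conditional $\omega(\Theta_t|\Theta_s=\theta_s)=\rho(\Theta_t)$ is strictly positive on a neighborhood of $\theta_t^*$ for \emph{every} $\theta_s$ (so in particular for $\|\theta_s-\theta_s^*\|\le\delta_s$), while its support is $A$, a proper subset of $\Lambda$. Hence $\omega$ is proper per Definition~\ref{def:proper-prior} and meets the hypothesis of the proposition, and $\omega(\theta_t^*|\theta_s^*)=\rho(\theta_t^*)=2r>r=\hat\omega(\theta_t^*)$.

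\textbf{Conclusion and main difficulty.} With this prior, the asymptotic difference of regrets equals $-\log 2+(\text{non-positive contribution from common parameters})$, so for all sufficiently large $n$ (and $m$ in the general setting) a standard $\varepsilon$-argument with $\varepsilon<\tfrac{1}{2}\log 2$ converts the limiting strict inequality into the required finite-sample inequality $\mathcal{R}_O^{\omega(\Theta_s,\Theta_t)}(n)<\mathcal{R}_O^{\hat\omega(\Theta_t)}(n)$. The main technical subtlety is ensuring that the conditional density can be chosen to simultaneously (i) be supported on a proper subset of $\Lambda$, (ii) be positive on a joint neighborhood of $(\theta_s^*,\theta_t^*)$ as required by Definition~\ref{def:proper-prior}, and (iii) strictly exceed $\hat\omega(\theta_t^*)$ at the true target parameter. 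The product construction sidesteps (i)–(ii) by reducing the conditional to a single marginal $\rho$ that does not depend on $\theta_s$, and (iii) is always achievable because $\rho$ can be concentrated arbitrarily sharply on the bounded set $A$ while still being normalized.
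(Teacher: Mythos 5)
Your proof is correct, and it reaches the same essential inequality as the paper --- namely that positive transfer reduces to making $\omega(\theta_t\mid\theta_s)$ exceed $\hat{\omega}(\theta_t)$ near $\theta^*_t$ --- but it gets there by a different route. The paper works directly with the exact regret difference, writes it as $-\mathbb{E}\bigl[\log\int\hat{Q}(\theta_t|D^n_t)\tfrac{\omega(\theta_t|\theta^*_s)}{\hat{\omega}(\theta_t)}\,d\theta_t\,Q(\theta_s|D^m_s)\,d\theta_s\bigr]$, localizes the integrals to neighborhoods of $\theta^*_s,\theta^*_t$ via posterior concentration, and then chooses $\omega(\theta_t|\theta_s)=\hat{\omega}(\theta_t)+\tfrac{\Delta}{2\delta_t}$ on that neighborhood so the integrand dominates $1$ pointwise; this only needs the ratio of integrated prior masses to exceed one, not the precise limiting constants. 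You instead subtract the already-established asymptotic expansions (Theorem~\ref{thm:consistency-scalar} versus Equation~(\ref{eq:withoutsource}), and Theorem~\ref{theorem:gene-para} versus Equation~(\ref{eq:online_general_without})), which collapses the comparison to the single-point ratio $\log\tfrac{\hat{\omega}(\theta^*_t)}{\omega(\theta^*_t|\theta^*_s)}$, and you supply a fully explicit product prior $\hat{\omega}(\Theta_s)\rho(\Theta_t)$ with $\rho(\theta^*_t)=2\hat{\omega}(\theta^*_t)$. Your version is shorter and more transparent given the earlier theorems, verifies properness and the support hypothesis more carefully than the paper does, and explicitly covers the multidimensional case with shared parameters (where the with-source leading order is already strictly smaller when $j\geq 1$); the cost is that it leans on the full constant-level asymptotics of Theorem~\ref{thm:consistency-scalar}, whereas the paper's argument is self-contained and needs only posterior concentration. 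One cosmetic remark: your conditional prior is independent of $\theta_s$, so the "transfer" is vacuous in spirit --- but the paper's own construction has exactly the same feature, and the proposition only asserts existence, so this is not a gap.
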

In our claim, if $\omega(\Theta_t|\Theta_s)$ encourages a tighter support over $\Theta_t$, making use of source data appropriately can narrow down the uncertainty range over $\Theta_t$. It then follows that there will always exist a prior that assigns more concentrated mass around $\theta^*_t$, which reduces the expected regret. 

  For the ITL scenario, from Theorem~\ref{thm:inst-scalar} one can see that the prior knowledge is not revealed in characterizing the excess risk since the data overwhelms the prior as both $n$ and $m$ goes sufficiently large. However, the prior knowledge still plays an important role when the sample sizes are limited. To see this, by generalizing Theorem 1 from \cite{haussler1995general}, for any $n$ and $m$, the excess risk of ITL in~(\ref{eq:excess-risk}) can be upper bounded by
\begin{align}
    \mathcal{R}_I \leq -\log \int P(\theta_t|D^n_t)\frac{\omega(\theta_t|\theta_s)}{\hat{\omega}(\theta_t)} e^{-n D_{KL}(P_{\theta^*_t}\| P_{\theta_t})}d\theta_t P(\theta_s|D^m_s) d\theta_s,
\end{align}
where $P(\theta_t|D^n_t) = \frac{P_{\theta_t}(D^n_t)\hat{\omega}(\theta_t)}{P(D^n_t)}$ denotes the posterior induced by $\hat{\omega}(\theta_t)$. Here we allow $m$ goes to infinity (abundant source data) but let $n$ be a constant (limited target data). With the proper $\omega(\Theta_s)$, the posterior is approximated as $P(\theta_s|D^m_s) = \delta(\theta^*_s)$, then we can rewrite the inequality for any $n \in \mathbb{N}$, 
\begin{align}
    \mathcal{R}_I \leq -\log \int P(\theta_t|D^n_t)\frac{\omega(\theta_t|\theta^*_s)}{\hat{\omega}(\theta_t)} e^{-n D_{KL}(P_{\theta^*_t}\| P_{\theta_t})}d\theta_t. \label{eq:single_pos}
\end{align}
Compared to the results without the source, we have the bound
\begin{align}
    \mathcal{R}_I \leq -\log \int P(\theta_t|D^n_t) e^{-n D_{KL}(P_{\theta^*_t}\| P_{\theta_t})}d\theta_t. \label{eq:single_pos2}
\end{align}
It is observed that the ratio $\frac{\omega(\theta_t|\theta^*_s)}{\hat{\omega}(\theta_t)}$ still plays a role in this non-asymptotic bound. Informally speaking, to achieve positive transfer in the single prediction, one may select $\omega(\theta_t|\theta^*_s)$ to be large for those $\theta_t$ that leads to low divergence $D_{KL}(P_{\theta^*_t}\| P_{\theta_t})$ and vice versa. As a consequence, the RHS in~(\ref{eq:single_pos}) will be smaller than the RHS in~(\ref{eq:single_pos2}). However, due to the fact that the posterior distribution $P(\theta_t|D^n_s)$ cannot be well estimated with finite $n$, it is relatively difficult to tell whether the positive transfer will happen given a specific prior. Rather, we intuitively explain the role of prior knowledge from its non-asymptotic upper bound. 

\subsection{Bernoulli Example}\label{subsec:Bernoulli}
In this section, we illustrate the results presented for OTL case using a simple Bernoulli example, the results can be calculated using the same procedures for ITL and TVTL cases. We assume the parametric distributions are $P_{\theta^*_s} \sim \text{Ber}(\theta^*_s)$ and $P_{\theta^*_t} \sim \text{Ber}(\theta^*_t)$ for $\theta^*_s, \theta^*_t \in [0,1]$, that is, $P(Z^{(k)}_t = 1) = \theta^*_t$ and $P(Z^{(k)}_s = 1) = \theta^*_s$ and we also assume $\theta^*_s \neq \theta^*_t$. Given a batch of source data $D^m_s$ with $m$ samples i.i.d. drawn from $P_{\theta^*_s}$ and $n$ target samples $D^n_t$ i.i.d. drawn from $P_{\theta^*_t}$ sequentially, we will make predictions for each target sample $Z^{(k)}_t$ at time $k$ based upon the received target data $D^{k-1}_t$ and source data $D^m_s$. Under the logarithmic loss $\ell$,  let $\ell(b_k, Z^{(k)}_t) = Q(Z^{(k)}_t|D^{k-1}_t,D^m_s)$ and $\ell(b^*_k, Z^{(k)}_t)= P_{\theta^*_t}(Z^{(k)}_t)$, we are interested in the expected regret $\mathcal{R}_O$ as:
\begin{align}
\mathcal{R}_O &= \mathbb{E}_{\theta^*_s,\theta^*_t} \left[\sum_{k=1}^{n} \ell(b_{k}, Z^{(k)}_t) - \sum_{k=1}^{n} \ell(b_k^*, Z^{(k)}_t) \right] \\
&=\sum_{D^m_s}\sum_{D^n_t} P_{\theta^*_s}\left(D^m_s \right) P_{\theta^*_t}\left(D^n_t \right) \log \left(\frac{P_{\theta^*_t}\left(D^n_t \right)}{Q\left(D^n_t| D^m_s\right)}\right).
\end{align}

\subsubsection*{Without Prior Knowledge}
In this section, we analytically show that without any prior knowledge on the relationships between the source and target data (apart from the assumption $\theta^*_s \neq \theta^*_t$), the transfer learning algorithms lead to negative transfer in the worst case. To see this, by assigning a prior distribution $\omega(\Theta_t,\Theta_s)$ over $[0,1]^2$, we firstly define the distribution $Q$ with the mixture strategy as
\begin{align}
Q\left(D^n_t| D^m_s\right) &=  \frac{\int_{0}^{1} \int_0^1 \omega(\theta_s,\theta_t) P_{\theta_t}(D^n_t)P_{\theta_s}(D^m_s)  d\theta_sd\theta_t }{\int_{0}^{1}  \omega(\theta_s) P_{\theta_s}(D^m_s) d\theta_s},
\end{align}
on account of the assumption $P_{\theta_s,\theta_t}(D^k_t,D^m_s) = P_{\theta_t}(D^k_t)P_{\theta_s}(D^m_s)$. Without any prior knowledge, we will arbitrarily choose two types of priors $\omega(\Theta_s,\Theta_t)$ and analytically examine their regrets. We start by looking at the case where the joint prior distribution is assumed to be $\omega(\Theta_s, \Theta_t) = \Theta_s + \Theta_t$, which gives the marginal distributions $\omega(\Theta_s) = \Theta_s + \frac{1}{2}$ and $\omega(\Theta_t) = \Theta_t + \frac{1}{2}$. By knowing $\theta_s$, there is still some uncertainty over $\Theta_t$. It can be seen from the conditional distribution
\begin{align}
  \omega(\Theta_t|\theta_s) = \frac{\theta_s}{\theta_s+\frac{1}{2}} + \frac{\Theta_t}{\theta_s + \frac{1}{2}}.
\end{align}
The predictor distribution $Q(D^n_t|D^m_s)$ can be calculated explicitly as
\begin{align}
Q\left(D^n_t| D^m_s\right) &= \frac{1}{(n+1)}\frac{1}{\left(^{n}_{{k_t}}\right)} \frac{ 2k_s + 2 + (k_t+1) \frac{2(m+2)}{n+2} }{m+2k_s + 4}, 
\end{align}
where we denote number of 1's received from the source and target by $k_t$ and $k_s$, respectively. If $m, n$ are sufficiently large and $\mathbb{E}_{\theta^*_s}[k_s] = \theta^*_s m $ and $\mathbb{E}_{\theta^*_ts}[k_t] = \theta^*_t n $, we expect over a long sequence that the conditional mutual information can be calculated as
%
\begin{align}
    \mathcal{R}^{{\omega}(\Theta_t,\Theta_s)}_O(n) = & \frac{1}{2}\log(n+1) + \frac{1}{2}\log\frac{1}{\pi \theta^*_t (1 - \theta^*_t)}  + \log \frac{\theta^*_s + \frac{1}{2}}{\theta^*_t + \theta^*_s}, \label{eq:bernoulli-with-weak}
\end{align}
Compared to the mixture distribution $Q$ by the marginal $\hat{\omega}(\theta_t) = \theta_t + \frac{1}{2}$ without introducing the source
\begin{align}
Q\left(D^n_t\right) &= \int^1_0 (\theta_t+\frac{1}{2}) (\theta_t)^{k_t} (1-\theta_t)^{n-k_t}d\theta_t =\frac{1}{(n+1)}\frac{1}{\left(^{n}_{{k_t}}\right)} (\frac{1}{2} + \frac{k_t+1}{n+2} ),
\end{align}
and the expected regret induced by this predictor can be calculated as
\begin{align}
    \mathcal{R}^{\hat{\omega}(\Theta_t)}_O(n)  &= \frac{1}{2}\log(n+1) + \frac{1}{2}\log \frac{1}{\pi \theta^*_t (1 - \theta^*_t)} + \log \frac{1}{\theta^*_t + \frac{1}{2}}. \label{eq:bernoulli-without-weak}
\end{align}
The difference of the regrets in Equation~(\ref{eq:bernoulli-with-weak}) and (\ref{eq:bernoulli-without-weak}) is $\log \frac{\omega(\theta_t)}{\omega(\theta_t|\theta_s)} = \log \frac{(\theta^*_s + \frac{1}{2})(\theta^*_s + \frac{1}{2})}{\theta^*_t + \theta^*_s}$. From this example, due to the fact that we do not have any specific prior knowledge over $\theta^*_t$ and $\theta^*_s$, the prior distribution $\omega(\theta_t|\theta_s)$ may not lead to a better estimation compared to the case without the source data, and the expected regret with the source can larger (e.g., if $(\theta^*_s - \frac{1}{2})(\theta^*_t - \frac{1}{2}) > 0$) in the worst case and the negative transfer will happen. Additionally, even when $m$ goes to infinity and the source data does not change the convergence rate w.r.t. $n$, this result confirms the Theorem~\ref{thm:consistency-scalar} numerically. 

If we consider another extreme case where the prior distribution $\omega(\Theta_t|\Theta_s) = \delta(\Theta_s)$ and $\omega(\Theta_s) = 1$, that is, $\Theta_s$ is uniformly distributed and knowing $\Theta_s$ is equivalent to knowing $\Theta_t$.  Then we obtain the mixture as
\begin{align}
Q\left(D^n_t| D^m_s\right)  = \frac{\frac{1}{n+m+1}\frac{1}{\left(^{m+n}_{k_s+k_t}\right)}}{\frac{1}{m+1}\frac{1}{\left(^m_{k_s}\right)}}.
\end{align}
Analogously we expect over a large $m$, where $k_s = \theta^*_s m$ and $k_t = \theta^*_t n$ and the source samples are abundant, e.g., $m \gg n $. We have
\begin{align}
&  \mathcal{R}^{{\omega}(\Theta_t,\Theta_s)}_O(n) \leq \log(1+\frac{n}{m+1})  + n\left( \theta^*_t \log \frac{\theta^*_t}{\theta^*_s} + (1-\theta^*_t) \log \frac{1 - \theta^*_t}{1 - \theta^*_s} \right) + C,
\end{align}
where $C$ is some constant that depends on $\theta^*_t$ and $\theta^*_s$. By introducing abundant source data, $\log(1+\frac{n}{m+1})$ term will vanish with the rate $O(\frac{n}{m})$ but it will also introduce the term $n\left( \theta^*_t \log \frac{\theta^*_t}{\theta^*_s} + (1-\theta^*_t) \log \frac{1 - \theta^*_t}{1 - \theta^*_s} \right)$ that grows linearly in $n$. This specific choice of prior is \emph{improper} since $\omega(\theta^*_t | \theta^*_s) = 0$ whereas $\theta^*_t \neq \theta^*_s$, which will finally lead to the inaccurate estimation of $\theta^*_t$ while both $m$ and $n$ are sufficiently large. This result is unsurprising since we can estimate $\theta^*_s$ accurately and $\omega(\theta_t |\theta^*_s)$ enforces $\theta_t = \theta^*_s$ and the regret for each sample is at least $D(P_{\theta^*_t}\|P_{\theta^*_s})$. We can confirm it by calculating the true regrets precisely as $m$ is sufficiently large,
\begin{equation}
     \mathcal{R}^{{\omega}(\Theta_t,\Theta_s)}_O(n) =  n\left( \theta^*_t \log \frac{\theta^*_t}{\theta^*_s} + (1-\theta^*_t) \log \frac{1 - \theta^*_t}{1 - \theta^*_s} \right) = n D_{\textup{KL}}(P_{\theta^*_t}\|P_{\theta^*_s}).
\end{equation}
Note that the chosen prior will lead to the negative transfer, and the rate of the expected regret in this case is $O(n)$ as we proved in Proposition~\ref{prop:neg-online}. 

\subsubsection*{With Prior Knowledge}
In the previous section, we have demonstrated two choices of prior distributions for the mixture strategy, both of which lead to negative transfer in the worst case. Now we show that with some correct prior knowledge, we can always choose an appropriate $\omega$ for better predictions. Noticing that the expected regret is captured by the conditional prior $\omega(\Theta_t|\Theta_s)$, even though we do not have access to the true parameters, we may know some relationship between the source and target parameters. In particular, for $\Lambda = [0,1] \subset \mathbb{R}$, we make the following assumption.
\begin{assumption}[Prior knowledge with $\ell_1$ norm] \label{asp:l1}
For $\theta^*_s, \theta^*_t \in [0,1]$ and $c>0$, 
\begin{equation}
|\theta^*_s - \theta^*_t| \leq c.
\end{equation}
\end{assumption}
This assumption implies that $\theta^*_t$ is not far away from $\theta^*_s$, and if $\theta^*_s$ can be approximated accurately, $\theta^*_t$ can be estimated more precisely with a tighter support. We encode this particular relationship in terms of the conditional prior distribution $\omega(\Theta_t|\theta_s)$, say, given any $\theta_s$, $\Theta_t$ is uniformly distributed over $[\theta_s - c, \theta_s + c]$ with density $\frac{1}{2c}$ and the hyperparameter $c$ can be interpreted as our knowledge level. Larger $c$ indicates that we are more uncertain about $\theta^*_t$ given $\theta^*_s$ and vice versa. Additionally, we assume the marginal $\omega(\theta_s)$ is proper, e.g., $\theta^*_s$ can be estimated accurately with sufficient source samples. As a result, we can give the explicit expression of the mixture distribution $Q(D^n_t|D^m_s)$,
\begin{lemma}\label{lemma:bernoulli}
Under Assumption~\ref{asp:l1}, given any $\theta_s$, $\Theta_t$ is uniformly distributed over $[\theta_s - c, \theta_s + c]$ with density $\frac{1}{2c}$, and $\theta^*_s$ can be estimated accurately via the source samples $D^m_s$. We have,
\begin{align}
Q(D^n_t|D^m_s) =& \frac{1}{2c}\left(\begin{array}{l}
n \\
k_t
\end{array}\right)^{-1} \Bigg( \sum^{k_t}_{i=1} \left(\begin{array}{l}
n \\
i
\end{array}\right)\frac{(\theta^*_s - c)^{i}(1- \theta^*_s + c)^{n-i+1}}{n-i+1} - \frac{ (\theta^*_s + c)^{i}(1- \theta^*_s - c)^{n-i+1}}{n-i+1} \nonumber \\
& + \frac{(1-\theta^*_s + c)^{n+1}-(1- \theta^*_s - c)^{n+1}}{n+1} \Bigg).
\end{align}
\end{lemma}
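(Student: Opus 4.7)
The plan is to reduce the mixture-strategy integral to a one-dimensional integral of the form $\int \theta^{k_t}(1-\theta)^{n-k_t} d\theta$ over the conditional support, and then evaluate that integral in closed form via iterated integration by parts. First I would write the mixture predictor as
\begin{equation*}
Q(D^n_t|D^m_s) = \int\!\!\int P_{\theta_t}(D^n_t)\,\omega(\theta_t|\theta_s)\,Q(\theta_s|D^m_s)\,d\theta_t\,d\theta_s,
\end{equation*}
invoke the hypothesis that $\theta^*_s$ is estimated accurately (so $Q(\theta_s|D^m_s)$ concentrates at a point mass at $\theta^*_s$) to replace the source posterior by $\delta(\theta_s-\theta^*_s)$, and note that $P_{\theta_t}(D^n_t) = \theta_t^{k_t}(1-\theta_t)^{n-k_t}$ for a specific sequence with $k_t$ ones. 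Combined with the uniform conditional prior $\omega(\theta_t|\theta^*_s) = \frac{1}{2c}\mathbf{1}_{[\theta^*_s-c,\theta^*_s+c]}(\theta_t)$, this reduces $Q(D^n_t|D^m_s)$ to $\frac{1}{2c}\int_{\theta^*_s-c}^{\theta^*_s+c}\theta^{k_t}(1-\theta)^{n-k_t}\,d\theta$.

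Next I would compute the indefinite integral $I(k,n-k) = \int \theta^{k}(1-\theta)^{n-k}\,d\theta$ by iterated integration by parts using $u=\theta^{k}$ and $dv=(1-\theta)^{n-k}\,d\theta$. Each application shifts one power of $\theta$ onto $(1-\theta)$, producing a boundary term $-a_j\,\theta^{k-j}(1-\theta)^{n-k+j+1}$ with $a_j = \frac{k!(n-k)!}{(k-j)!(n-k+j+1)!}$; after $k$ steps the recursion terminates with a residual $-\frac{k!(n-k)!}{(n+1)!}(1-\theta)^{n+1}$. Reindexing by $i = k-j$ and using the identity $\frac{k!(n-k)!}{i!(n-i+1)!} = \binom{n}{k}^{-1}\frac{\binom{n}{i}}{n-i+1}$ recasts the antiderivative into the exact summation shape appearing in the lemma, namely
\begin{equation*}
I(k,n-k) = -\binom{n}{k}^{-1}\!\left[\sum_{i=1}^{k}\binom{n}{i}\frac{\theta^{i}(1-\theta)^{n-i+1}}{n-i+1} + \frac{(1-\theta)^{n+1}}{n+1}\right] + C.
\end{equation*}

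Finally I would evaluate at the endpoints $\theta = \theta^*_s \pm c$. The upper-limit contribution yields terms in $(\theta^*_s+c)^i(1-\theta^*_s-c)^{n-i+1}$ with a negative sign, the lower-limit contribution yields $(\theta^*_s-c)^i(1-\theta^*_s+c)^{n-i+1}$ with a positive sign, and the residual piece produces $(1-\theta^*_s+c)^{n+1}-(1-\theta^*_s-c)^{n+1}$ after the sign flip; multiplying through by $\frac{1}{2c}$ gives precisely the claimed expression.

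The principal obstacle is bookkeeping rather than conceptual difficulty: tracking signs through the iterated integration by parts, correctly reindexing the summation from $j\in\{0,\dots,k-1\}$ to $i\in\{1,\dots,k\}$, and isolating the factor $\binom{n}{k_t}^{-1}$ cleanly from $\frac{k_t!(n-k_t)!}{n!}$ without misplacing a factor of $n+1$ on the residual. One could shortcut the integration by invoking the incomplete beta identity $B(x;k+1,n-k+1) = B(k+1,n-k+1)\sum_{j=k+1}^{n+1}\binom{n+1}{j}x^j(1-x)^{n+1-j}$, but reconciling that form with the summation shape stated in the lemma requires an additional combinatorial manipulation, so the direct integration-by-parts route is the cleanest path to the exact expression.
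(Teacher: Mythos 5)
Your proposal is correct and follows essentially the same route as the paper: reduce the mixture to $\frac{1}{2c}\int_{\theta^*_s-c}^{\theta^*_s+c}\theta^{k_t}(1-\theta)^{n-k_t}\,d\theta$ via the concentrated source posterior and uniform conditional prior, then evaluate that integral by iterated integration by parts yielding the telescoping sum $\sum_{i=1}^{k}\binom{n}{i}\frac{\theta^{i}(1-\theta)^{n-i+1}}{n-i+1}$ plus the residual $\frac{(1-\theta)^{n+1}}{n+1}$. The paper phrases the same computation as a recursion on the definite integral $I(n,k)=\binom{n}{k}\int_0^a x^k(1-x)^{n-k}dx$ and subtracts the two endpoint evaluations, which is only a cosmetic difference from your antiderivative formulation.
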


\begin{remark}
It is relatively hard to directly tell the effect of $c$ from the above expression. Since we can calculate $Q(D^n_t|D^m_s)$ explicitly if all the parameters $\theta^*_s, c, n, k_t$ are known, we shall verify the intuition by conducting the experiments with different parameters, which are presented later.
\end{remark}

To examine the effect of hyperparameter $c$ under Assumption~\ref{asp:l1}, numerical experiments are conducted and results are presented. Consider the following specific settings, let the true parameter $\theta^*_t = \frac{1}{3}$. Under Assumption~\ref{asp:l1}, we assume $\omega(\Theta_s) = 1$ over $[0,1]$ and given $\theta_s$, $\Theta_t$ is distributed uniformly over $[\theta_s - c, \theta_s + c]$. As the target arrives sequentially and we have sufficient source data with $m = 100000$, we plot the predictive probability $P(Z^{(k)}_t|D^{k-1}_t,D^m_s)$ in Figure~\ref{subfigure:predprob1} and \ref{subfigure:predprob2} for a single trial with different $\theta^*_s$ and $c$.

\begin{figure}[h!]
\centering
\subfigure[$\theta^*_s = 0.4$, $c = 0.1$\label{subfigure:predprob1}]{\includegraphics[width = 1.8in]{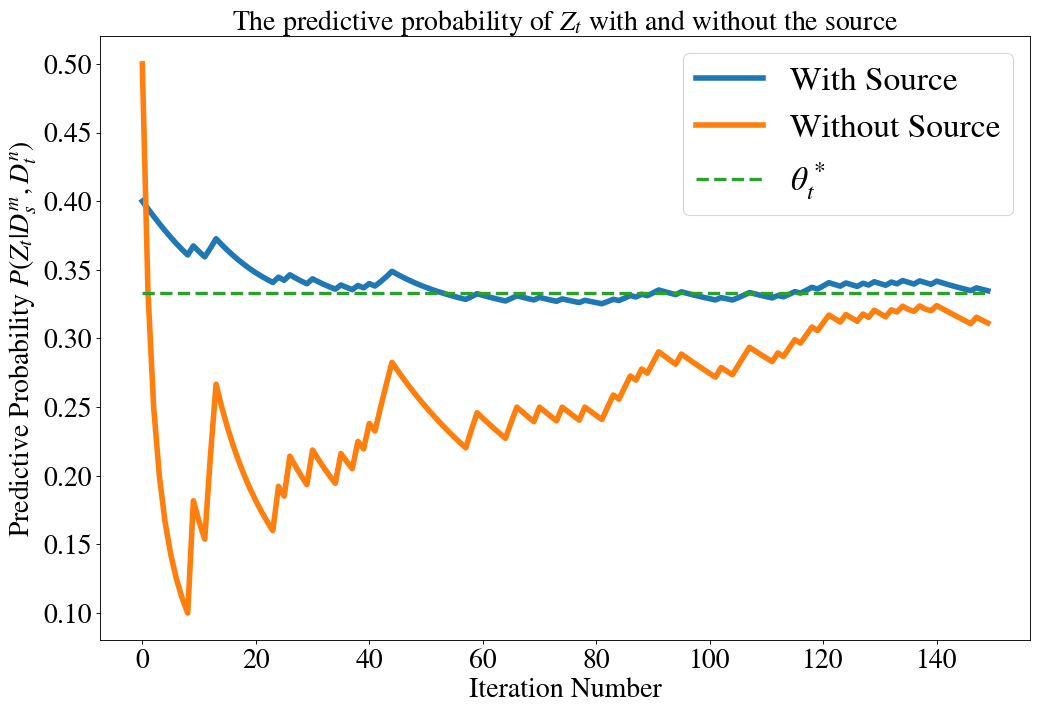}} 
\subfigure[$\theta^*_s = 0.35$, $c = 0.1$\label{subfigure:predprob2}]{\includegraphics[width = 1.8in]{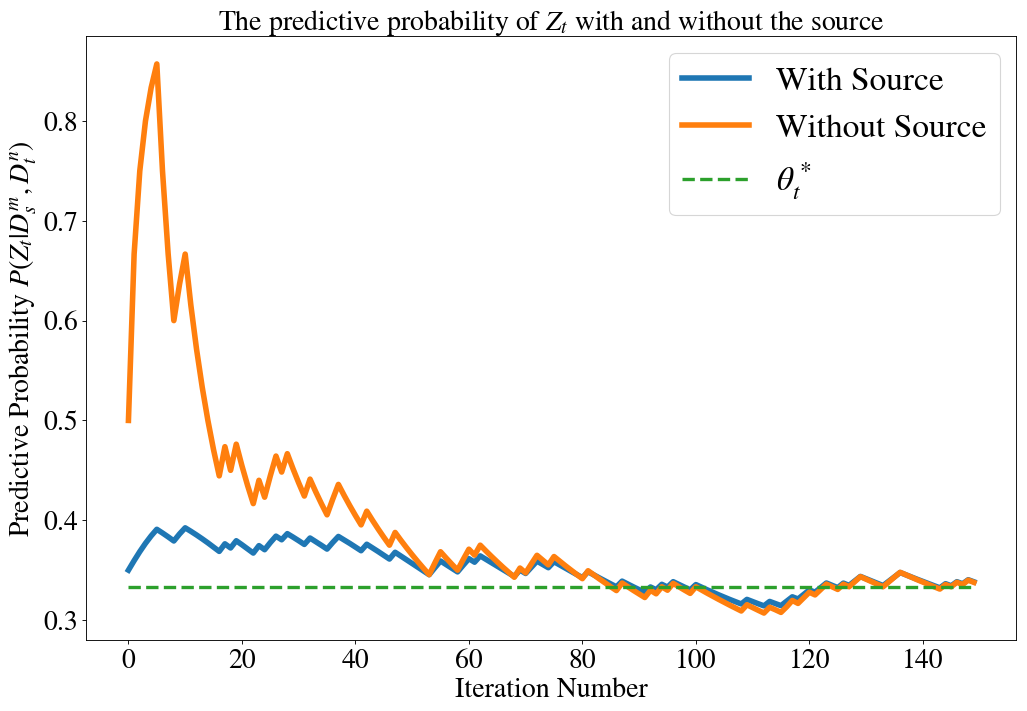}} \\
\subfigure[$\theta^*_s = 0.4$, $c = 0.1$\label{subfigure:postprob1}]{\includegraphics[width = 1.8in]{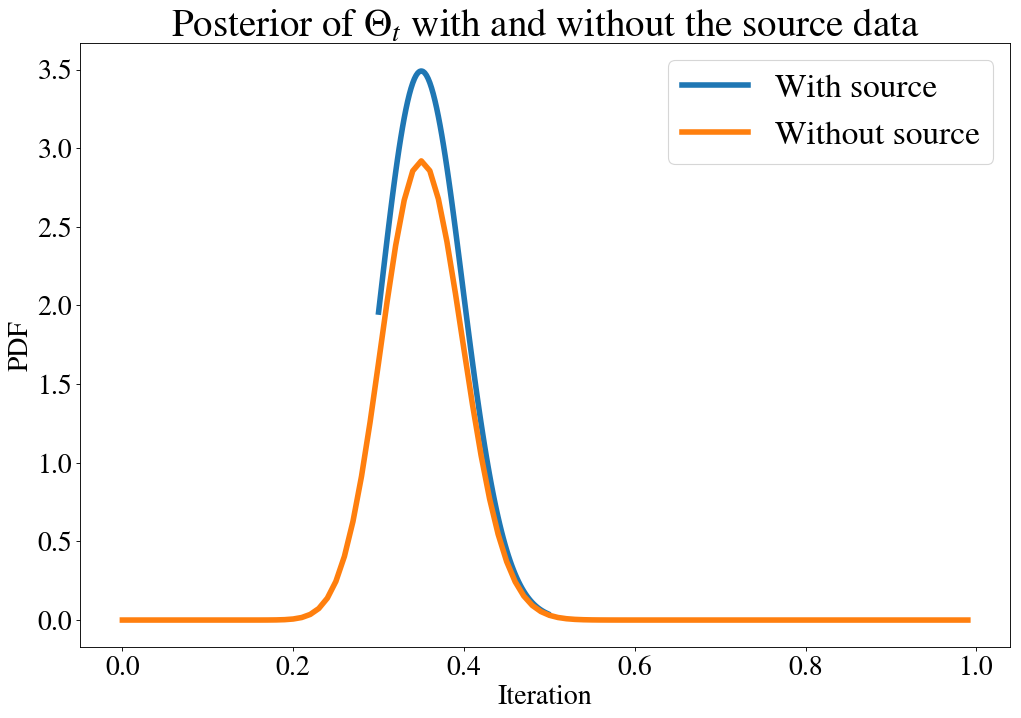}}
\subfigure[$\theta^*_s = 0.35$, $c = 0.05$\label{subfigure:postprob2}]{\includegraphics[width = 1.8in]{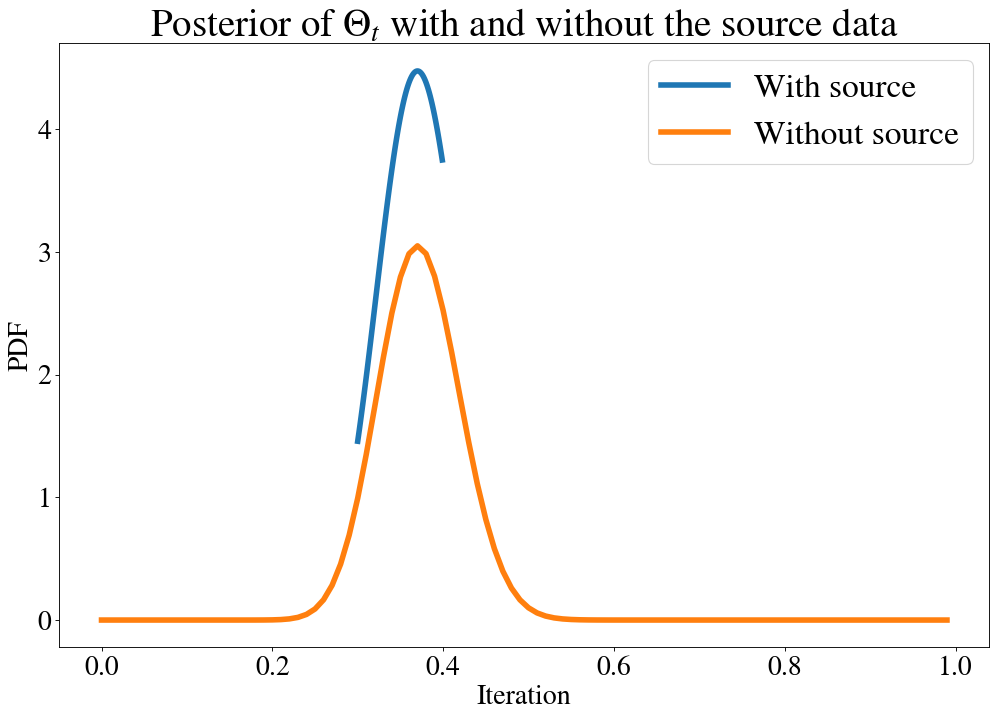}}\\
\caption{With different $c$ and $\theta^*_s$, subfigures (a) and (b) show the predictive probability $P(z^{(k)}_t|D^{k-1}_t,D^m_s)$ with single trial, the prediction the curves that are closer to $\theta^*_t =\frac{1}{3}$ entails more accurate estimation. Subfigures (c) and (d) show the probability density of the posterior $P(\Theta_t|D^n_t,D^m_s)$ (Blue) and $P(\Theta_t|D^n_t)$ (Orange) after receiving 150 target samples. The posterior with the closer $\theta^*_s$ and smaller $c$ leads to higher density around $\theta^*_t$. }\label{fig:posterior_bernoulli}
\end{figure}

From Figures~\ref{subfigure:predprob1} and \ref{subfigure:predprob2}, we observe that when $k$ is relatively small, the prior knowledge about the source and the target distribution can help estimate the true value of $\theta_t$ better than without introducing the source data. It can also be seen from  Figure \ref{subfigure:predprob2}, when $k$ is relatively large, both estimates of $\theta^*_t$ with or without the source are fairly close. Since the single trial does not fully reflect the usefulness of the source data, we next examine the posterior distribution $P(\Theta_t | D^n_t, D^m_s)$ with source and $P(\Theta_t|D^n_t)$ without source after receiving 150 target data with different $\theta^*_s$ and $c$ to see the effect of prior knowledge as shown in Figure~\ref{subfigure:postprob1} and \ref{subfigure:postprob2}. From the comparisons, the posterior distribution of $\Theta_t$ with the source data is more concentrated. Moreover, closer $\theta^*_s$ and smaller $c$ will yield more concentrated density around $\theta^*_t$, which fits in line with our intuition. 

\begin{figure}[h!]
\centering
\subfigure[$\Theta_s = 0.35$, $c = 0.05$\label{subfigure:reg3}]{\includegraphics[width = 1.8in]{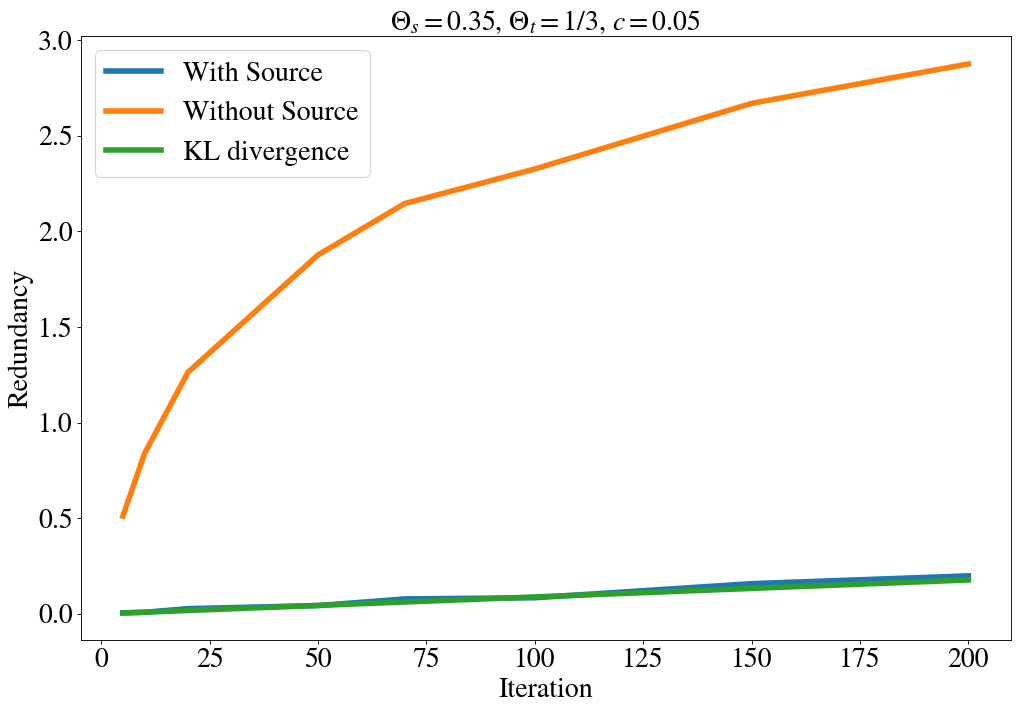}}
\subfigure[$\Theta_s = 0.35$, $c = 0.1$\label{subfigure:reg2}]{\includegraphics[width = 1.8in]{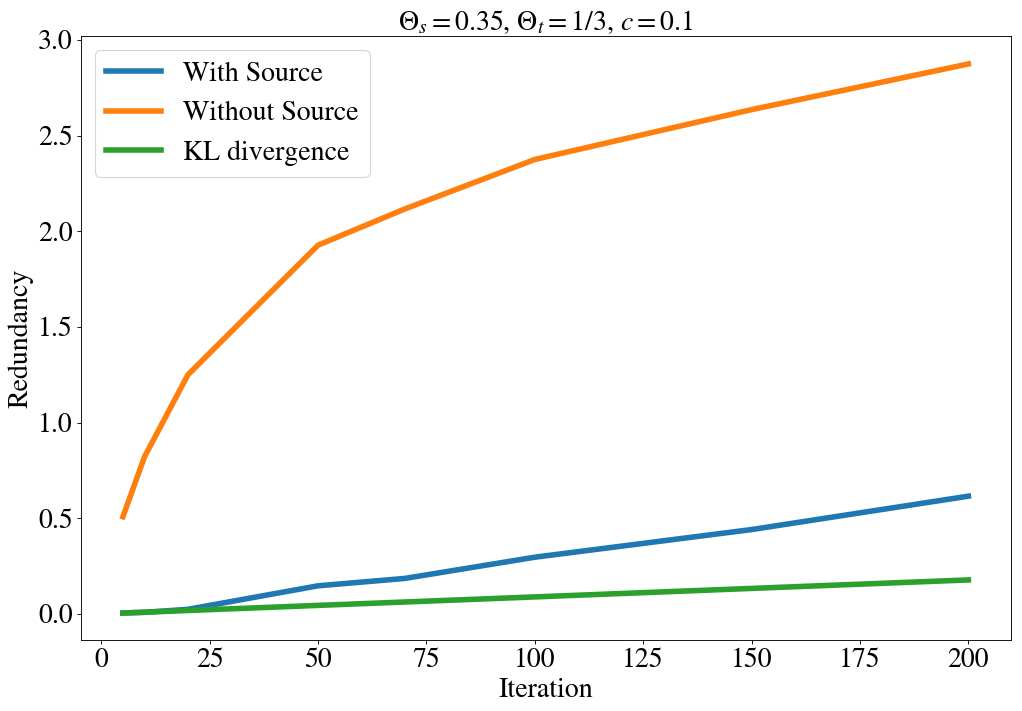}} \\
\subfigure[$\Theta_s = 0.4$, $c = 0.1$\label{subfigure:reg1}]{\includegraphics[width = 1.8in]{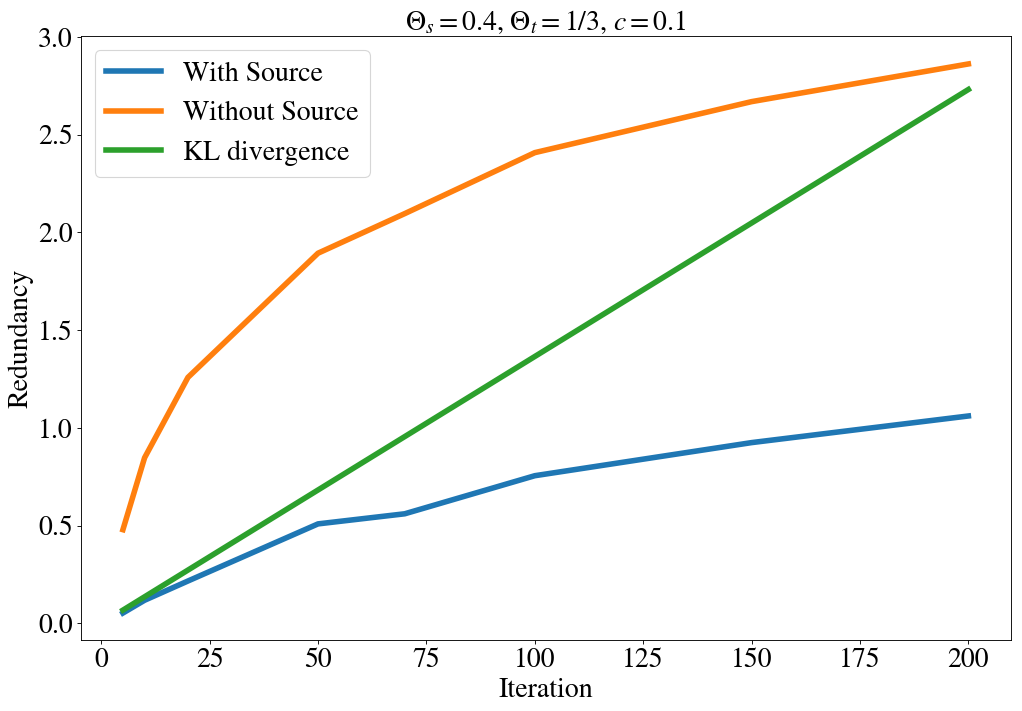}}
\subfigure[$\Theta_s = 0.4$, $c = 0.0001$\label{subfigure:reg4}]{\includegraphics[width = 1.8in]{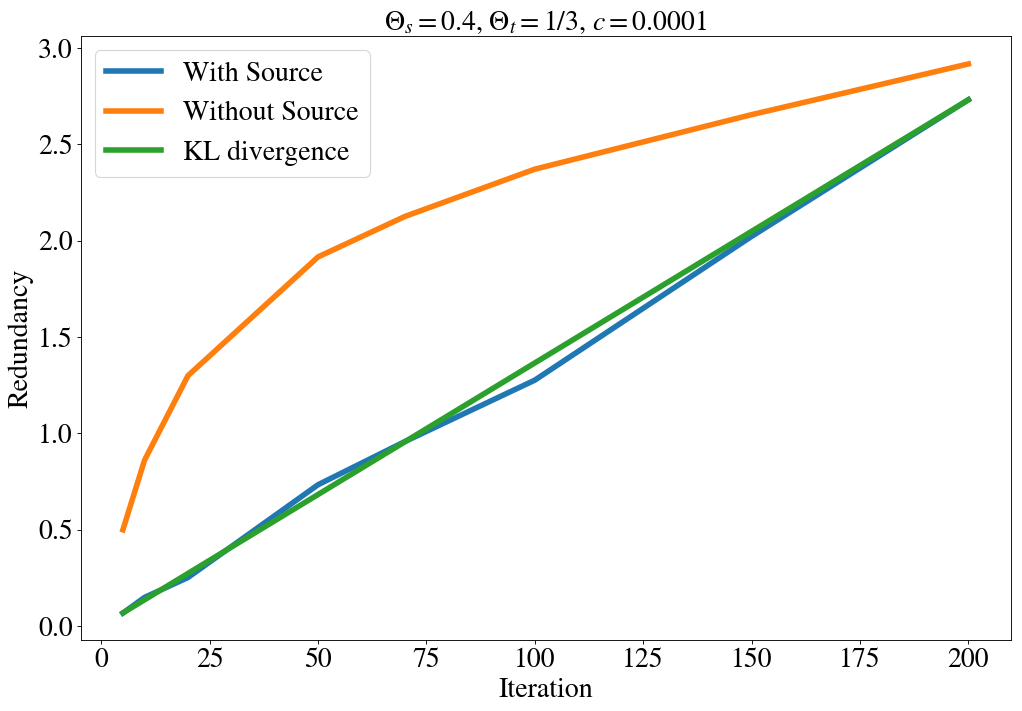}}
\caption{Expected regrets by 2000 repeats with different $c$ and $\theta^*_s$. Orange, blue and green curves represents the expected regrets without the source, with the source, and $kD_{\textup{KL}}(P_{\theta^*_t}\| P_{\theta^*_s})$, respectively.}\label{fig:regret_bernoulli}
\end{figure}

To evaluate the expected regrets, we repeat the experiments 2000 times and take the average for different number of target samples. The results are shown in Figure~\ref{fig:regret_bernoulli}. The regret curves reflect the influence of the knowledge level $c$ and the gaps between the true parameter $\theta^*_s$ and $\theta^*_t$. When $\omega(\theta_t|\theta^*_s)$ is proper, e.g., the density is concentrated around $\theta^*_t$, then smaller $c$ will yield lower regrets, which can be seen from Figure~\ref{subfigure:reg3} and~\ref{subfigure:reg2}. However, if the conditional prior is improper (e.g., $[\theta^*_s - c, \theta^*_s + c]$ does not cover $\theta^*_t$), the regrets are determined by both $c$ and the distance $|\theta^*_s - \theta^*_c|$. For example, compare the case $\theta^*_s = 0.4, c = 0.1$ (Figure~\ref{subfigure:reg1}) with $\theta^*_s = 0.4, c= 0.0001$ (Figure~\ref{subfigure:reg4}), the former case is the proper, while the latter does not cover the true $\theta^*_t$, so the worse regrets. In addition, if $c$ is small enough ($c = 0.001$), the estimation of $\theta_t$ will be centered at $\theta^*_s$ thus the regrets will coincide with the KL divergence $kD(P_{\theta^*_t}\| P_{\theta^*_s})$, which confirms the negative transfer case as discussed before. Overall, once the prior information $\omega(\theta_t|\theta_s)$ is located around $\theta^*_t$ and the target samples are inadequate to make accurate prediction, the knowledge transfer is sensible and indeed the regret can be further optimized.

\section{Algorithms and Experiments}\label{Sec5}
The mixture strategy requires the calculation of the posterior of $\Theta_s$ and $\Theta_t$ given the source and target data. This computation has a high complexity in general, especially when $\Lambda$ lies in a high dimensional space. Since our main aim is to estimate the true points $\theta^*_t$ and $\theta^*_s$, we may not need the full posterior. To this end, we propose an efficient algorithm for approximating the underlying parameters, which will be called the efficient mixture posterior updating (EMPU) algorithm. Furthermore, we  propose an algorithm which works without the parametric assumption on the statistical model. This algorithm, based on Dirichtlet process,  can be seen as a nonparametric version of the mixture strategy. We exhibit the experimental results of these two algorithms for both synthetic logistic regression transfer problems and real-world learning problems. We show that our methods achieve both computational efficiency and high performance.

\subsection{Parametric Model}
In this subsection, we introduce an efficient algorithm to approximate the mixture strategy in parametric models. To illustrate, we consider the OTL case as an example, which can be extended to ITL and TVTL cases straightforwardly. From an algorithmic perspective, the mixture strategy introduced in section~\ref{subsec:ITC} can be rewritten as the following steps in Algorithm~\ref{proc:ms}. However, estimating the posterior of $\Theta_t$ is difficult in practice (line \ref{line:initialize} and \ref{line:postupdate}), especially for high-dimensional problems.

\begin{algorithm}[h!]
\SetAlgoLined
\SetKwInOut{Input}{input}
\SetKwInOut{Output}{output}
\Input{$\mathcal{D}^m_S$, loss function $\ell$, prior knowledge over $\theta_t$ and $\theta_s$}
 Encode the prior knowledge as $\omega(\Theta_t,\Theta_s)$\;
 Calculate the posterior $\theta_s$ from $\mathcal{D}^m_S$, i.e., $P(\theta_s|D^m_s)$\;
 Initialize target dataset $D^{k-1}_t = []$\;
 Initialize the prior $P(\Theta_t|D^m_s)$ in Equation~(\ref{eq:mixture-online}) with $\omega(\Theta_t,\Theta_s)$ and $P(\Theta_s|D^m_s)$ \label{line:initialize}\;
 \For{$k = 1,\cdots,T$}{ 
 Receive target sample $Z^{(k)}_t$\;
 Make prediction for $Z^{(k)}_t$ with the posterior $P(\Theta_t|D^{k-1}_t,D^m_s)$ under loss $\ell$\;
 Add $Z^{(k)}_t$ to $D^{k-1}_t$\;
 Update the posterior $P(\Theta_t|D^k_t,D^m_s)$ \label{line:postupdate}\;
 }
 \Output{Sequential predictions for $Z^{(k)}_t$}
 \caption{Mixture Strategy in OTL}
 \label{proc:ms}
\end{algorithm}

 Concerning the computational infeasibility issues, we propose an algorithm for an efficient posterior updating algorithm to make the mixture strategy amenable to a faster implementation. The algorithm, Efficient Mixture Posterior Updating (EMPU) algorithm, is illustrated in Algorithm~\ref{alg:EMPU}. Since it is challenging to directly estimate the full posterior as the sample increases, we discretize the support of $\Theta_t$ and propose a gradient-based mixture strategy for efficiently estimating the posterior with the given prior knowledge. Precisely, we first learn the distribution parameter $\hat{\theta}_s$ from the $\mathcal{D}^m_S$ by statistical methods such as maximum a posterior or maximum likelihood estimation (line~\ref{line:empu1}). Then we quantize the support of the parameter space $\Lambda$ into $N$ points with the prior knowledge $\omega(\Theta_t|\Theta_s)$ for posterior approximation. For example, we will sample $\theta_{t,i}, i = 1,2,\cdots,N$ from $\Lambda$ according to the prior distribution $\omega(\Theta_t|\hat{\theta}_s)$ given the learned $\hat{\theta}_s$  (line~\ref{line:empu2}). Each $\theta_{t,i}$ will have a corresponding weight $\omega_i$ initialized by the conditional prior $\omega(\Theta_t|\hat{\theta}_s)$  (line~\ref{line:empu3}). When we receive a new target sample $Z_t$, we will update $\theta_i$ based on the gradient descent with the step size $\eta$ and loss function $\ell$ and also update $\omega_i$ using the exponential weighting strategy (line~\ref{line:empu4} and \ref{line:empu5}). After receiving $n$ target samples, the updated $\theta_{t,i}$ and $\omega_i$ will be used for future predictions (depending on the tasks). 

\begin{algorithm}[h!]
\SetAlgoLined
\SetKwInOut{Input}{input}
\SetKwInOut{Output}{output}
\Input{$\mathcal{D}^m_S$, quantization number $N$, loss function $\ell$, prior knowledge over $\theta_t$ and $\theta_s$}
 Encode prior knowledge as distribution $\omega(\Theta_t,\Theta_s)$\;
 Estimate $\hat{\theta}_s$ from $\mathcal{D}^m_S$ \label{line:empu1}\; 
 Randomly sample $\theta_{t,i}$ from $\omega(\theta_t|\hat{\theta}_s)$ for $i = 1,2,\cdots,N$ \label{line:empu2}\;
 Initialize distribution $\omega_i$ with the prior knowledge  \label{line:empu3}\;
 \For{$k = 1,\cdots,T$}{ 
 Receive target sample $Z^{(k)}_t$\;
 \For{$i = 1,\cdots, N$}{
 Update $\theta_{t,i}$ by $\theta_{t,i}(k+1) = \operatorname{Proj}(\theta_{t,i}(k) - \eta \nabla_{\theta_t}\ell(\theta_{t,i}(k),Z^{(k)}_t))$ \label{line:empu4} \;
 Update $\omega_i$ by $\omega_i(k+1) = \omega_i(k) e^{-\ell(\theta_{t,i}(k+1), Z^{(k)}_t)}$ \label{line:empu5}\;
 Normalize $\omega_i(k+1) = \frac{\omega_i(k+1)}{\sum_{i=1}^{N}\omega_i(k+1)}$
 }
 }
 \Output{$\omega_i$ and $\theta_{t,i}$}
 \caption{Efficient Mixture Posterior Updating in OTL}
 \label{alg:EMPU}
\end{algorithm}

\begin{remark}
Instead of applying the Bayes rule, we use the gradient descent based method to update $\theta_{t,i}$ and apply the exponential weighting strategy for the weights $\omega_i$. It should be noted that EMPU is an approximation of the posterior distributions, and hence the theoretical analysis in Theorem~\ref{thm:consistency-scalar} and \ref{theorem:gene-para} does not apply to this algorithm. However, numerical results show that the learning performance by EMPU is similar as the mixture strategy suggested in Algorithm~\ref{proc:ms}.  Notice that in contrast to evaluating the full posterior, the learning performance will depend on both the hyperparameters, e.g., the quantization number $N$ and the step size $\eta$.
\end{remark}

\subsection{Logistic Regression Example} \label{subsection:logistic}
 We consider a logistic regression problem in a 2-dimensional space to compare Algorithm~\ref{proc:ms} the EMPU algorithm (Algorithm~\ref{alg:EMPU}) in the OTL scenario. For the given parameter $\theta \in [0,1]^2$ and $Z_i = (X_i,Y_i) \in \mathbb{R}^{2} \times \{0,1\}$, each label $Y_i \in \{0, 1 \}$ is generated from the Bernoulli distribution with probability $p(Y_i = 1|X_i) = \frac{1}{1+e^{-\theta^TX_i}}$. Suppose that the source and target input features $X^{(k)}_s$ and $X^{(k)}_t$ are drawn from the same normal distribution $\mathcal{N}(\begin{bmatrix}
5  \\
-5
\end{bmatrix} ,\begin{bmatrix}
1 & 0\\
0 & 1
\end{bmatrix})$. The loss function is then given by
\begin{align}
    \ell(\theta,Z_i) := -(Y_i\log (\sigma(\theta^TX_i)) + (1-Y_i)\log (1 - \sigma(\theta^TX_i))), \label{eq:cross_entropy}
\end{align}
where $\sigma(x) = \frac{1}{1+e^{-x}}$. Let $\theta^*_t = (0.3,0.5)$ and $\theta^*_s = (0.2,0.4)$ denote the true parameters for the target and source domains. Given $m = 5000$, let the marginal prior $\omega(\Theta_s)$ be uniformly distributed over $[0,1]^2$ and our prior knowledge $\omega(\Theta_t|\Theta_s)$ assumes that $\Theta_t$ is normally distributed with the mean of $\Theta_s$ and covariance of $\begin{bmatrix}
c^2 & 0\\
0 & c^2
\end{bmatrix}$, here $c$ represents the prior belief on $\Theta_t$ such that smaller $c$ implies $\Theta_t$ is closer to $\Theta_s$ and vice versa. To show the usefulness of the source data, we compare with the target only case ($m = 0$) where we assume the prior $\hat{\omega}(\Theta_t)$ is uniformly distributed over $[0,1]^2$. 

\subsubsection*{Full Posterior}
Let $Q(\Theta_s|D^m_s)$ denote the posterior of $\Theta_s$ induced by the (marginal) prior $\omega(\Theta_s)$ and $Q(\Theta_t|D^m_s, D^n_t)$ denote the posterior of $\Theta_t$ induced by the prior $\omega(\Theta_t,\Theta_s)$ for transfer scenarios, let $\hat{Q}(\Theta_t|D^n_t)$ denote the posterior induced by the prior $\hat{\omega}(\Theta_t)$ without the source data. After receiving $n$ target samples,  we plot different posteriors to see the effect of the mixture strategy induced by the chosen prior in Figure~\ref{fig:post}. From~\ref{subgifure:post1}, given sufficient source data, the posterior of $\Theta_s$ will give a precise estimation of $\theta^*_s$ and the density will mostly stick around $[0.2,0.4]$. While there is a lack of target samples ($n = 20$), the posterior $\hat{Q}(\Theta_t|D^n_t)$ (Figure~\ref{subgifure:post2}) without the source is relatively scattered and the density around $\theta^*_t$ is quite low. When $n$ increases to 150 (Figure~\ref{subgifure:post3}), the posterior is more concentrated but still not be centered at $\theta^*_t$, implying that more target data are needed for accurate estimation. On the contrary, with the prior knowledge $\omega(\Theta_t|\Theta_s)$ and small $c = 0.1$, the posterior $Q(\Theta_t|D^m_s, D^n_t)$ will be concentrated more around $\theta^*_t$ as source and target parameters are particularly close, which can be seen in~Figure~\ref{subgifure:post4} and ~\ref{subgifure:post5}. However when $c$ increases to $1$, the source data is no longer helpful as $\Theta_t$ is roughly distributed uniformly on $[0,1]^2$ and the posterior behaves similarly to target only case as shown in~Figure~\ref{subgifure:post6}.
\begin{figure}[h!]
\centering
\subfigure[$Q(\Theta_s|D^{5000}_s)$ \label{subgifure:post1}]{\includegraphics[width = 1.8in]{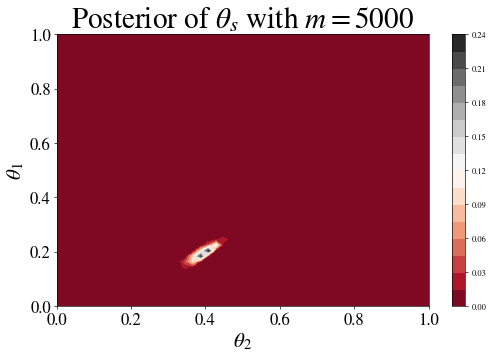}} 
\subfigure[$\hat{Q}(\Theta_t|D^{20}_t)$ \label{subgifure:post2}]{\includegraphics[width =  1.8in]{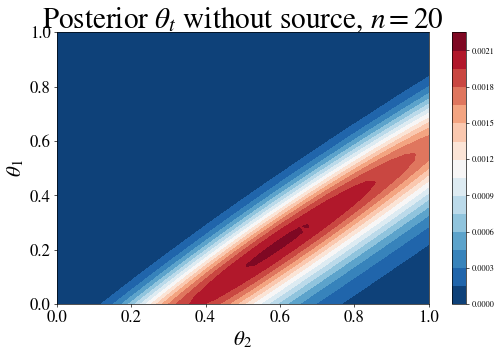}}
\subfigure[$\hat{Q}(\Theta_t|D^{150}_t)$  \label{subgifure:post3}]{\includegraphics[width =  1.8in]{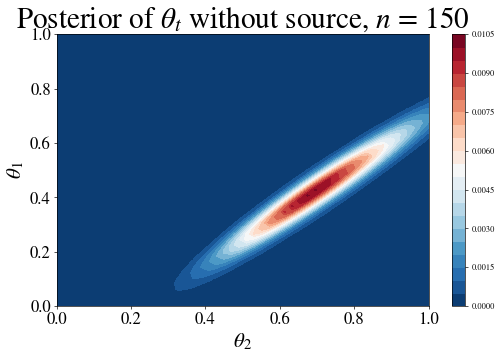}} \\
\subfigure[$Q(\Theta_t|D^{20}_T, D^{5000}_s), c = 0.1$  \label{subgifure:post4}]{\includegraphics[width =   1.8in]{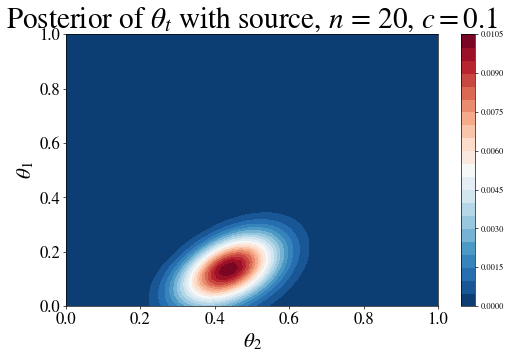}} 
\subfigure[$Q(\Theta_t|D^{150}_T, D^{5000}_s), c = 0.1$  \label{subgifure:post5}]{\includegraphics[width =   1.8in]{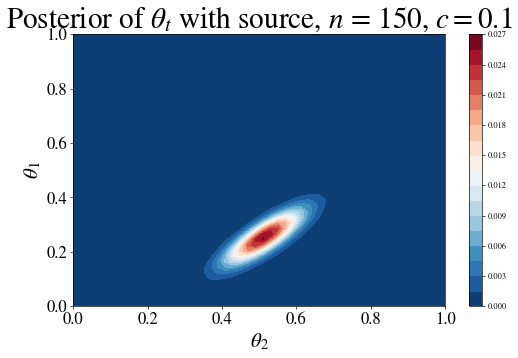}}
\subfigure[$Q(\Theta_t|D^{150}_T, D^{5000}_s), c = 1$  \label{subgifure:post6}]{\includegraphics[width =   1.8in]{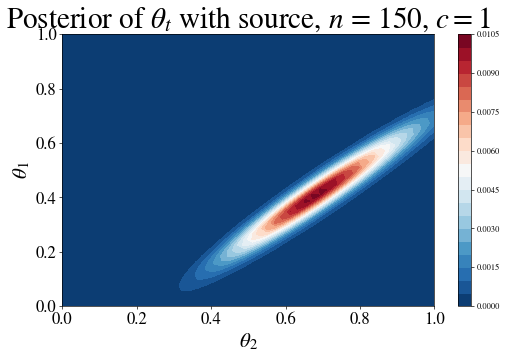}}
\caption{The posterior of $\theta_s$ and $\theta_t$ given $D^m_s$ and $D^n_t$ under different prior belief $c$ and target sample size $n$. The posterior of $\Theta_t$ is more concentrated around $\theta^*_t$ with the source data introduced.}
\label{fig:post}
\end{figure}
\begin{figure}[h!]
    \centering
    \subfigure[Positive Transfer]{\includegraphics[width = 2.7in]{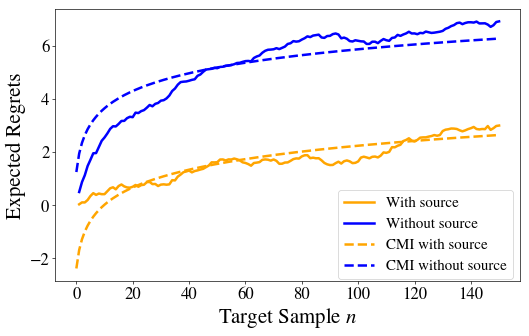}}
    \subfigure[Negative Transfer]{\includegraphics[width = 2.7in]{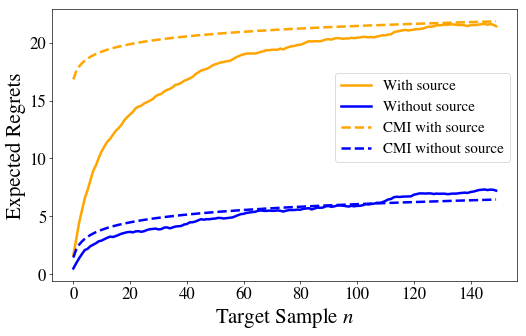}}
    \caption{The comparisons of the expected regret $\mathcal{R}_O$ of positive transfer with $\theta^*_s = [0.2,0.4]$ (left) and the negative transfer (right) with $\theta^*_s = [0.8,0.15]$ under the same settings where $\theta^*_t = [0.3,0.5]$ and $c = 0.1$. The results are approximated by 200 experimental repeats. The regrets without the source data are sketched in orange and those with the source data are sketched in blue.}
    \label{fig:pos_transfer}
\end{figure}

To further demonstrate our theoretical results, we plot the expected regrets in Figure~\ref{fig:pos_transfer} for positive and negative transfer cases, and we also plot the asymptotic estimation of CMI in dashed lines from Theorem~\ref{thm:expreg-log} and \ref{theorem:gene-para} to numerically evaluate the difference. From the left figure, it is observed that introducing the source indeed yields lower regret, which fits our intuition from the posteriors. Even for small $n$($\approx 40$), CMI captures the regret quite well and the gap is roughly $\log\frac{\omega(\theta^*_t|\theta^*_s)}{\omega(\theta^*_t)}$ as noted in Remark~\ref{remark:consist}. In contrast, we also examine the negative transfer case with $\theta^*_s = [0.8,0.15]$ where the results are shown in the right figure. The prior distribution $\omega(\theta^*_c|\theta^*_s)$ in this case has an extremely low density and the estimation will hardly approach the true parameters. As a result, the negative transfer happens and source samples will hurt the performance instead. It appears that the expected regret with source data is far superior to the target only case, and CMI captures this trend well when $n$ goes reasonably large ($\approx 80$). 

Overall, from both positive and negative transfer cases, the  gaps between the regrets are mainly reflected on the prior knowledge $\omega(\theta^*_t|\theta^*_s)$ when $n$ is reasonably large as mentioned in Remark~\ref{remark:consist} and~\ref{remark:knowledge_transfer}, which experimentally confirms Theorem~\ref{theorem:gene-para}. Moreover, it shows that the asymptotic bounds can be also applied to the case when $n$ and $m$ are limited and may provide some practical insights on avoiding the negative transfer. 

\subsubsection*{Efficient Mixture Posterior Updating}
When estimating the full posterior for $\theta^*_t$ and $\theta^*_s$, it is time-consuming when data sizes are large. To examine the efficiency and usefulness of the proposed EMPU algorithms, we set the quantization number $N = 100$ and step size $\eta = 0.01$ to conduct the iterations on the identical logistic regression transfer problem settings for comparisons. The results are shown in the Figure~\ref{fig:empu-compare}. 

\begin{figure}[h!]
    \centering
    \subfigure[Positive Transfer]{\includegraphics[width = 2.5in]{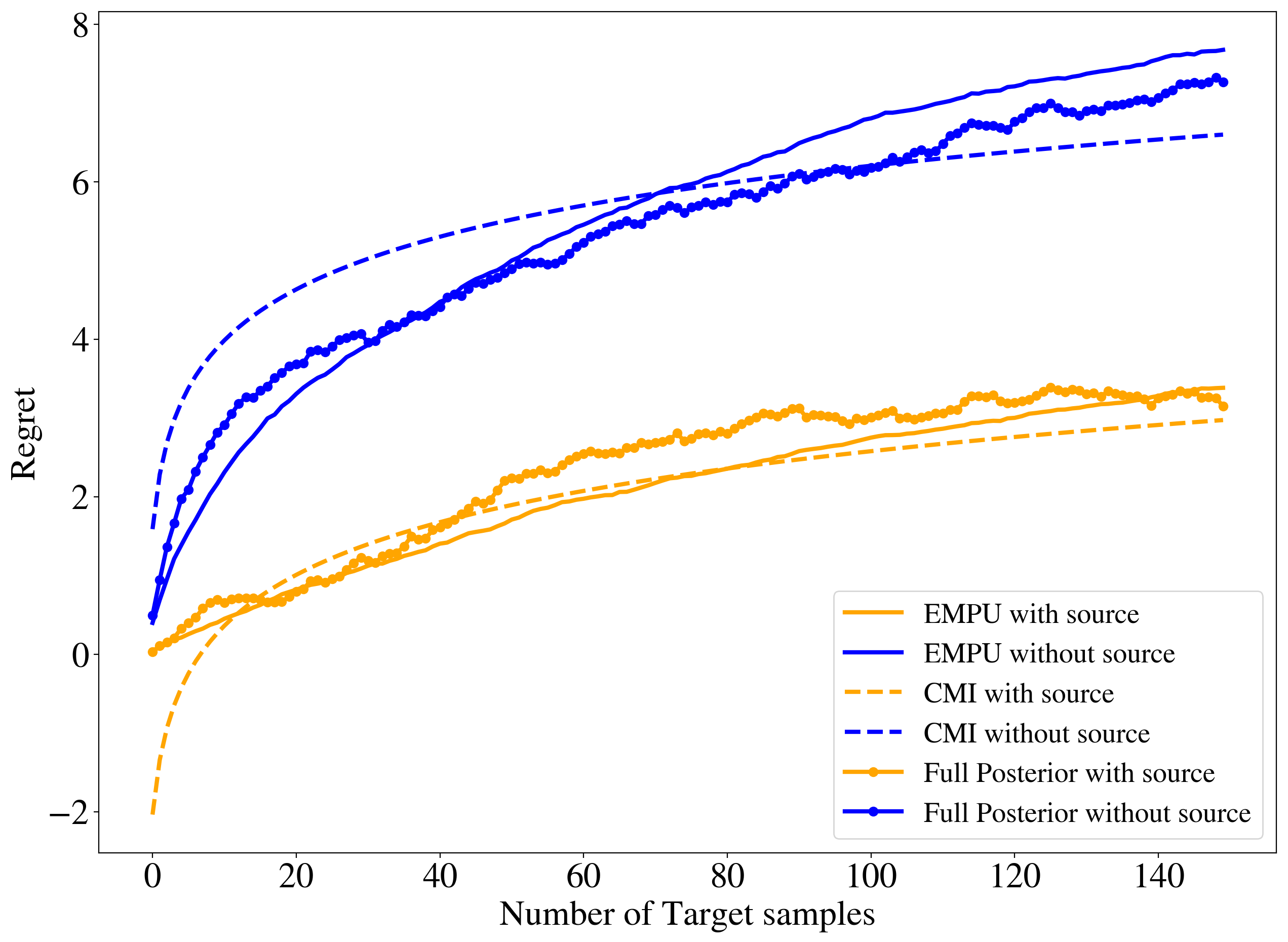}}
    \subfigure[Negative Transfer]{\includegraphics[width = 2.5in]{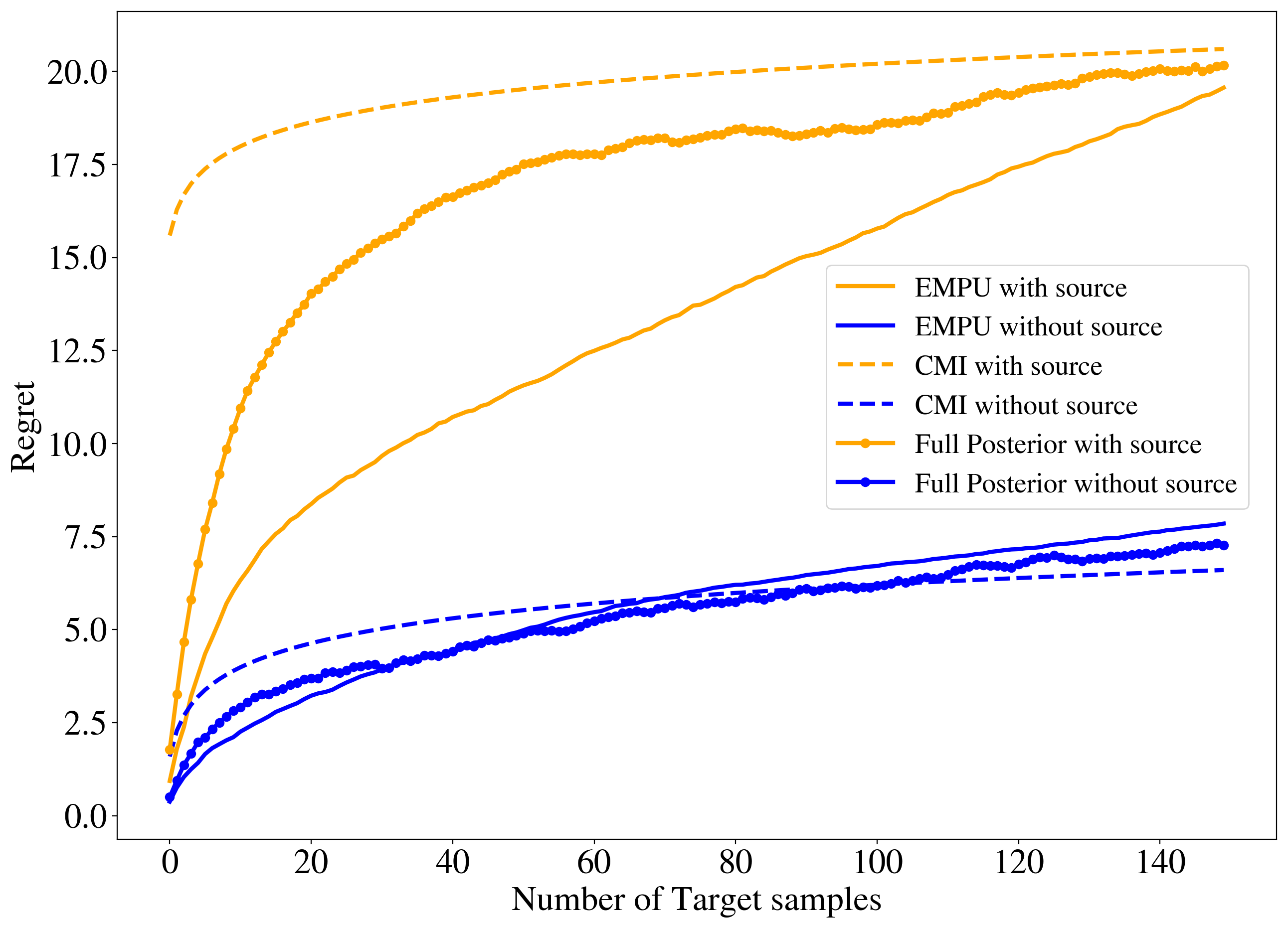}}   
    \caption{Comparisons of full posterior and EMPU algorithms under the positive and negative transfer scenarios. The results are approximated by 200 experimental repeats. The regrets without the source data are sketched in orange and those with the source data are sketched in blue.}\label{fig:empu-compare}
\end{figure}

From the figure, it is easy to see that EMPU achieves similar regrets induced by the full posterior estimation and our derived bounds under both positive and negative transfer cases. As noted, the performance of EMPU depends on the hyperparameters of the step size $\eta$ and the quantization number $N$. We investigate the effects of these hyperparameters and plot the expected regret by varying different $\eta$ and $N$ after receiving 150 target samples under the negative transfer case, non-transfer case (without the source data) and the positive transfer case. From Figure~\ref{subfigure:eta}, it is observed that in non-transfer and positive transfer cases, the lowest expected regrets are achieved at $\eta = 0.01$ while in the negative transfer case, the best performance is achieved at $\eta = 0.001$. It is speculated that under negative transfer case, the estimation of $\theta_t$ will never approach $\theta^*_t$ with the improper prior, then it is more likely to achieve the optima $\tilde{\theta}_t$ defined in Proposition~\ref{prop:neg-online} faster within its support and plausibly jump over it with a larger step size, which incurs a higher regret.  The effects of quantization number $N$ are illustrated in Figure~\ref{subfigure:N}. In all three cases, increasing the quantization number will decrease the regret when $N$ is small (e.g., $N<100$). Nevertheless, no significant changes are spotted when $N$ increases to more than 100. Since larger $N$ leads to higher computational complexity, the best choice for this quantity is $N = 100$.  Furthermore, we plot the running time by varying $N$ from 10 to 1000. Compared to the mixture strategy algorithm with the full posterior, EMPU is 20 times faster under $N = 100$ but achieves similar regret, demonstrating its efficiency. 

\begin{figure}[h!]
    \centering
    \subfigure[Effect of $\eta$ \label{subfigure:eta} ]{\includegraphics[width = 2in]{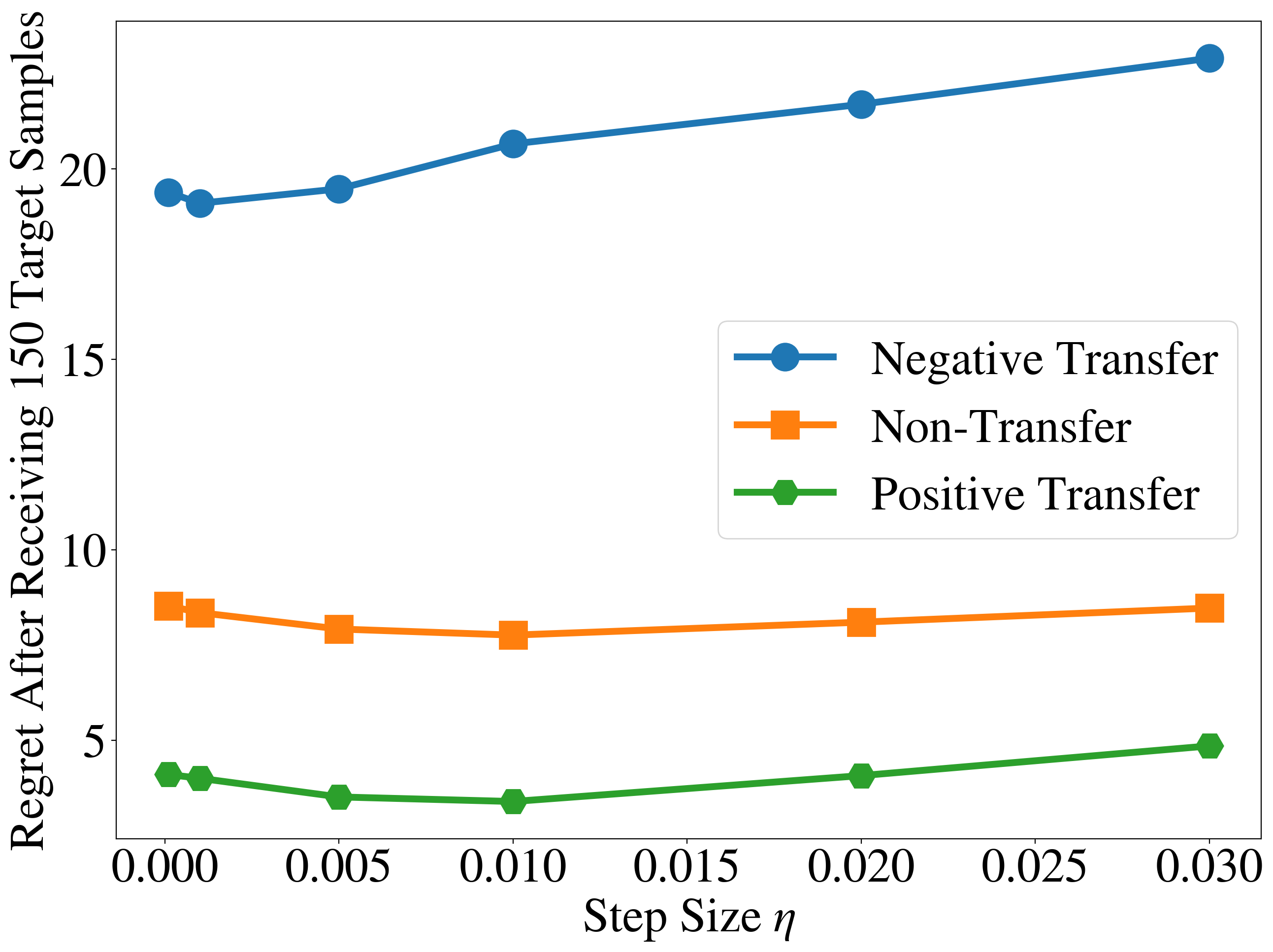}}
    \subfigure[Effect of $N$ \label{subfigure:N} ]{\includegraphics[width = 2in]{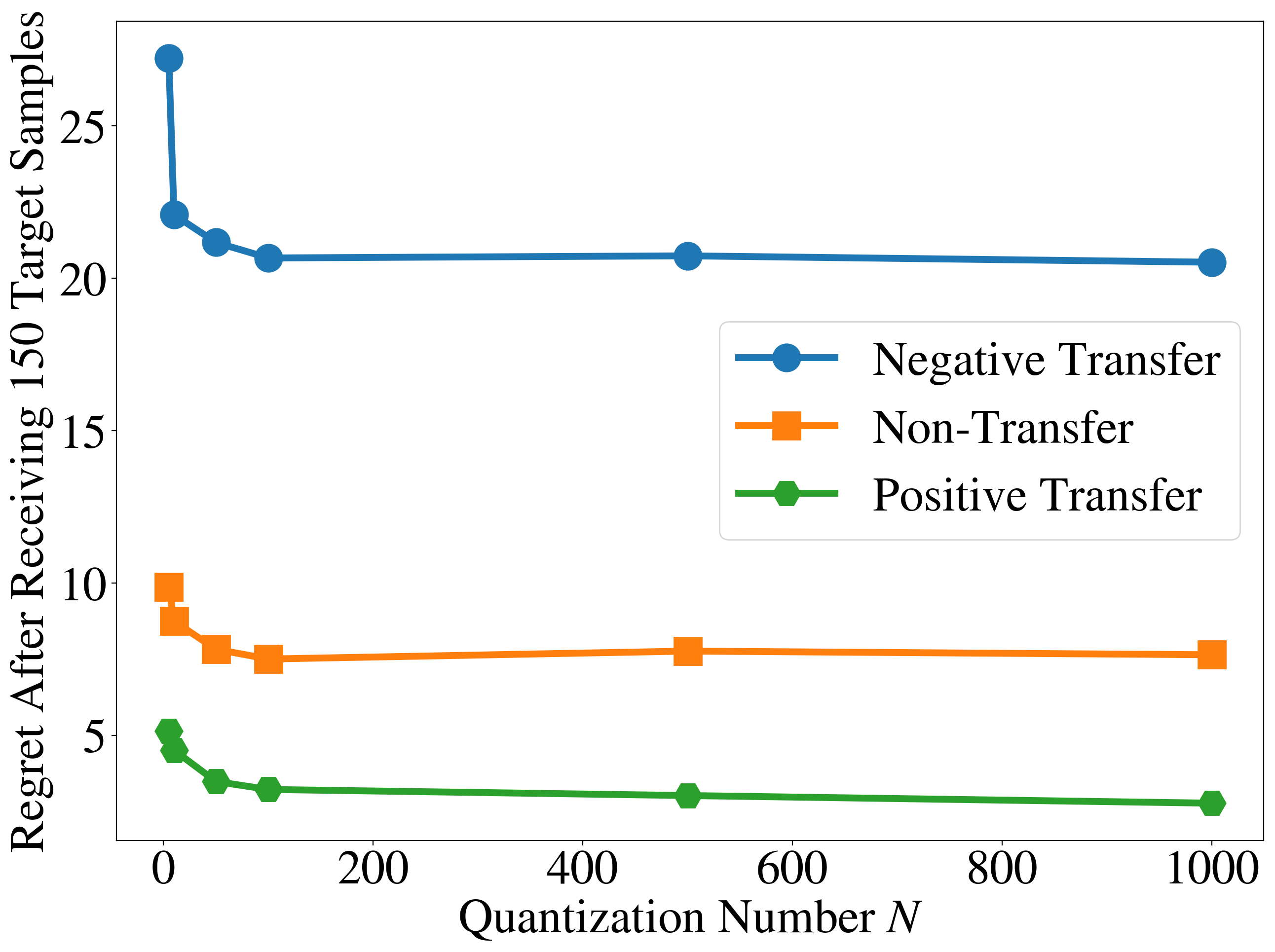}}
    \subfigure[Time Comparisons \label{subfigure:time}]{\includegraphics[width = 2.1in]{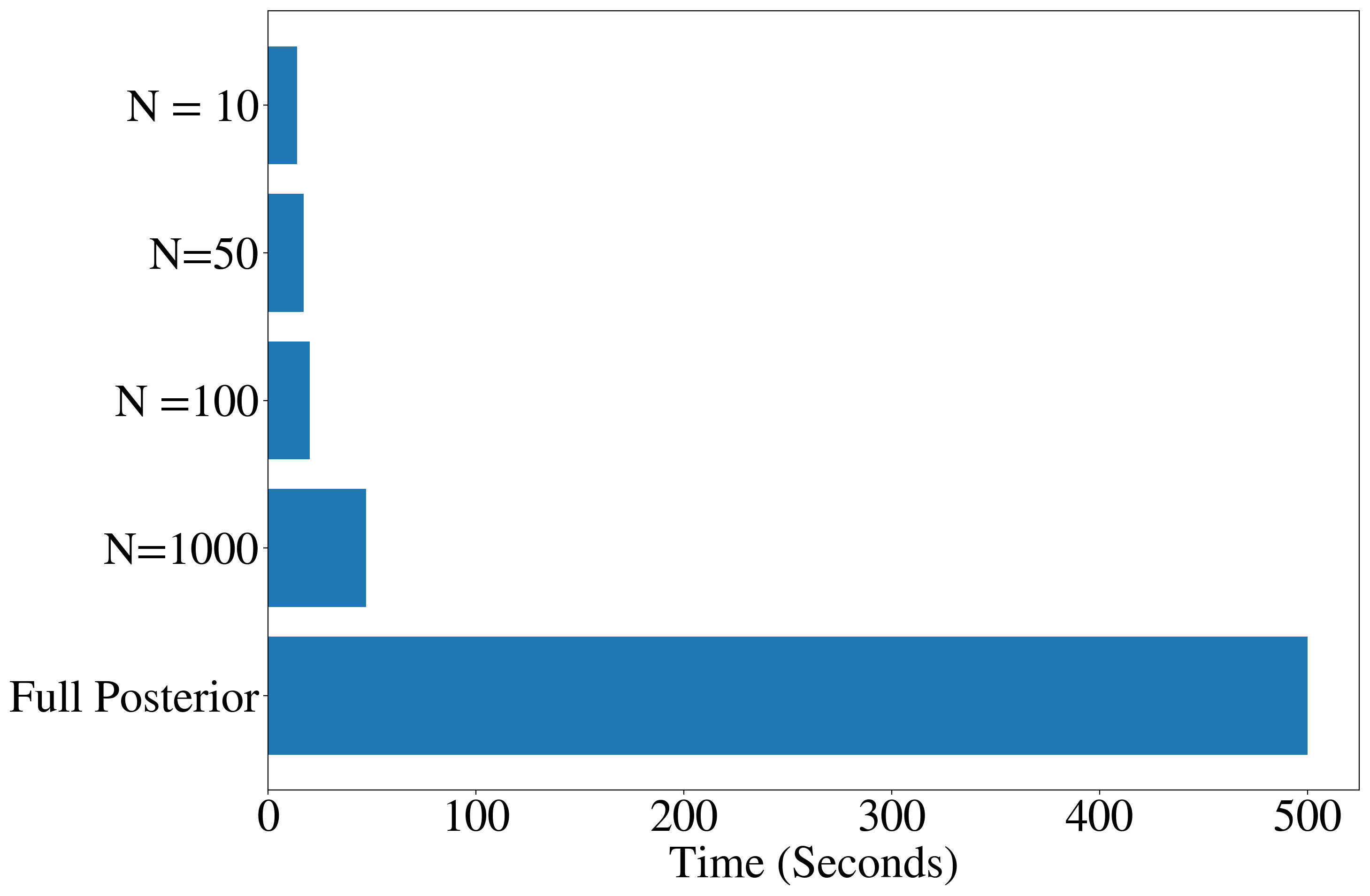}}
    \caption{After receiving 150 target samples, we plot the expected regret by varying different $\eta$ and $N$ in (a) and (b). We also compare the running time under different quantization number $N$ and the full posterior algorithm. The results are approximated by 200 experimental repeats.}
    \label{fig:sensitivity}
\end{figure}

\subsubsection*{Comparisons to other OTL algorithms}
We compare our proposed EMPU with the online transfer learning algorithm proposed in \cite{zhao2014online}, the HomOTL-I algorithm. In their scheme, the authors assign the weights $\omega_s$ and $\omega_t$ for source and target domains and each domain is endowed with the model parameters $\theta_s$ and $\theta_t$. At each time $k$, the weights and the model parameters are updated with the following rules:
\begin{equation}
\left\{\begin{aligned}
\omega^k_{s} &=\frac{\omega^{k-1}_{s}  g\left(\ell^*\left(\theta_s, Z^k_t\right)\right)}{\omega^{k-1}_{t}  g\left(\ell^* (\theta^{k-1}_t, Z^k_t\right)+\omega^{k-1}_{t} g\left( \ell(\theta_s, Z^k_t) \right)}, \omega^0_s = 0 \\
\omega^{k}_{t} &=\frac{\omega^{k-1}_{t}  g\left(\ell^*\left(\theta_t, Z^k_t\right)\right)}{\omega^{k-1}_{t}  g\left(\ell^* (\theta^{k-1}_t, Z^k_t\right)+\omega^{k-1}_{t} g\left( \ell(\theta_s, Z^k_t) \right)}, \omega^0_t = 0  \\
\theta^k_t &= \theta^{k-1}_t - \eta\nabla_{\theta_t}\ell (\theta^{k-1}_t, Z^k_t) 
\end{aligned}\right. \label{eq:update}
\end{equation}
where $g$ is some decaying function, $\ell$ is the hinge loss and $\ell^*$ is the squared loss. The predictions are made from the learned model parameters $\theta_s$ and $\theta_t$ along with the weights $\omega_s$ and $\omega_t$ by a truncated linear model. Similar updating rules are proposed in \cite{wu2017online,yan2017online} with different choice of $g$. The EMPU algorithm is closely related to this framework. To illustrate, if we regard the distribution parameters $\Theta_s$ and $\Theta_t$ as the model parameters and choose the decaying function $g$ as the exponential weighting function, e.g., $g(x) = e^{-x}$, and let both $\ell$ and $\ell^*$ be the cross-entropy loss, then the above scheme will practically coincide with our proposed method if we choose $N = 2$. The main differences are that $\theta_t$ will be updated and $\theta_s$ is kept fixed at each iteration in their scheme, and our model is not limited to linear models, and the loss function is not limited to hinge loss. As a consequence, there is no prior knowledge over the target domain, and the performance at the beginning will rely heavily on the source domain since $\theta_s$ remains unchanged all the time, thus may perform badly if two domains are totally distinct. While in our scheme, the source parameter $\Theta_s$ behaves as the prior knowledge for the target parameter, which is not explicitly engaged in the prediction. Thus we can choose a proper prior at the beginning to either avoid the negative transfer if the source domain varies differently from the target domain or improve the prediction if two domains are close. 

Using the same settings of the logistic regression problems, we study the \emph{small discrepancy} case where the target and source data are generated with $\theta^*_t = [0.3,0.5]$ and $\theta^*_s = [0.2, 0.4]$ and the \emph{large discrepancy} case where $\theta^*_t = [0.3,0.5]$ and $\theta^*_s = [0.8, 0.15]$. In our method, we will set quantization number $N =2$ and the prior knowledge $c = 0.3$ so that we will have proper priors under both cases. Then at each iteration $k$ we predict the label $Y_k$ by first sampling $\theta_{k,i}$ according to the distribution $\omega_k$, the predicted $\hat{Y}_k$ will be $1$ if the probability $\sigma(\theta_{k,i}^TX_i) > 0.5$ and 0 otherwise. In HomOTL-I, we firstly learn $\hat{\theta}_s$ by the linear regression method and initialize $\theta_{t,0} = 0$, then at each iteration $k$ we conduct the HomOTL-I to predict $\hat{Y}_k$. We will plot the accumulated number of mistakes $\sum^{T}_{i=1}|\hat{Y}_k - Y_k|$ by averaging 200 experimental repeats. We choose the step size $\eta = 0.01$ for both algorithms. The comparisons can be found in Figure~\ref{fig:comparison_mistake_oz}. From the small discrepancy case, HomOTL-I indeed performs slightly better than the EMPU algorithm since the initially learned $\hat{\theta}_s$ is very helpful for prediction in the target domain as $P_{\theta^*_s}(Y|X)$ and $P_{\theta^*_t}(Y|X)$ are relatively close in terms of the parameterization, which leads to a smaller number of mistakes. In the large discrepancy case, the domain divergence becomes relatively large, and the initially learned $\hat{\theta}_s$ is more of a hindrance which causes a larger number of mistakes in the target domain. The EMPU algorithm performs reasonably well in both cases, which also shows that the updating with the prior knowledge is particularly beneficial for achieving a lower regret when two domains are far different.

\begin{figure}[h!]
    \centering
    \subfigure[Small discrepancy \label{subfigure:postive_comp} ]{\includegraphics[width = 2.5in]{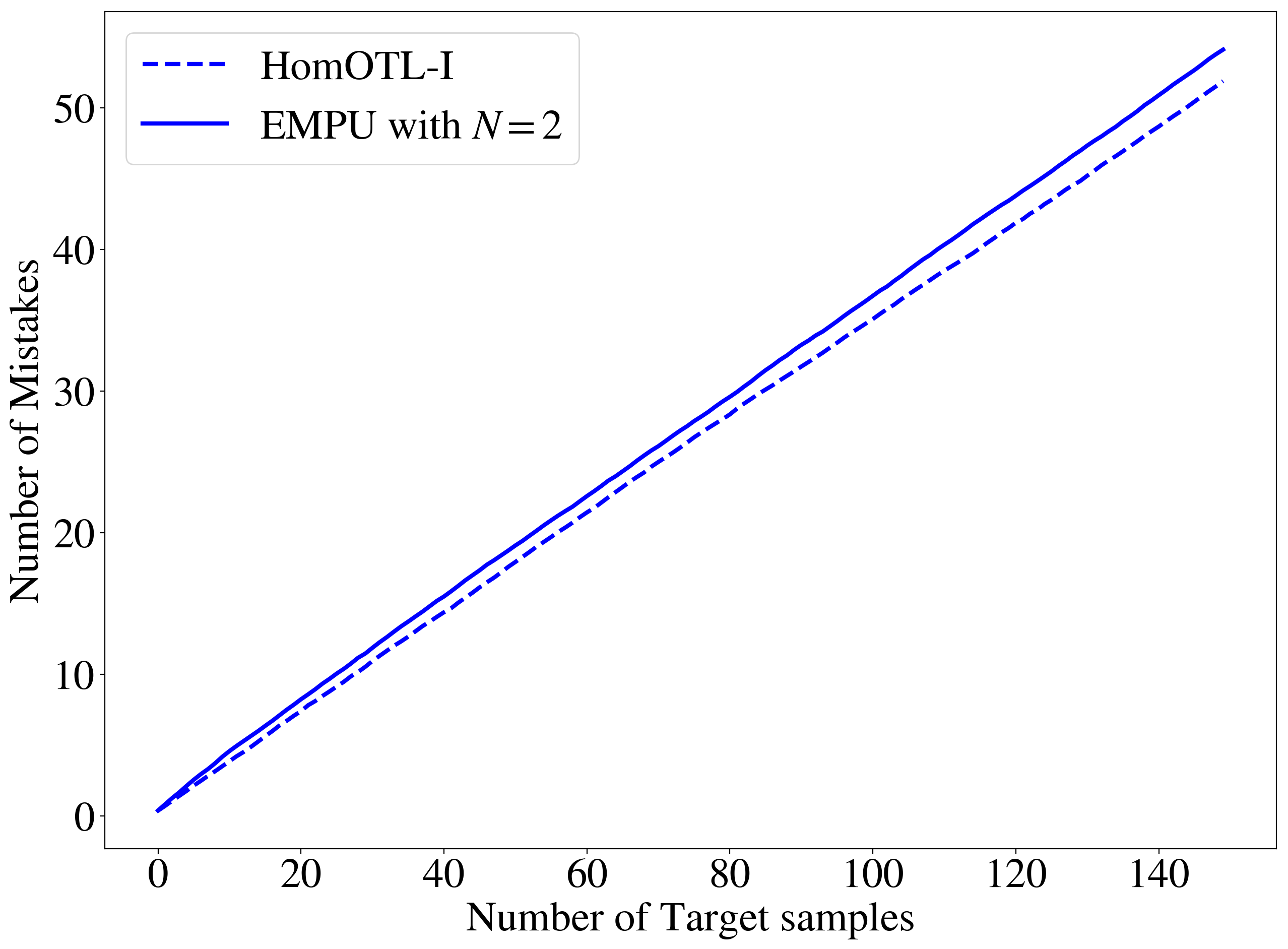}} \quad 
    \subfigure[Large discrepancy \label{subfigure:nega_comp} ]{\includegraphics[width = 2.5in]{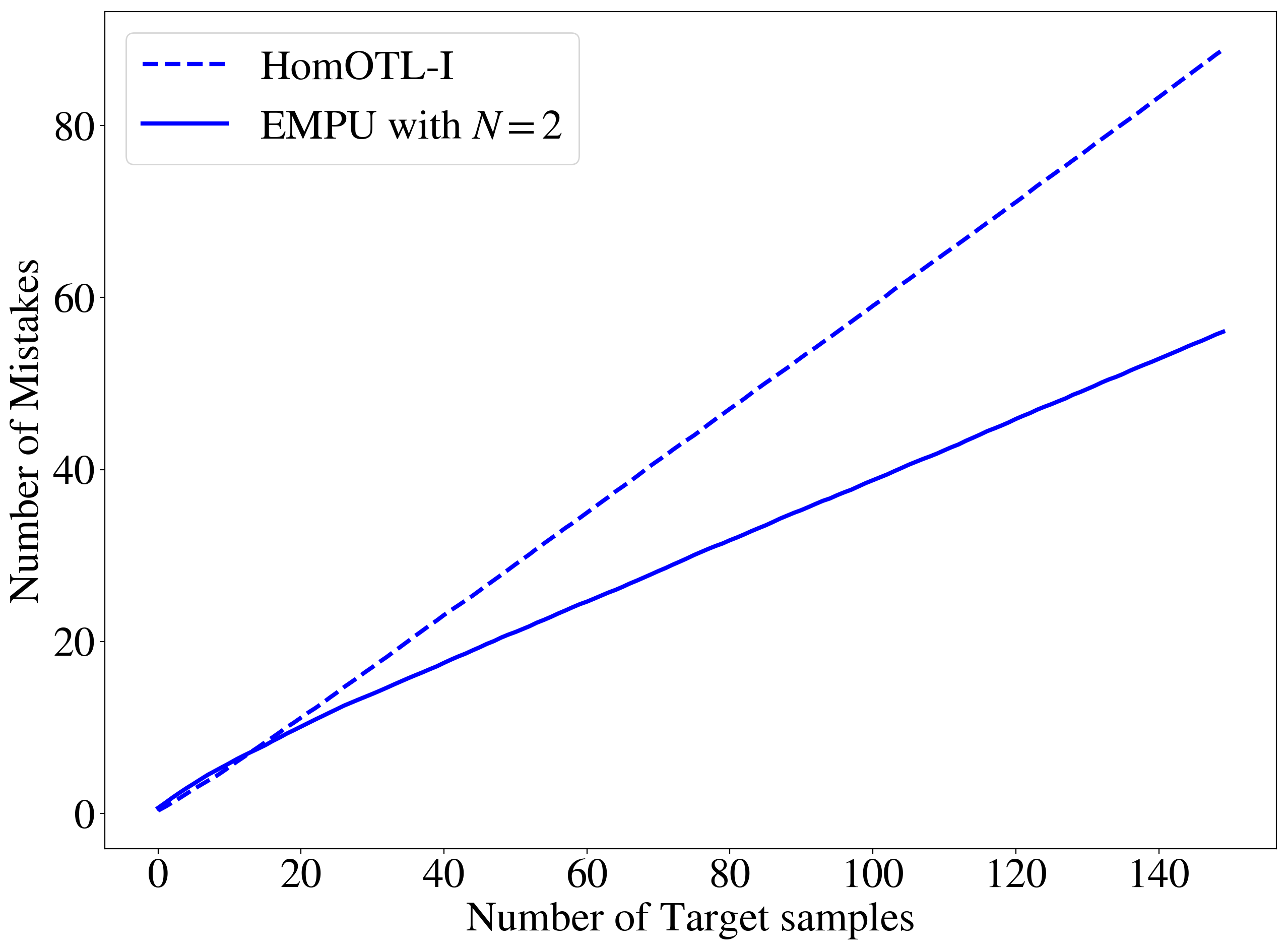}}
    \caption{Comparisons of EMPU with $N=2$ and HomOTL-I method \citep{zhao2014online} by the average number of mistakes made at each iteration. We set the step size $\eta$ to be 0.01 for all experiments. The results are approximated by 200 experimental repeats.}
    \label{fig:comparison_mistake_oz}
\end{figure}

Likewise, we adapt the HomOTL-I algorithm to our framework by choosing both $\ell$ and $\ell^*$ to be the cross-entropy loss defined in Equation~(\ref{eq:cross_entropy}), and we use the logistic regression as our predictive model. To elaborate, we firstly learn $\hat{\theta}_s$ by the logistic regression method and initialize $\theta_{t,0}$ uniform randomly in $\Lambda$. The comparisons can be found in Figure~\ref{fig:comparison_regret} under the small and the large discrepancy scenarios by the expected regret. We observe similar patterns in both scenarios. HomOTL-I indeed performs better than both EMPU and the mixture strategy with full posterior since the estimated $\hat{\theta}_s$ is already very close to the true target parameters $\theta^*_t = [0.3,0.5]$ at the beginning of the prediction, thus a lower regret. However, in the negative transfer case, the initialized $\hat{\theta}_s$ is located fairly far away from $\theta^*_t$, HomOTL-I will mainly rely on the bad-performing $\hat{\theta}_s$ and induce a much higher expected regret. 
\begin{figure}[h!]
    \centering
    \subfigure[Small discrepancy \label{subfigure:postive_comp2} ]{\includegraphics[width = 2.5in]{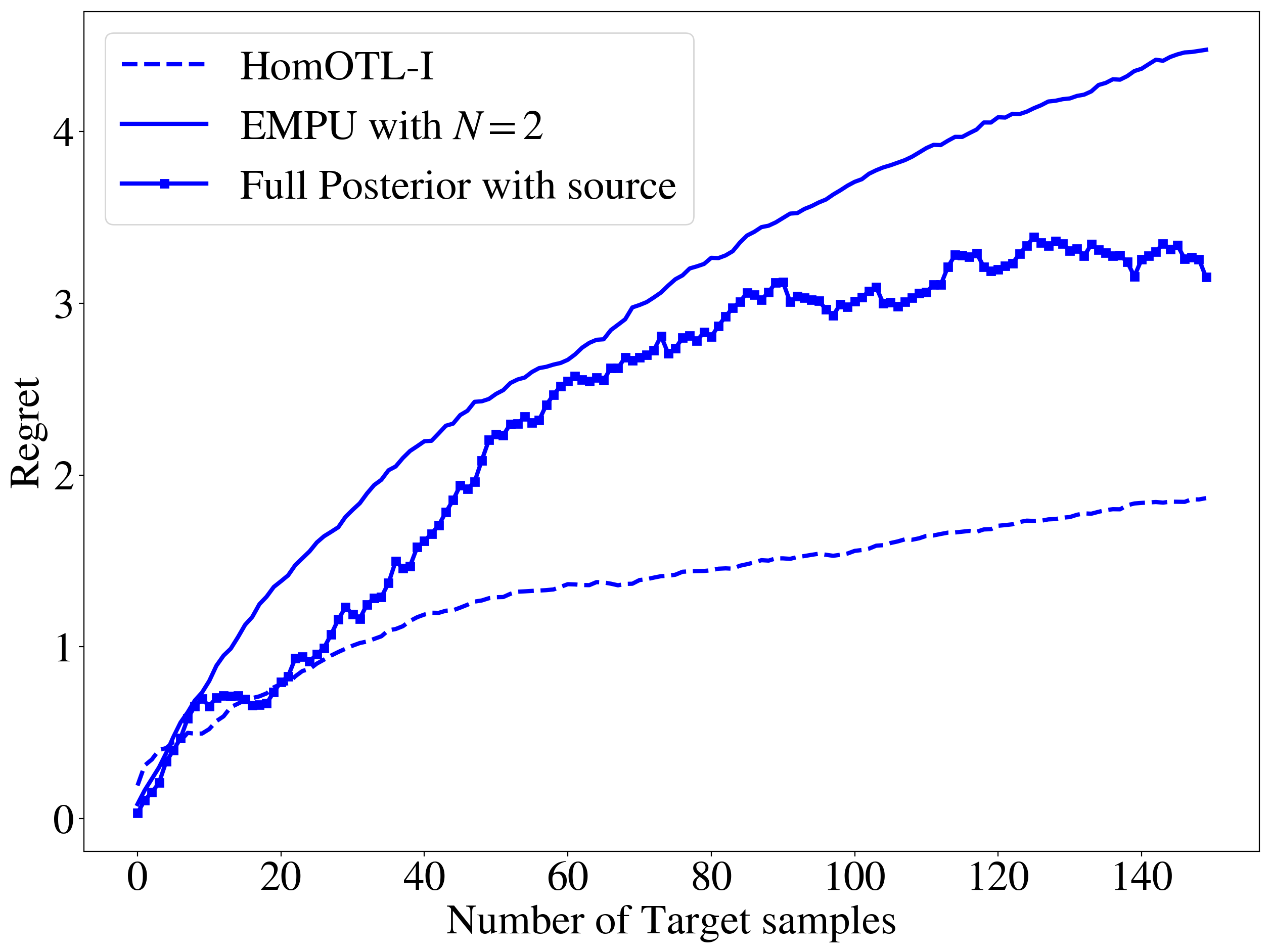}} \quad 
    \subfigure[Large discrepancy \label{subfigure:nega_comp2} ]{\includegraphics[width = 2.5in]{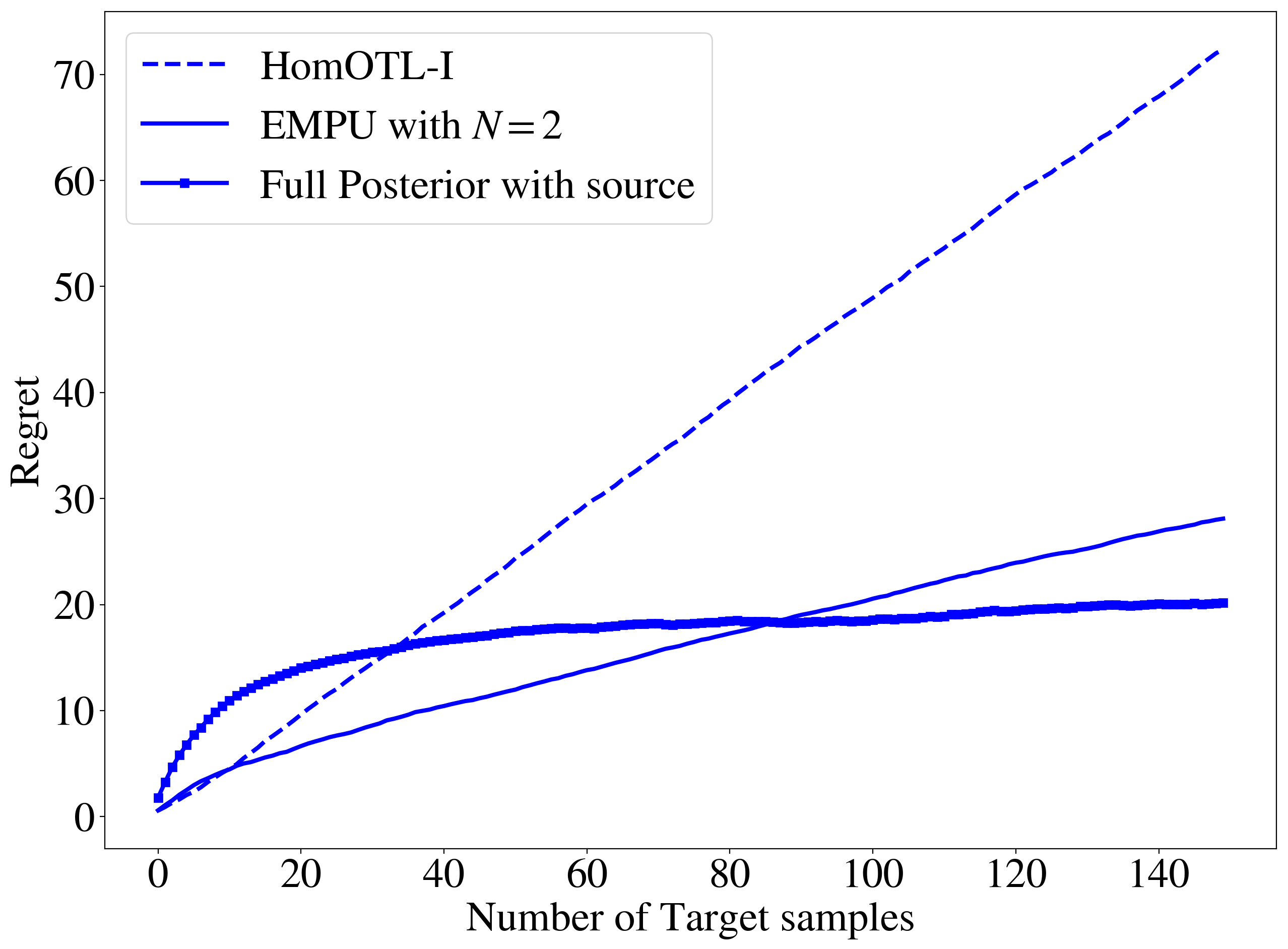}}
    \caption{Comparisons of EMPU with $N=2$ and HomOTL-I \citep{zhao2014online} by the expected regret. We set the step size $\eta$ to be 0.01 for all experiments. The results are approximated by 200 experimental repeats.}
    \label{fig:comparison_regret}
\end{figure}

\subsection{Nonparametric Modelling}
Previous results are derived under the assumption that the data distributions are parametric, say, $Z \sim P_{\theta}(Z)$. Such an assumption of the parametric model may not always hold in practical machine learning problems such as image processing and sentiment analysis. Inspired by the mixture strategy, we devise a novel algorithm that works in a nonparametric setting for more general models. Roughly speaking, the nonparametric algorithm allows that the number of parameters in the model grows with the number of samples. Similar to the mixture strategy on a parametric model, we will define a distribution on the parameter of possibly infinite dimension. Specifically, we use the Dirichlet Process Mixture (DP) \citep{ferguson1973bayesian, antoniak1974mixtures, neal2000markov} to construct the prior and permit the posterior inference from the data. The Dirichlet Process is usually denoted by
\begin{align*}
    G \sim DP(\alpha, G_0),
\end{align*}
where $G$ is a random discrete measure. There are two parameters in the Dirichlet process, the base distribution $G_0$, which represents a random guess of the data true distribution, and $\alpha$ is a positive scalar controlling the concentration. From the DP, we could model the data distributions by the DP mixture (DPM) where the distribution parameters $\theta_i, i\in\mathbb{N}$ are drawn from some random measure $G$ sampled from $DP(\alpha,G_0)$, and each data $Z_i$ is drawn from the parametric distribution $P_{\theta_i}(Z)$. If we integrate a parametric density $P_{\theta_i}(Z)$ against the random measure $G$, we obtain a mixture model from the stick-breaking process \citep{paisley2010simple}  as
\begin{align}
    P(Z)=\sum_{i \in \mathbb{N}} \pi_{i} P_{\theta_i}\left(Z \right).
\end{align}
 We sketch the following procedure to illustrate how the data are generated from the DPM:
\begin{equation}
\begin{aligned}
G \sim DP(\alpha, G_0), \\
\theta_1,\theta_2,\cdots |G \sim G,\\
Z_i|\theta_i \sim P_{\theta_i}(Z).
\end{aligned}
\end{equation}
Using this generative model, we view $\theta_i$ as the latent variables and our aim is to estimate the posterior $ P(\theta_1,\theta_2,\cdots,\theta_n|Z_1,Z_2,\cdots,Z_n)$ from the dataset as each sample corresponds to an atom $\theta_i$ that is drawn from $G$. More specifically, we can find conditional distributions of posterior distribution of model parameters by combining the conditional probability as \citep{neal2000markov}
\begin{align}
\theta_i |\theta^{-i}, Z_i \sim \sum_{j\neq i} q_{ij}\delta_{\theta_j} + r_iG_i, \label{eq:nonpara}
\end{align}
where $\theta^{-i} = \{\theta_1,\cdots, \theta_{i-1}, \theta_{i+1}, \cdots, \theta_n\}$ denotes the set of $\theta_j$ for all $j \neq i$, $G_i$ is the posterior distribution of $\theta$ based on the base distribution $G_0$ and data $Z_i$, and the coefficients $q_{ij}$ and $r_i$ are defined as
\begin{align}
    q_{ij} &= bP_{\theta_j}(Z_i), \\
    r_i  &= b\alpha \int P_{\theta}(Z_i) dG_{0}(\theta),
\end{align}
where $b$ is such that $\sum_{j \neq i} q_{ij}+r_{i}=1$. At this stage, we can use the MCMC sampling method to approximate the posterior. Inspired by this nonparameteric Bayesian framework, we now describe how the idea of mixture strategy in the parametric model carries over to the nonparametric model. Assume that the distributions of each source data $Z^{(i)}_s$ and target data $Z^{(i)}_t$ are parameterized by $\theta_{s,i}$ and $\theta_{t,i}$, respectively. And we assume that $\theta_{s,i}$ and $\theta_{t,i}$ are drawn from the distributions $G_s$ and $G_t$ generated by the prior $DP(\alpha, G_0)$. That is:
\begin{equation}
\begin{array}{ccc}
G_s \sim DP(\alpha, G_0) & & G_t \sim DP(\alpha, G_0) \\
\theta_{s,1},\theta_{s,2},\cdots | G_s\sim G_s & &  \theta_{t,1},\theta_{t,2},\cdots | G_t\sim G_t \\
Z^{(i)}_t|\theta_{t,i} \sim P_{\theta_{t,i}}(Z) &  &  Z^{(j)}_s |\theta_{s,j} \sim P_{\theta_{s,j}}(Z) 
\end{array}.
\end{equation}
Following~Equation~(\ref{eq:nonpara}), we can write out the posterior of $\theta_{t,i}$ given the target data $Z^{(i)}_{t}$, the set of $\theta_{t,j}$ for all $j \neq i$ (written as $\theta^{-i}_t$) and the set of $\theta_{s,k}$ for $k = 1,2, \cdots,m$ (written as $\theta^m_s$):
\begin{equation}
  \theta_{t,i} |\theta^{-i}_t, \theta^m_s, Z_i \sim \sum_{j\neq i} q_{ij}\delta_{\theta_{t,j}} + \sum_{k} q_{ik}\delta_{\theta_{s,k}} + r_iG_i, \label{eq:transfer-nonpara}
\end{equation}
The coefficients $q_{ij}$, $q_{ik}$ and $r_i$ are defined as follows
\begin{align}
    q_{ij} &= bP_{\theta_{t,j}}(Z_i), \\
    q_{ik} &= \beta b\alpha P_{\theta_{s,k}}(Z_i), \\
    r_i  &= (1-\beta) b\alpha  \int P_{\theta}(Z_i) dG_{0}(\theta).
\end{align}
where $b$ is such that $\sum_{j \neq i} q_{ij} + \sum_{k} q_{ik} + r_{i}=1$. Here we introduce the balancing coefficient $\beta \in [0,1]$ as our prior knowledge over the source and target domain. Larger $\beta$ implies that we will rely more on the (empirical) source distribution as our prior and smaller $\beta$ shows more beliefs on $G_0$. The next step is to estimate the posterior of $\theta_{t,i}$ from~Equation~(\ref{eq:transfer-nonpara}) by the MCMC sampling algorithm. We then heuristically propose an efficient algorithm for online transfer learning under nonparametric Bayesian learning framework.

\begin{algorithm}[h!]
\SetAlgoLined
\SetKwInOut{Input}{input}
\SetKwInOut{Output}{output}
\Input{$\mathcal{D}^m_t$, Base distribution $G_0$, $\alpha$, weighting coefficient $\beta$,  parametric family $P_{\theta}$ }
 Estimate $G_s$ from $\mathcal{D}^m_S$ from the Gibbs sampling by~(\ref{eq:nonpara}) \; 
 Initialize distribution $DP(\alpha, (1-\beta) G_0 + \beta G_s)$ as our prior knowledge for the target domain as indicated by~(\ref{eq:transfer-nonpara})\;
 \For{$k = 1,\cdots,T$}{ 
 Receive target sample $Z^{(k)}_t$\;
 \For{$i = 1,\cdots, K$}{
 Estimate the posterior of the target distribution $G_t$ using Equation~(\ref{eq:transfer-nonpara}) by Gibbs sampling, say, we sample $\theta^{i}_{t,k}$\;   
 }     
 Predict $P_k(Z^{(k)}_t) = \sum_{i=1}^{K}P_{\theta^{i}_{t,k}}(Z^{(k)}_t)$ using sampled $G_t$ \;
 }
 \Output{Sequential prediction $P_k(Z^{(k)}_t)$}
 \caption{Nonparametric Posterior Updating and Prediction}
 \label{alg:nonpara}
\end{algorithm}

\begin{remark}
We can see the analogy between the parametric model and the nonparametric model from the above algorithm. Firstly we estimate $G_s$ from the source data with the Gibbs sampling, which behaves similarly as estimating $\theta_s$ in the parametric model. Then we define the coefficient weight $\beta$ that controls whether $G_t$ is similar to $G_s$ as our prior knowledge, which corresponds to $\omega(\Theta_t|\Theta_s)$ in the parametric model. When we receive new target samples $Z^{(k)}_t$, the posterior is updated and $\theta_{t,k}$ are sampled $K$ times from $G_t$.  Then the probability distribution of $Z^{(k)}_t$ is approximated by the mixture of each $\theta_i$. The effectiveness of this algorithm will depend on $\alpha$, $\beta$, choice of $G_0$, the parametric family $P_{\theta}$ and the sampling number $K$. The analogy between the parametric and nonparametric modelling is summarized in Table~\ref{tab:comparison}. Despite the fact that the two methods have some similarities, there is no theoretical guarantee for the generalization performance of the nonparametric models. To show the effectiveness of the prior knowledge in the nonparametric algorithm, we conduct some simple experiments for empirical verification.
\begin{table}[H]
    \centering
    \begin{tabular}{|c|c|c|}
    \hline 
   &  Parametric & Nonparametric \\
  \hline 
  Prior         &   $\Theta_t \sim \omega(\Theta_t|\Theta_s)$   & $G_t \sim DP(\alpha, \beta G_s + (1-\beta) G_0)$    \\
  Likelihood    &   $P_{\theta^*_s}(Z_s), P_{\theta^*_t}(Z_t)$  &  $P_{\theta_{s,j}}(Z^{(j)}_s), P_{\theta_{t,i}}(Z^{(i)}_t)$   \\
  Mixture    &    $\int P_{\theta_s}(D^n_t)\omega(\theta_t|\theta_s) d\theta_t P(\theta_s|D^m_s)d\theta_s$ &   $\sum_{j\neq i} q_{ij}\delta_{\theta_{t,j}} + \sum_{k} q_{ik}\delta_{\theta_{s,k}} + r_iG_i$  \\
  Prediction   &  $\argmin_{b} \mathbb{E}_Q\left[\ell(b,Z_t)\right]$ &  $\argmin_{b} \sum_{i=1}^{K}\mathbb{E}_{\theta_{t,i}}\left[\ell(b,Z_t)\right]$   \\
  \hline 
    \end{tabular}
    \caption{Comparisons between Parametric and Nonparametric Models}
    \label{tab:comparison}
\end{table}

\end{remark}

\subsubsection*{Experimental Results}

We illustrate the effect of the prior knowledge in the nonparametric model by firstly validating the algorithm in logistic regression problems for small and large discrepancy scenarios as described in~\ref{subsection:logistic}. Here we use the same hyperparameters and parametric models given in \cite{shahbaba2009nonlinear} (See Simulation 1 for more details). We set $\alpha = 0.01$ and vary $\beta$ The results are shown in Figure~\ref{fig:log-nonpara}. From the regret curves, we can see that if the domain divergence is small, one can achieve a better regret with a large $\beta$ because the posterior will rely more on the source data, which is helpful for prediction. On the contrary, if the target distribution differs much from the source distribution, the regret will be higher with large $\beta$ as relying more on the source data will hurt the performance on the target. If we decrease $\beta$ to $0.01$, the prediction counts more on the target data. Regardless of the source data, the regrets become close under both small and large discrepancies. 

\begin{figure}[h!]
    \centering
    \includegraphics[width = 5.5cm]{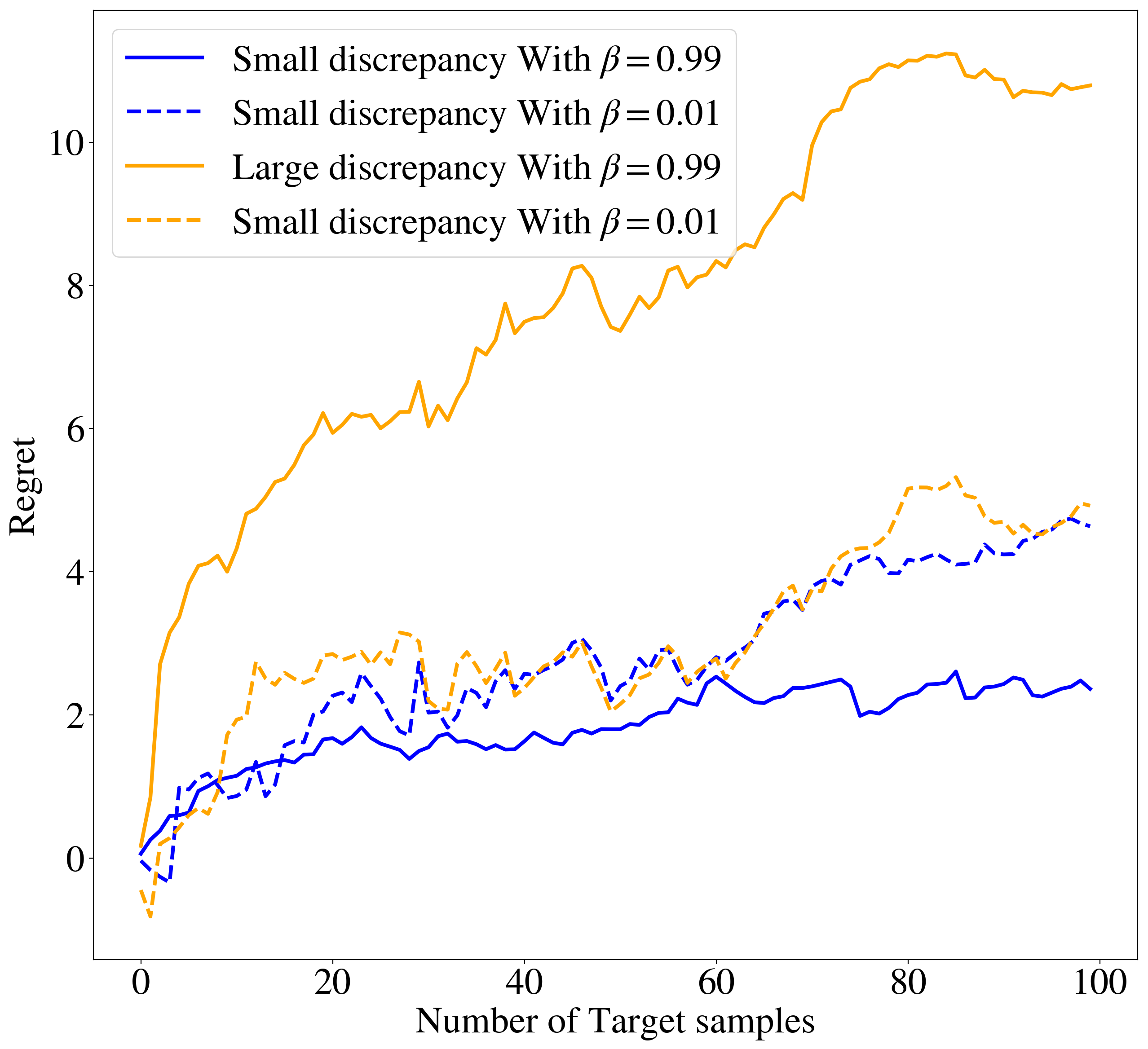}
    \caption{Regret of the Logistic regression problems under the small and large discrepancy scenarios, we choose $\beta$ to be $0.99$ and $0.01$ to show the usefulness of the source.}
    \label{fig:log-nonpara}
\end{figure}

 We also validate the nonparametric algorithm for a real-world transfer problem using the MNIST \citep{lecun-mnisthandwrittendigit-2010} and USPS datasets \citep{uspsdataset}, which are standard digit recognition datasets containing hand-written digits from 0-9. USPS contains 7291 training samples and 2007 testing samples with the resolution of 16 $\times$ 16 while MNIST is consist of 60000 images for training and 10000 images for testing with the resolution of $28 \times 28$. Examples for each dataset are shown in Figure~\ref{fig:handwritten}.
\begin{figure}[h!]
    \centering
    \subfigure[MNIST]{\includegraphics[height = 4cm]{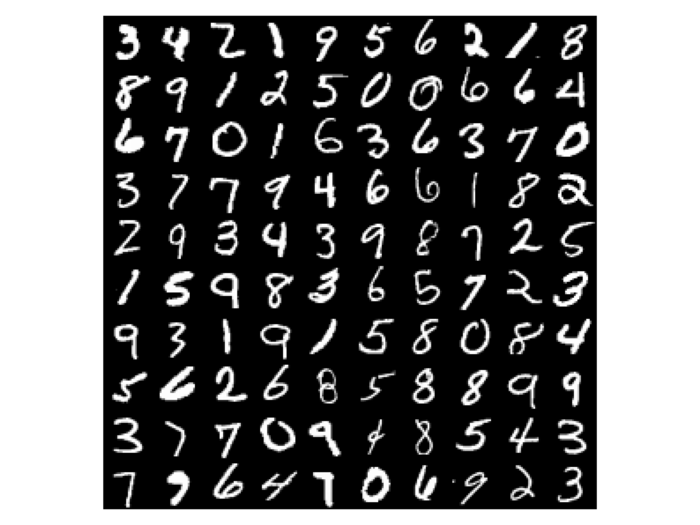}}
    \subfigure[USPS]{\includegraphics[height = 3.9cm]{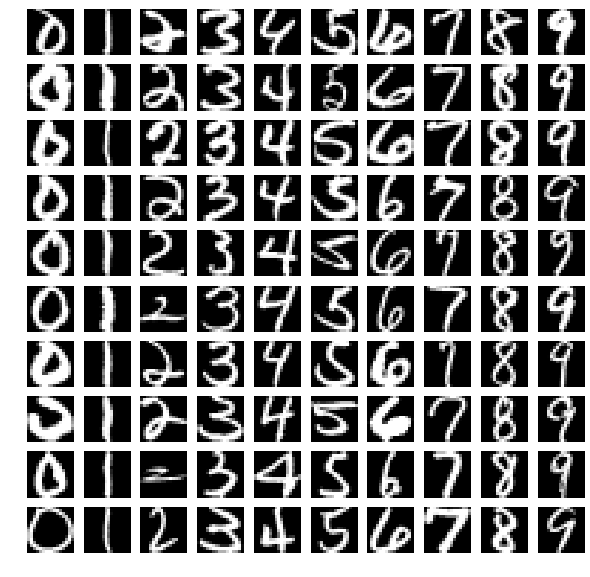}}    
    \caption{Hand-written digit recognition transfer problems}
    \label{fig:handwritten}
\end{figure}
We sampled 2000 images from the MNIST dataset and 1800 images from the USPS dataset and adopted the 256 SURF features\footnote{\url{https://github.com/jindongwang/transferlearning/tree/master/data}} following the existing literature \citep{wang2020transfer}. We conduct the sequential knowledge transfer between these two datasets. For simplicity, we use U $\rightarrow M$ (M $\rightarrow$ U) to denote the transfer from source domain USPS (MNIST) to the target domain MNIST (USPS). For each transfer case, we use the whole batch of the source data and randomly sample 80 samples from the target data as the initialization, then the rest target will arrive sequentially. Our nonparametric algorithm will output the predictive probability for each class, and we will plot the regret by the multi-class cross-entropy loss according to the prediction: 
\begin{align}
 \mathcal{R}_O = - \sum^T_{i=1}\sum_{k=1}^{C} \mathbf{1}_{Y_{i} = k} \log \left(P_{\theta_i}(Y = k|X_i)\right)
\end{align}
where $\mathbf{1}_{Y_{i} = k}$ denotes the indicator function that whether the output $Y_i$ is equal to the $k$th-class, and $P_{\theta_i}(Y = k|X_i)$ denotes the predictive probability of $k$th-class given the parameters $\theta_i$ from the posterior of the Dirichlet process mixture and $X_i$. Moreover, we will make prediction $\hat{Y}_{i}$ at each iteration according to the class with the highest predictive probability and plot the number of mistakes:
\begin{align}
\mathcal{R}_{mistake} = \sum^T_{i=1} \mathbf{1}_{Y_{i} = \hat{Y}_{i}}.
\end{align}
Owing to the fact the computational cost for sampling the posterior with 256 input dimensions becomes an issue in real implementation, we first reduce the dimension from 256 to 16 by some projection methodologies in domain adaptation such as \cite{long2013transfer}. By this, we could not only alleviate the computational cost issues but also reduce the domain divergence. Then we vary $\beta$ from $0.001$ to $0.999$ to show our prior belief on the usefulness of the source and target data. We also compare our method to the baseline dpMNL method described in \cite{shahbaba2009nonlinear} by treating the source and target equally as one domain. After receiving 100 target samples, we plot the accumulated mistakes make by the results are shown in the Figure~\ref{fig:comparison_mistake}.

\begin{figure}[h!]
    \centering
    \subfigure[U $\rightarrow$ M by Cross-Entropy Loss]{\includegraphics[width = 1.8in]{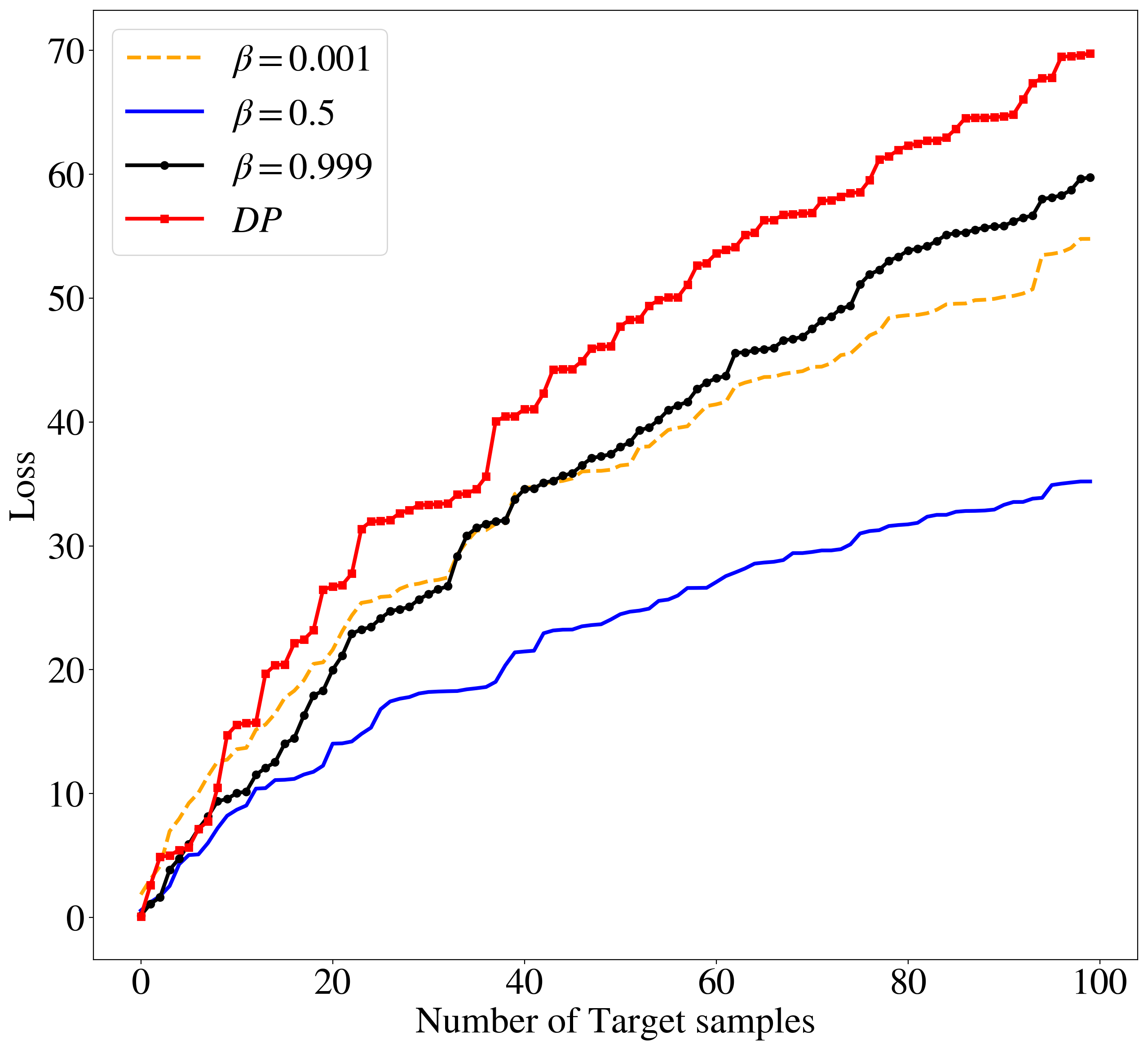}} \quad 
    \subfigure[M $\rightarrow$ U by Cross-Entropy Loss]{\includegraphics[width = 1.8in]{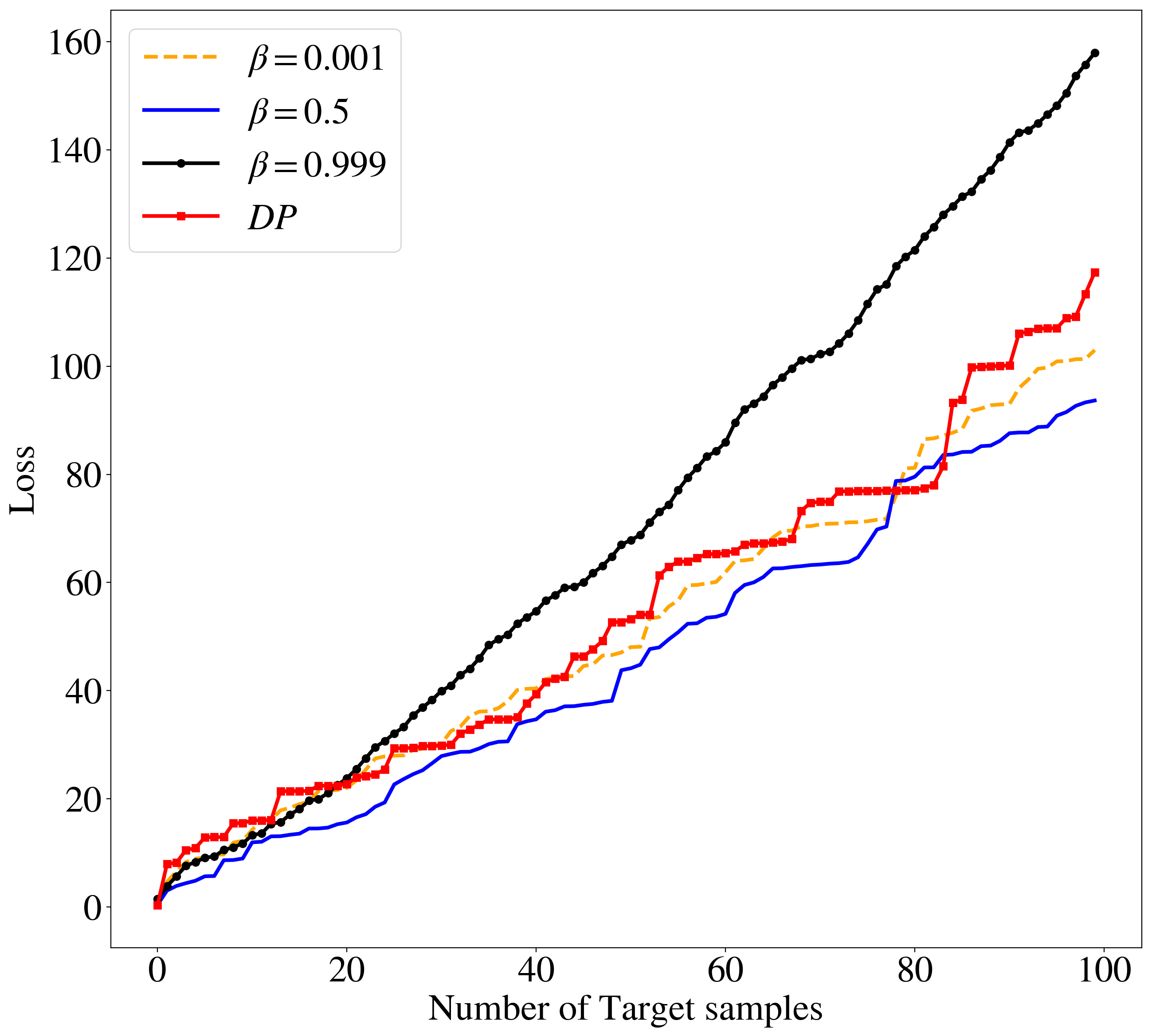}} \\
    \subfigure[U $\rightarrow$ M by Mistakes]{\includegraphics[width = 1.8in]{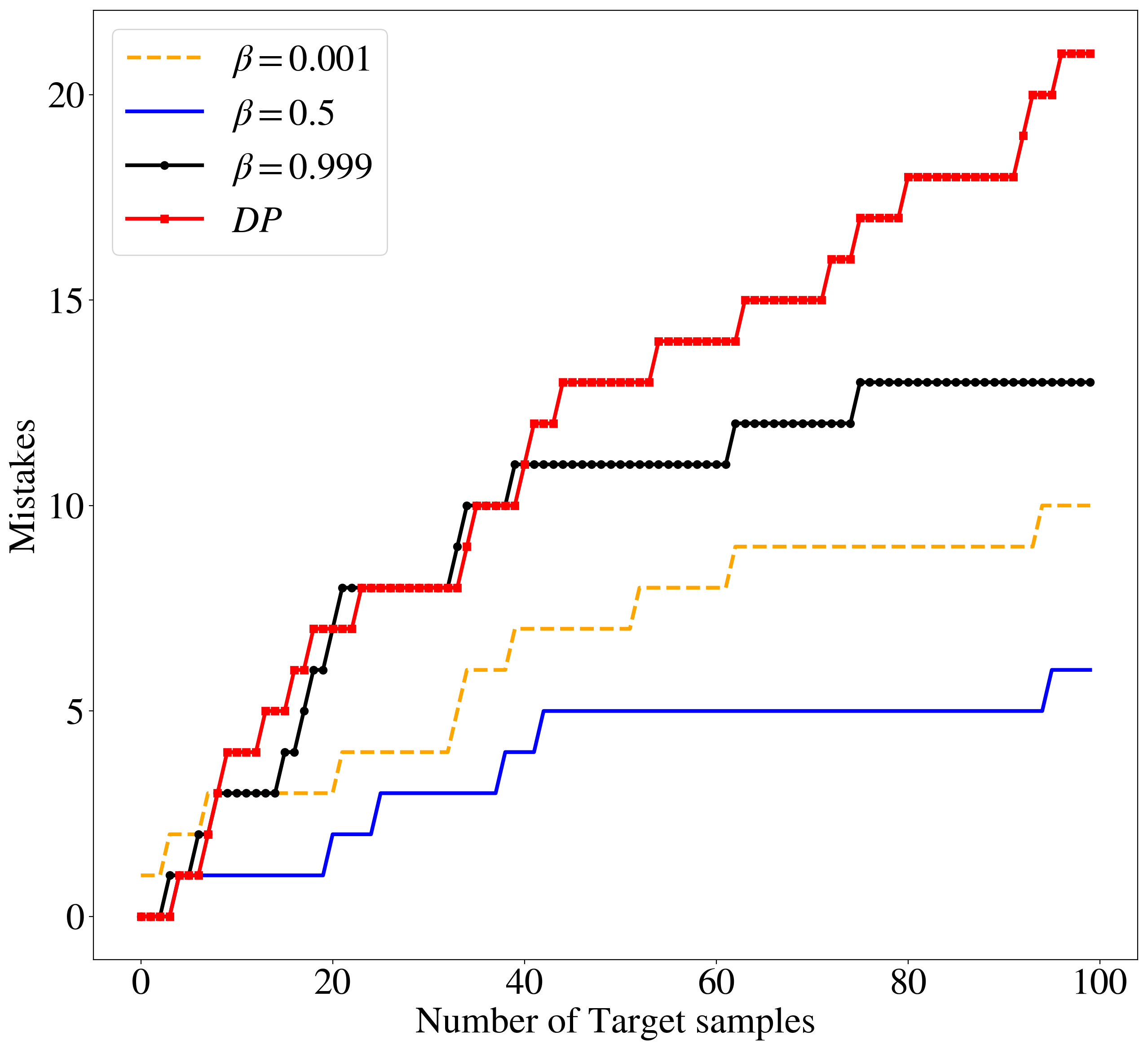}} \quad 
    \subfigure[M $\rightarrow$ U by Mistakes]{\includegraphics[width = 1.8in]{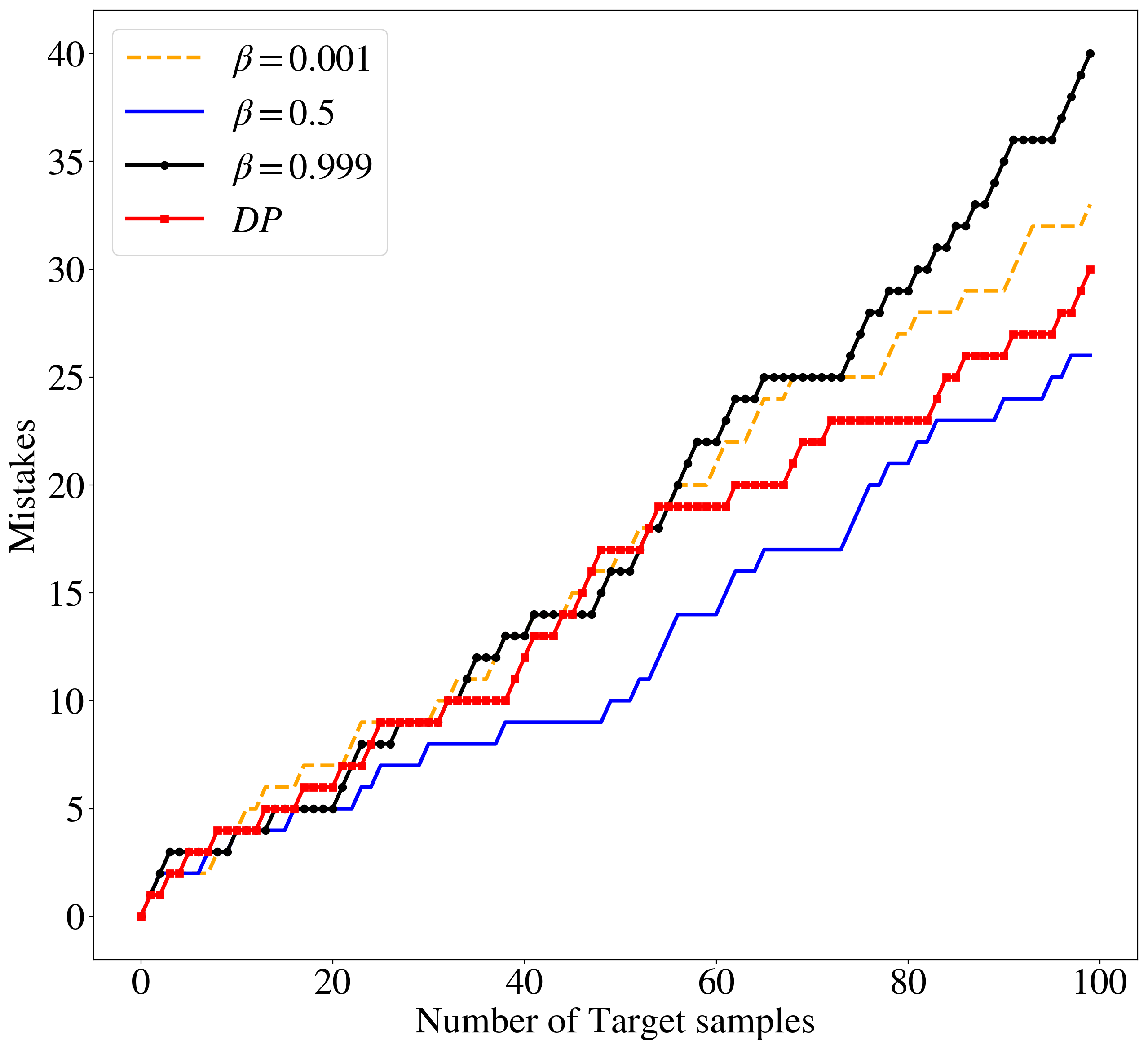}} 
    \caption{The comparisons on the prediction performance for the knowledge transfer between MNIST and USPS datasets by the cross-entropy and the mistakes. We use U $\rightarrow$ M (M $\rightarrow$ U) to denote the transfer from source domain USPS (MNIST) to the target domain MNIST (USPS).}
    \label{fig:comparison_mistake}
\end{figure}

From the experimental results, one can find that in both M $\rightarrow$ U and U $\rightarrow$ M settings, a very large $\beta$ (0.999) or a very small $\beta$ (0.001) may yield unsatisfactory predictive results compared to a moderate $\beta$ value (0.5), which achieves better performance than two extreme cases. Heuristically the performance mainly depends on how large the domain divergence is, or how large the target sample size is. Explicitly, if $\beta$ is too large, the prediction will rely more on the source domains and from which the negative transfer may occur if domain divergence is rather large. When it comes to a small $\beta$, as the prediction does not benefit from the source domain, the performance mainly relies on the target samples with a comparatively small size, leading to an undesirable loss and mistakes. With a moderate choice of $\beta$, the algorithm tries to strike a balance between the two extremes. Furthermore, with the baseline dpMNL method described in \cite{shahbaba2009nonlinear}, the performance is not desirable if we simply treat the source data and the target data equally as one domain. This is possibly due to that the source data is regarded as equally useful as the target data, while the domain discrepancy is not taken into account and thus cannot be alleviated in the learning and prediction, which will lead to a poor result. To achieve better performance, we may select a proper $\beta$ at a moderate level. That is, a moderate $\beta$ will properly extract the information from the limited target data and not overly trust the source data if the distribution differs. We empirical explore such a trade-off between $\beta$ and the performance, which exposes the inherent nature of transfer learning.

\section{Conclusion}\label{Sec7}
We propose a general framework for transfer learning from a Bayesian approach, a learning framework that extends traditional learning regimes to the case where the predictor is learned and deployed on samples drawn from different, yet related probability distributions in terms of parameterization. Specifically, the instantaneous, online and time-variant transfer learning scenarios are examined and the learning performance takes the shape of the conditional mutual information. We give the asymptotic estimation of the conditional mutual information and identify the situations when the negative and positive transfer will happen. However, in our analysis, the i.i.d. properties of both source and target data are crucial. The natural follow-up challenge is to formalize non i.i.d. settings for transfer learning and find similar mixture strategies for efficient utilities. Another future work is to relax the assumptions on parametric conditions to general probability distributions and rigorously find the performance guarantee under nonparameteric framework, which will improve the generality and applicability of our methods.

\section{Proofs}\label{Sec6}
In this section, we give the detailed proofs of the main theorems.

\subsection{Proof of Theorem~\ref{thm:excessrisk-log}}\label{proof:excessrisk-log}
\begin{proof}
We firstly show that given any prior over $\Theta_s$ and $\Theta_t$,
\begin{align}
   I(Z'_t;\Theta_t, \Theta_s|D^n_t, D^{m}_s)      &= I(\Theta_t, \Theta_s; Z'_t, D^n_t, D^{m}_s) - I(\Theta_t, \Theta_s ; D^n_s, D^{m}_s) \\
   &= D(P_{\Theta_t,\Theta_s}(D^n_t,D^{m}_s, Z'_t)\|Q(D^n_t,D^m_s, Z'_t)) - D(P_{\Theta_s,\Theta_t}(D^m_s, D^n_t)\|Q(D^m_s, D^n_t)) \\ 
   &=  \int \left( \mathbb{E}_{\theta_s,\theta_t} \left[ \log \frac{P_{\theta_t,\theta_s}(D^n_t,D^{m}_s, Z'_t)}{Q(D^n_t,D^{m}_s, Z'_t)} \right]  - \mathbb{E}_{\theta_s,\theta_t} \left[ \log \frac{P_{\theta_s,\theta_t}(D^m_s, D^n_t)}{Q(D^m_s, D^n_t)} \right]  \right) \omega(\theta_s,\theta_t) d\theta_s d\theta_t\\
   &= \int \left( \mathbb{E}_{\theta_s,\theta_t} \left[ \log \frac{P_{\theta_t}(Z'_t)}{Q(Z'_t|D^n_t, D^{m}_s)} \right] \right) \omega(\theta_s,\theta_t) d\theta_s d\theta_t,
\end{align}
where in the last equality we use the chain rule and the assumption that both source and target data are drawn in an i.i.d. fashion. The mutual information density at $\Theta_s = \theta^*_s$ and $\Theta_t = \theta^*_t$ is then given by
\begin{align}
     I(Z'_t ;\Theta_t = \theta^*_t, \Theta_s = \theta^*_s | D^n_t,D^{m}_s)  = \mathbb{E}_{\theta^*_s, \theta^*_t} \left[ \log \frac{P_{\theta^*_t}(Z'_t)}{Q(Z'_t|D^n_t, D^{m}_s)} \right] = \mathcal{R}_{I},
\end{align}
which completes the proof.
\end{proof}

\subsection{Proof of Theorem~\ref{thm:excessrisk-general}}
\begin{proof}
We can show that the excess risk for instantaneous transfer learning scenario can be bounded as
\begin{align}
   \mathcal{R}_I &= \mathbb{E}_{\theta^*_t,\theta^*_s} \left[  \ell(b, Z'_t) -  \ell(b^*, Z'_t) \right] \\
   &=   \mathbb{E}_{D^{m}_s,D^{n}_t}\mathbb{E}_{Z'_t}\left[\ell(b, Z'_t)  -  \ell(b^*,Z'_t)|D^m_s, D^{n}_t\right] \\
   &=   \mathbb{E}_{D^{m}_s,D^{n}_t}\int \left( \ell(b, z'_t)  -  \ell(b^*, z'_t) \right) P_{\theta^*_s,\theta^*_t}(z'_t|D^m_s, D^{n}_t) dz'_t \\
  &=   \mathbb{E}_{D^{m}_s,D^{n}_t}\int \left( \ell(b, z'_t)  -  \ell(b^*, z'_t) \right) (P_{\theta^*_s,\theta^*_t}(z'_t|D^m_s, D^{n}_t) - Q(z'_t|D^m_s, D^{n}_t) \noindent +  Q(z'_t|D^m_s, D^{n}_t)) dz'_t \\
  &\overset{(a)}{\leq} \mathbb{E}_{D^{m}_s,D^{n}_t}\int \left( \ell(b, z'_t)  -  \ell(b^*, z'_t) \right) (P_{\theta^*_s,\theta^*_t}(z'_t|D^m_s, D^{n}_t) - Q(z'_t|D^m_s, D^{n}_t)) dz'_t \\
  &\overset{(b)}{\leq}  M \mathbb{E}_{D^{m}_s,D^{n}_t}  \int (P_{\theta^*_s,\theta^*_t}(z'_t|D^m_s, D^{n}_t) - Q(z'_t|D^m_s, D^{n}_t)) dz'_t \\
  &\overset{(c)}{\leq}  M \mathbb{E}_{D^{m}_s,D^{n}_t}  \sqrt{2D\left(P_{\theta^*_s,\theta^*_t}(Z'_t|D^m_s, D^{n}_t) \| Q(Z'_t|D^m_s, D^{n}_t)\right)} \\
  &\overset{(d)}{\leq} M\sqrt{2 \mathbb{E}_{D^{m}_s,D^{n}_t} D\left(P_{\theta^*_s,\theta^*_t}(Z'_t|D^m_s, D^{n}_t) \| Q(Z'_t|D^m_s, D^{n}_t)\right)} \\
  &= M\sqrt{2D\left(P_{\theta^*_t} \| Q | D^m_s, D^{n}_t\right)} \\
  &= M\sqrt{2D(P_{\theta^*_t}(Z'_t) \| Q(Z'_t|D^n_t,D^m_s))} \\
  &= M\sqrt{2I(Z'_t;\Theta_t = \theta^*_t, \Theta_s = \theta^*_s|D^{m}_s,D^n_t) },
\end{align} 
where in $(a)$ we use the definition of $Q$, then $(b)$ holds since we assume the loss function is bounded, $(c)$ follows from the Pinsker's inequality, $(d)$ holds from the Jensen's inequality.
\end{proof}


\subsection{Proof of Theorem \ref{thm:expreg-log}} \label{proof:expreg-log}
\begin{proof}
We firstly show that given any prior over $\Theta_s$ and $\Theta_t$,
\begin{align}
   I(D^n_t;\Theta_t, \Theta_s|D^{m}_s)      &= I(\Theta_t, \Theta_s; D^n_t, D^{m}_s) - I(\Theta_s ; D^{m}_s) \\
   &= D(P_{\Theta_t,\Theta_s}(D^n_t,D^{m}_s)\|Q(D^n_t,D^m_s)) - D(P_{\Theta_s}(D^m_s)\|Q(D^m_s)) \\ 
   &=  \int \left( \mathbb{E}_{\theta_s,\theta_t} \left[ \log \frac{P_{\theta_t,\theta_s}(D^n_t,D^{m}_s)}{Q(D^n_t,D^{m}_s)} \right]  - \mathbb{E}_{\theta_s,\theta_t} \left[ \log \frac{P_{\theta_s}(D^m_s)}{Q(D^m_s)} \right]  \right) \omega(\theta_s,\theta_t) d\theta_s d\theta_t\\
   &= \int \left( \mathbb{E}_{\theta_s,\theta_t} \left[ \log \frac{P_{\theta_t}(D^n_t)}{Q(D^n_t|D^{m}_s)} \right] \right) \omega(\theta_s,\theta_t) d\theta_s d\theta_t,
\end{align}
where in the last equality we use the chain rule and the assumption that source data are independent of $\Theta_t$. The mutual information density at $\Theta_s = \theta^*_s$ and $\Theta_t = \theta^*_t$ is then given by
\begin{align}
     I(D^n_t;\Theta_t = \theta^*_t, \Theta_s = \theta^*_s | D^{m}_s)  = \mathbb{E}_{\theta^*_s, \theta^*_t} \left[ \log \frac{P_{\theta^*_t}(D^n_t)}{Q(D^n_t|D^{m}_s)} \right] = \mathcal{R}_{O}.
\end{align}
\end{proof}

\subsection{Proof of Theorem \ref{thm:expreg-generalloss}}\label{proof:expreg-generalloss}
\begin{proof}
We can show that the expected regret for online transfer learning scenario can be bounded as
\begin{align}
   \mathcal{R}_O &= \mathbb{E}_{\theta^*_t,\theta^*_s} \left[ \sum_{k=1}^{n} \ell(b_k, Z^{(k)}_t) - \sum_{k=1}^{n} \ell(b_k^*, Z^{(k)}_t) \right] \\
   &=  \sum_{k=1}^{n} \mathbb{E}_{D^{m}_s,D^{k-1}_t}\mathbb{E}_{Z^{(k)}_t}\left[\ell(b_k, Z^{(k)}_t)  -  \ell(b^*_k, Z^{(k)}_t)|D^m_s, D^{k-1}_t\right] \\
    &=  \sum_{k=1}^{n} \mathbb{E}_{D^{m}_s,D^{k-1}_t}\int \left( \ell(b_k, Z^{(k)}_t)  -  \ell(b^*_k, Z^{(k)}_t) \right) P_{\theta^*_s,\theta^*_t}(Z^{(k)}_t|D^m_s, D^{k-1}_t) dx^{(k)}_t \\
  &=  \sum_{k=1}^{n} \mathbb{E}_{D^{m}_s,D^{k-1}_t}\int \left( \ell(b_k, Z^{(k)}_t)  -  \ell(b^*_k, Z^{(k)}_t) \right) (P_{\theta^*_s,\theta^*_t}(Z^{(k)}_t|D^m_s, D^{k-1}_t) - Q(Z^{(k)}_t|D^m_s, D^{k-1}_t) \noindent \\
  & \quad +  Q(Z^{(k)}_t|D^m_s, D^{k-1}_t)) dx^{(k)}_t \\
  &\overset{(a)}{\leq} \sum_{k=1}^{n} \mathbb{E}_{D^{m}_s,D^{k-1}_t}\int \left( \ell(b_k, Z^{(k)}_t)  -  \ell(b^*_k, Z^{(k)}_t) \right) (P_{\theta^*_s,\theta^*_t}(Z^{(k)}_t|D^m_s, D^{k-1}_t) - Q(Z^{(k)}_t|D^m_s, D^{k-1}_t)) dx^{(k)}_t \\
  &\overset{(b)}{\leq} \sum_{k=1}^{n} \mathbb{E}_{D^{m}_s,D^{k-1}_t} M \int (P_{\theta^*_s,\theta^*_t}(Z^{(k)}_t|D^m_s, D^{k-1}_t) - Q(Z^{(k)}_t|D^m_s, D^{k-1}_t)) dx^{(k)}_t \\
  &\overset{(c)}{\leq} \sum_{k=1}^{n} \mathbb{E}_{D^{m}_s,D^{k-1}_t} M \sqrt{2D\left(P_{\theta^*_s,\theta^*_t}(Z^{(k)}_t|D^m_s, D^{k-1}_t) \| Q(Z^{(k)}_t|D^m_s, D^{k-1}_t)\right)} \\
  &\overset{(d)}{\leq} M\sum_{k=1}^{n}\sqrt{2 \mathbb{E}_{D^{m}_s,D^{k-1}_t} D\left(P_{\theta^*_s,\theta^*_t}(Z^{(k)}_t|D^m_s, D^{k-1}_t) \| Q(Z^{(k)}_t|D^m_s, D^{k-1}_t)\right)} \\
  &\overset{(e)}{=} M\sum_{k=1}^{n}\sqrt{2D\left(P_{\theta^*_t} \| Q | D^m_s, D^{k-1}_t\right)} \\
  &\overset{(f)}{\leq} Mn \sqrt{\frac{2}{n}\sum_{k=1}^{n} D\left(P_{\theta^*_t} \| Q | D^m_s, D^{k-1}_t\right)} \\
  &\overset{(g)}{=} M\sqrt{2n D(P_{\theta^*_t}(D^n_t) \| Q(D^n_t|D^m_s))} \\
  &= M\sqrt{2n  I(D^n_t;\Theta_t = \theta^*_t, \Theta_s = \theta^*_s|D^{m}_s) }, 
\end{align} 
where in $(a)$ we use the definition of $Q$, then $(b)$ holds since we assume the loss function is bounded, $(c)$ follows from the Pinsker's inequality, $(d)$ and $(f)$ follows from the Jensen's inequality, $(g)$ holds because of the chain rule of the KL divergence.
\end{proof}



\subsection{Proof of Theorem~\ref{coro:log-loss-time-variant}} \label{proof:log-loss-time-variant}
\begin{proof}
We firstly show that given any prior over $\Theta_s$, $\Theta_{t,i-1}$ and $\Theta_{t,i}$ for any episode $i$, we have
\begin{align}
   &I(D^{n_i}_{t,i};\Theta_{t,i},\Theta_{t,i-1}, \Theta_s|D^m_s, D^{n_{i-1}}_{t,i-1}) \\
   &= I(\Theta_{t,i},\Theta_{t,i-1}, \Theta_s; D^{n_i}_{t,i}, D^{n_{i-1}}_{t,i-1}, D^{m}_s) - I(\Theta_{t,i-1}, \Theta_s;  D^{n_{i-1}}_{t,i-1}, D^{m}_s) \\
   &= D(P_{\Theta_{t,i},\Theta_{t,i-1}, \Theta_s}(D^{n_i}_{t,i}, D^{n_{i-1}}_{t,i-1}, D^{m}_s)\|Q(D^{n_i}_{t,i}, D^{n_{i-1}}_{t,i-1}, D^{m}_s)) - D(P_{\Theta_{t,i-1}, \Theta_s}( D^{n_{i-1}}_{t,i-1}, D^{m}_s)\|Q( D^{n_{i-1}}_{t,i-1}, D^{m}_s)) \\ 
   &=  \int \Bigg( \mathbb{E}_{\theta_s,\theta_{t,i}, \theta_{t,i-1}} \left[ \log \frac{P_{\theta_s,\theta_{t,i}, \theta_{t,i-1}}(D^{n_i}_{t,i}, D^{n_{i-1}}_{t,i-1}, D^{m}_s)}{Q(D^{n_i}_{t,i}, D^{n_{i-1}}_{t,i-1}, D^{m}_s)} \right] \noindent \\
   & \quad - \mathbb{E}_{\theta_s,\theta_{t,i-1}} \left[ \log \frac{P_{\theta_s,\theta_{t,i-1}}(D^{n_{i-1}}_{t,i-1}, D^{m}_s)}{Q(D^{n_{i-1}}_{t,i-1}, D^{m}_s)} \right]  \Bigg) \omega(\theta_s,\theta_{t,i}, \theta_{t,i-1}) d\theta_s d\theta_{t,i-1} d\theta_{t,i}\\
   &= \int \left( \mathbb{E}_{\theta_s,\theta_{t,i}, \theta_{t,i-1}} \left[ \log \frac{P_{\theta_t}(D^{n_i}_{t,i})}{Q(D^{n_i}_{t,i}|D^{n_{i-1}}_{t,i-1}, D^{m}_s)} \right] \right) \omega(\theta_s,\theta_{t,i}, \theta_{t,i-1}) d\theta_s d\theta_{t,i-1} d\theta_{t,i},
\end{align}
where in the last equality we use the chain rule and the Assumption~\ref{asp:para-dist-timevarying}. Then the expected regret till episode $l$ can be expressed by the conditional mutual information evaluated at $\Theta_s = \theta^*_s$, $\Theta_{t,i-1} = \theta^*_{t,i-1}$ and $\Theta_{t,i} = \theta^*_{t,i}$ at each episode $i$:
\begin{align}
    \mathcal{R}_{TV}  = \sum_{i=1}^{l} I(D^{n_i}_{t,i};\Theta_{t,i} = \theta^*_{t,i},\Theta_{t,i-1} = \theta^*_{t,i-1}, \Theta_s = \theta^*_s|D^m_s, D^{n_{i-1}}_{t,i-1}).
\end{align}

\end{proof}

\subsection{Proof of Theorem~\ref{coro:general-loss-time-variant}}\label{proof:general-loss-time-variant}
\begin{proof}
 Under the conditions from Theorem~\ref{thm:expreg-generalloss}, we firstly give an upper bound of the expectation term at each episode $i$ as
\begin{align}
       \mathbb{E}_{\theta^*_s,\theta^*_{t,i},\theta^*_{t,i-1}} \left[\sum_{i=1}^{n_i} \ell\left(b_{i}, Z^{(i)}_{t,i}\right) - \sum_{i=1}^{n_i} \ell(b_i^*, Z^{(i)}_{t,i}) \right] \leq M\sqrt{2n_i I(D^{n_i}_{t,i};\Theta_{s} = \theta^*_{s}, \Theta_{t,i-1} = \theta^*_{t,i-1}, \Theta_{t,i} = \theta^*_{t,i}|D^m_s, D^{i-1}_t)},
\end{align}
where we define the conditional mutual information as
\begin{align}
    I(D^{n_i}_{t,i};\Theta_{s} = \theta^*_{s}, \Theta_{t,i-1} = \theta^*_{t,i-1}, \Theta_{t,i} = \theta^*_{t,i}|D^m_s, D^{i-1}_t) := \mathbb{E}_{\theta_s,\theta_{t,i}, \theta_{t,i-1}} \left[ \log \frac{P_{\theta_t}(D^{n_i}_{t,i})}{Q(D^{n_i}_{t,i}|D^{n_{i-1}}_{t,i-1}, D^{m}_s)} \right] 
\end{align}
with the mixture strategy $Q$. By Cauchy-Schwarz inequality,
\begin{align}
    \mathcal{R}_{TV} &= \sum_{i=1}^{l}\mathbb{E}_{\theta^*_s,\theta^*_{t,i},\theta^*_{t,i-1}} \left[\sum_{k=1}^{n_i} \ell\left(b_{k,i}, Z^{(k)}_{t,i}\right) - \sum_{k=1}^{n_i} \ell(b_{k,i}^*, Z^{(k)}_{t,i}) \right] \\
    &\leq \sum_{i=1}^{l} M\sqrt{2n_i I(D^{n_i}_{t,i};\Theta_{s} = \theta^*_{s}, \Theta_{t,i-1} = \theta^*_{t,i-1}, \Theta_{t,i} = \theta^*_{t,i}|D^m_s, D^{i-1}_t)} \\
    &\leq M\sqrt{2l\sum_{i=1}^{l} n_i I(D^{n_i}_{t,i};\Theta_{s} = \theta^*_{s}, \Theta_{t,i-1} = \theta^*_{t,i-1}, \Theta_{t,i} = \theta^*_{t,i}|D^m_s, D^{i-1}_t) },
\end{align}
which complete the proof.
\end{proof}

\noindent Before proving Theorem~\ref{thm:inst-scalar} and Theorem~\ref{thm:instant-gene}, let us prove the results for the OTL case first.

\subsection{Proof of Theorem \ref{thm:consistency-scalar}}\label{proof:conscalar}
\begin{proof}
We give the approximation on the KL divergence to see how the prior will affect the divergence,
\begin{align}
    \mathbb{E}_{\theta^*_s,\theta^*_t} \left[ \log \frac{P_{\theta^*_t}(D^n_t)}{Q(D^n_t| D^{m}_s)} \right] &= \mathbb{E}_{\theta^*_s,\theta^*_t} \left[ \log \frac{P_{\theta^*_t}(D^n_t)P_{\theta^*_s}(D^{m}_s)Q(D^m_s)}{Q(D^n_t,D^{m}_s)P_{\theta^*_s}(D^{m}_s)} \right] \\
    &= \mathbb{E}_{\theta^*_s,\theta^*_t} \left[ \log \frac{P_{\theta^*_t,\theta^*_s}(D^n_t,D^{m}_s)}{Q(D^n_t,D^{m}_s)} \right]  - \mathbb{E}_{\theta^*_s,\theta^*_t} \left[ \log \frac{P_{\theta^*_s}(D^m_s)}{Q(D^m_s)} \right] \\
    &= D(P_{\theta^*_t,\theta^*_s}(D^n_t,D^{m}_s)\|Q(D^n_t,D^m_s)) - D(P_{\theta^*_s}(D^m_s)\|Q(D^m_s)).
\end{align}
We can view that source samples and target samples are jointly sampled given the distribution $P_{\theta^*_s}$ and $P_{\theta^*_t}$. Using the results in \cite{clarke1990information} and \cite{clarke1999asymptotic}, with the proper prior $\omega(\theta_s,\theta_t)$ and parametric conditions, the asymptotic normality of the posterior implies that

\begin{align}
    D(P_{\theta^*_t,\theta^*_s}(D^n_t,D^{m}_s)\|Q(D^n_t,D^m_s)) - \frac{1}{2}\log\operatorname{det}\left( \begin{bmatrix}
    nI_{t}(\theta^*_t) & 0 \\
    0 & m I_{s}(\theta^*_s) 
    \end{bmatrix}
    \right) \rightarrow \log \frac{1}{2\pi e} + \log\frac{1}{\omega(\theta^*_s,\theta^*_t)},
\end{align}
as both $n$ and $m$ goes to infinity, where the fisher information matrices are denoted by
\begin{align}
I_{t}(\theta^*_t) &= - \mathbb{E}_{\theta^*_t}\left[\frac{\partial\log P(x|\theta^*_t)}{\partial \theta_t^2} \right], \\
I_{s}(\theta^*_s) &= - \mathbb{E}_{\theta^*_s}\left[\frac{\partial\log P(x|\theta^*_s)}{\partial \theta_s^2} \right]. 
\end{align}
Similarly,
\begin{align}
D(P_{\theta^*_s}(D^m_s)\|Q(D^m_s)) - \frac{1}{2}\log\operatorname{det}(m I_{s}(\theta^*_s)) \rightarrow \frac{1}{2} \log \frac{1}{2\pi e} + \log\frac{1}{\omega(\theta^*_s)},
\end{align}
as $m$ goes to infinity. Therefore,
\begin{align}
    \lim_{n,m\rightarrow \infty} \left( \mathbb{E}_{\theta^*_s,\theta^*_t} \left[ \log \frac{P_{\theta^*_t}(D^n_t)}{Q(D^n_t| D^{m}_s)} \right]  \right) = &  \frac{1}{2}\log\operatorname{det}\left( \begin{bmatrix}
    nI_{t}(\theta^*_t) & 0 \\
    0 & \alpha n I_{s}(\theta^*_s) 
    \end{bmatrix}
    \right) + \log \frac{1}{2\pi e} + \log\frac{1}{\omega(\theta^*_s,\theta^*_t)} \\
    & -  \frac{1}{2}\log\operatorname{det}(m I_{s}(\theta^*_s)) - \frac{1}{2} \log \frac{1}{2\pi e} - \log\frac{1}{\omega(\theta^*_s)} \\
    = &  \frac{1}{2}\log\operatorname{det}(n I_{t}(\theta^*_t)) + \frac{1}{2}\log\frac{1}{2\pi e} + \log\frac{1}{\omega(\theta^*_t|\theta^*_s)}.
\end{align}
To conclude, as both $n$ and $m$ goes to infinity, the conditional mutual information will converge to
\begin{align}
    I(D^n_t;\Theta_t = \theta^*_t, \Theta_s = \theta^*_s|D^{m}_s)  - \frac{1}{2}\log \frac{n}{2\pi e} \rightarrow \nonumber  \frac{1}{2}\log I_t(\theta^*_t)  + \log\frac{1}{\omega(\theta^*_t|\theta^*_s)}.
\end{align}

\end{proof}

\subsection{Proof of Theorem \ref{theorem:gene-para}}
\label{proof:gene-para}

\begin{proof}
By writing $\theta_s = (\theta_c, \theta_{sr})$ and $\theta_t = (\theta_c, \Theta_{tr,i})$, let us rewrite the conditional mutual information as
\begin{align}
    \mathbb{E}_{\theta^*_s,\theta^*_t} \left[ \log \frac{P_{\theta^*_t}(D^n_t)}{Q(D^n_t| D^{m}_s)} \right] &= \mathbb{E}_{\theta^*_s,\theta^*_t} \left[ \log \frac{P_{\theta^*_t}(D^n_t)P_{\theta^*_s}(D^{m}_s)Q(D^m_s)}{Q(D^n_t,D^{m}_s)P_{\theta^*_s}(D^{m}_s)} \right] \\
    &= \mathbb{E}_{\theta^*_{c},\theta^*_{sr}, \theta^*_{tr}} \left[ \log \frac{P_{\theta^*_t,\theta^*_s}(D^n_t,D^{m}_s)}{Q(D^n_t,D^{m}_s)} \right]  - \mathbb{E}_{\theta^*_{c},\theta^*_{sr}} \left[ \log \frac{P_{\theta^*_s}(D^m_s)}{Q(D^m_s)} \right] \\
    &= D(P_{\theta^*_c,\theta^*_{sr}, \theta^*_{tr}}(D^n_t,D^{m}_s)\|P(D^n_t,D^m_s)) - D(P_{\theta^*_c, \theta^*_{sr}}(D^m_s)\|Q(D^m_s)).
\end{align}
We view that $m$ source samples and $n$ target samples are jointly sampled from the distribution parametrized by the parameters $\boldsymbol{\Theta} = (\Theta_c, \Theta_{sr},\Theta_{tr,i})$. At time $n$, we have the asymptotic approximation under the proper prior and assumption 2 using Theorem 2.1 in \cite{clarke1990information} and \cite{clarke1999asymptotic} as
\begin{align}
    D(P_{\theta^*_c,\theta^*_{sr}, \theta^*_{tr}}(D^n_t,D^{m}_s)\|P(D^n_t,D^m_s)) - \frac{1}{2}\log\operatorname{det}\left( \mathbf{I}_{\boldsymbol \theta^*}
    \right) \rightarrow \frac{2d - j}{2}\log \frac{1}{2\pi e} + \log\frac{1}{\omega(\theta^*_s,\theta^*_t)},
\end{align}
where the Fisher information matrix is defined as
\begin{align}
    \mathbf{I}_{\boldsymbol \theta^*} &= \begin{bmatrix} 
    m I_{cs}({\theta}^*_c) + n I_{ct}({\theta}^*_c) & m I_{cs}({\theta}^*_c,{\theta}^*_{sr}) & n I_{ct}({\theta}^*_c,{\theta}^*_{tr}) \\
    m I^T_{cs}(\theta^*_c,{\theta}^*_{sr}) & m I_s({\theta}^*_{sr}) & \mathbf{0} \\
    nI^T_{ct}({\theta}^*_c,{\theta}^*_{tr}) & \mathbf{0} & n I_t({\theta}^*_{tr})
    \end{bmatrix} 
\end{align}
Similarly,
\begin{align}
    D(P_{\theta^*_c, \theta^*_{sr}}(D^m_s)\|Q(D^m_s)) - \frac{1}{2}\log\operatorname{det}\left( \mathbf{I}_{\boldsymbol \theta^*_s}
    \right) \rightarrow \frac{d}{2}\log \frac{1}{2\pi e} + \log\frac{1}{\omega(\theta^*_s)},
\end{align}
where
\begin{align}
    \mathbf{I}_{\boldsymbol \theta^*_s} &= m\begin{bmatrix}
    I_{cs}({\theta}^*_c) & I_{cs}({\theta}^*_c, {\theta}^*_{sr}) \\ 
    I^T_{cs}({\theta}^*_c,{\theta}^*_{sr}) & I_s({\theta}^*_{sr})
    \end{bmatrix}
\end{align}
as $m$ goes to sufficiently large. As a consequence,
\begin{align}
    \lim_{n,m\rightarrow \infty} \left( \mathbb{E}_{\theta^*_s,\theta^*_t} \left[ \log \frac{P_{\theta^*_t}(D^n_t)}{Q(D^n_t| D^{m}_s)} \right]  \right) = &  \frac{1}{2}\log\operatorname{det}\left( \mathbf{I}_{\boldsymbol \theta^*} \right) + \frac{2d-j}{2}\log \frac{1}{2\pi e} + \log\frac{1}{\omega(\theta^*_s,\theta^*_t)} \noindent \\
    & -  \frac{1}{2}\log\operatorname{det}(\mathbf{I}_{\boldsymbol \theta^*_s}) - \frac{d}{2} \log \frac{1}{2\pi e} - \log\frac{1}{\omega(\theta^*_s)} \\
    = &  \frac{1}{2}\log \frac{\operatorname{det}({\mathbf{I}_{\boldsymbol \theta^*}})}{\operatorname{det}({\mathbf{I}_{\boldsymbol {\theta^*_s}}})} + \frac{d - j}{2}\log\frac{1}{2\pi e} + \log\frac{1}{\omega(\theta^*_t|\theta^*_s)}.
\end{align}
Let us examine the ratio of the determinant, using the block determinant results from~\cite{powell2011calculating},
\begin{align}
\log \frac{\operatorname{det}({\mathbf{I}_{\boldsymbol \theta^*}})}{\operatorname{det}({\mathbf{I}_{\boldsymbol {\theta^*_s}}})} = & \log \operatorname{det}\left(    m I_{cs}({\theta}^*_c) + n I_{ct}({\theta}^*_c) -
\begin{bmatrix} 
m I_{cs}({\theta}^*_c,{\theta}^*_{sr}) & n I_{ct}({\theta}^*_c,{\theta}^*_{tr}) 
\end{bmatrix} \begin{bmatrix} 
 m I_s({\theta}^*_{sr}) & \mathbf{0} \nonumber \\
 \mathbf{0} & n I_t({\theta}^*_{tr})
\end{bmatrix}^{-1} \begin{bmatrix}
m I^T_{cs}({\theta}^*_c,{\theta}^*_{sr})  \\
n I^T_{ct}({\theta}^*_c,{\theta}^*_{tr}) 
\end{bmatrix}
\right) \\
& + \log \operatorname{det}\left(\begin{bmatrix}  m I_s({\theta}^*_{sr}) & \mathbf{0} \\
\mathbf{0} & nI_t({\theta}^*_{tr})
\end{bmatrix} \right) - \log \operatorname{det}\left( m(I_{cs}({\theta}^*_c) - I_{cs}({\theta}^*_c, {\theta}^*_{sr})I^{-1}_s({\theta}^*_{sr}) I^T_{cs}({\theta}^*_c, {\theta}^*_{sr}) )\right) \nonumber \\
& - \log \operatorname{det}\left( m I_s({\theta}^*_{sr}) \right) \\
= & \log \operatorname{det}(m\Delta_s + n\Delta_t) - \log\operatorname{det}(m\Delta_s) + \log\operatorname{det}(nI_t({\theta}^*_{tr}) \\
= & \log \operatorname{det}(\mathbf{I}_{j} + \frac{n}{m}\Delta_t\Delta^{-1}_s) + \log\operatorname{det}(nI_t({\theta}^*_{tr}),
\end{align}
where we define $\Delta_s = I_{cs}(\theta^*_c) - I_{cs}(\theta^*_c,\theta^{*}_{sr}) I^{-1}_s(\theta^{*}_{sr}) I^{T}_{cs}(\theta^*_c,\theta^{*}_{sr})$ and $\Delta_t = I_{ct}(\theta^*_c) - I_{ct}(\theta^*_c,\theta^{*}_{tr}) I^{-1}_t(\theta^{*}_{tr}) I^{T}_{ct}(\theta^*_c,\theta^{*}_{tr}) $. $\mathbf{I}_{j}$ denotes the identity matrix with size $j$ and $\boldsymbol \theta^* = (\theta^*_c, \theta^{*}_{sr}, \theta^{*}_{tr})$ denotes the true parameters. With a little abuse of notation, we define the Fisher information matrices as 
\begin{align}
I_{cs}(\theta^*_c) &= - \mathbb{E}_{\theta^*_s}\left[ \nabla^2_{\Theta_c}  \log P(Z_s |\Theta_c, \theta^*_{sr}) \right]\Big|_{\Theta_c = \theta^*_c} \in \mathbb{R}^{j\times j}, \label{eq:fisher-start} \\
I_{ct}(\theta^*_c) &= - \mathbb{E}_{\theta^*_t}\left[ \nabla^2_{\Theta_c}  \log P(Z_t |\Theta_c, \theta^*_{tr}) \right]\Big|_{\Theta_c = \theta^*_c} \in \mathbb{R}^{j\times j},\\
I_{s}(\theta^*_{sr}) &= - \mathbb{E}_{\theta^*_s}\left[ \nabla^2_{\Theta_{sr}}  \log P(Z_s |\theta^*_c, \Theta_{sr})\right]\Big|_{\Theta_{sr} = \theta^*_{sr}} \in \mathbb{R}^{(d-j) \times (d-j)}, \\
I_{t}(\theta^*_{tr}) &= - \mathbb{E}_{\theta^*_t}\left[ \nabla^2_{\Theta_{tr}}  \log P(Z_t |\theta^*_c, \Theta_{tr})\right]\Big|_{\Theta_{tr} = \theta^*_{tr}} \in \mathbb{R}^{(d-j) \times (d-j)},\\
I_{cs}(\theta^*_c, \theta^*_{sr}) &= -\mathbb{E}_{\theta^*_s}\left[\frac{\partial\log P(Z_s |\theta^*_c, \theta^*_{sr})}{  \partial \Theta_{c,i} \partial \Theta_{sr,k}} \right]_{ \begin{matrix}
i = 1,\cdots, j,\\
 k = 1,\cdots, d-j
 \end{matrix}} \in \mathbb{R}^{j\times (d-j)}, \\
I_{ct}(\theta^*_c, \theta^*_{tr}) &= - \mathbb{E}_{\theta^*_t}\left[\frac{\partial\log P(Z_t |\theta^*_c, \theta^*_{tr})}{\partial \Theta_{c,i} \partial \Theta_{tr,k}} \right]_{ \begin{matrix}
i = 1,\cdots, j,\\
 k = 1,\cdots, d-j
 \end{matrix}} \in \mathbb{R}^{j \times (d-j)}. \label{eq:fisher-end}
\end{align} 
Putting everything together, by setting $m = cn^p$ we reach
\begin{align}
I(D^n_t;\Theta_t = \theta^*_t, \Theta_s = \theta^*_s|D^{m}_s)   - \frac{1}{2} \log \operatorname{det}(n I_t(\theta^{*}_{tr}))  - \frac{1}{2} \log \operatorname{det}(\mathbf{I}_{j} + \frac{1}{cn^{p-1}} \Delta_t \Delta^{-1}_s)\rightarrow \frac{d-j}{2}\log \frac{1}{2\pi e} + \log \frac{1}{\omega(\theta^*_t|\theta^*_s)}, 
\end{align}
as both $n$ and $m$ goes to infinity.
\end{proof}

\noindent Now we can use the results of Theorem~\ref{thm:expreg-log} and Theorem~\ref{thm:expreg-generalloss} to proof Theorem~\ref{thm:inst-scalar} and Theorem~\ref{thm:instant-gene} for ITL case.

\subsection{Proof of Theorem~\ref{thm:inst-scalar}} \label{proof:inst-scalar}
\begin{proof}
From Theorem~\ref{thm:consistency-scalar}, we have that
\begin{align}
     I(D^n_t;\Theta_t = \theta^*_t, \Theta_s = \theta^*_s|D^{m}_s) - \frac{1}{2}\log \frac{n}{2\pi e} \rightarrow   \frac{1}{2}\log I_t(\theta^*_t)  + \log\frac{1}{\omega(\theta^*_t|\theta^*_s)}.
\end{align}
By the i.i.d. property, additionally we have
\begin{align}
     I(Z'_t, D^n_t;\Theta_t = \theta^*_t, \Theta_s = \theta^*_s|D^{m}_s) - \frac{1}{2}\log \frac{n+1}{2\pi e} \rightarrow \frac{1}{2}\log I_t(\theta^*_t)  + \log\frac{1}{\omega(\theta^*_t|\theta^*_s)}.
\end{align}
Therefore as both $m$ and $n$ go to sufficiently large, the instantaneous prediction yields
\begin{align}
I(Z'_t;\Theta_t = \theta^*_t, \Theta_s = \theta^*_s|D^n_t, D^{m}_s) &=  I(Z'_t, D^n_t;\Theta_t = \theta^*_t, \Theta_s = \theta^*_s|D^{m}_s) - I(D^n_t;\Theta_t = \theta^*_t, \Theta_s = \theta^*_s|D^{m}_s) \\
&= \frac{1}{2}\log \frac{n+1}{2\pi e} -  \frac{1}{2}\log \frac{n}{2\pi e} \\
&= \frac{1}{2}\log (1 + \frac{1}{n}) \\
&\asymp \frac{1}{n},
\end{align}
which is the typical result for the optimal rate of $O(\frac{1}{n})$.
\end{proof}

\subsection{Proof of Theorem~\ref{thm:instant-gene}} \label{proof:instant-gene}
\begin{proof}
From Theorem~\ref{theorem:gene-para}, as we assume $m = cn^p$ for some positive constant $c$, the asymptotic approximation of the expected regret is expressed as
\begin{align}
I(D^n_t;\Theta_t = \theta^*_t, \Theta_s = \theta^*_s|D^{m}_s)   - \frac{1}{2} \log \operatorname{det}(n I_t(\theta^{*}_{tr}))  - \frac{1}{2} \log \operatorname{det}(\mathbf{I}_{j} + \frac{n}{cn^p} \Delta_t \Delta^{-1}_s)\rightarrow \frac{d-j}{2}\log \frac{1}{2\pi e} + \log \frac{1}{\omega(\theta^*_t|\theta^*_s)}. 
\end{align}
If we take $Z'_t$ take into consideration, we similarly have
\begin{align}
I(Z'_t, D^n_t;\Theta_t = \theta^*_t, \Theta_s = \theta^*_s|D^{m}_s)   & - \frac{1}{2} \log \operatorname{det}((n+1) I_t(\theta^{*}_{tr}))  - \frac{1}{2} \log \operatorname{det}(\mathbf{I}_{j} + \frac{n+1}{c n^p} \Delta_t \Delta^{-1}_s) \noindent  \\
& \rightarrow \frac{d-j}{2}\log \frac{1}{2\pi e} + \log \frac{1}{\omega(\theta^*_t|\theta^*_s)}. 
\end{align}
As a consequence, when both $n$ and $m$ go to infinity,
\begin{align}
& I(Z'_t;\Theta_t = \theta^*_t, \Theta_s = \theta^*_s|D^n_t, D^{m}_s) =  I(Z'_t, D^n_t;\Theta_t = \theta^*_t, \Theta_s = \theta^*_s|D^{m}_s) - I(D^n_t;\Theta_t = \theta^*_t, \Theta_s = \theta^*_s|D^{m}_s) \\
&= \frac{d - j}{2}\log (1 + \frac{1}{n}) +  \frac{1}{2}\log\det(cn^p\Delta_s + (n+1)\Delta_t) - \frac{1}{2}\log\det(cn^p\Delta_s + n\Delta_t)  
\end{align}
We will use the expansion of determinant:
\begin{align}
    \operatorname{det}(\mathbf{I}+\frac{1}{n^{k}} A) = 1 + \frac{1}{n^{k}} \operatorname{Tr}(A)+o(1 / n^{k}).
\end{align}
For $0 \leq p < 1$:
\begin{align}
    &I(Z'_t;\Theta_t = \theta^*_t, \Theta_s = \theta^*_s|D^n_t, D^{m}_s)  = \frac{d - j}{2}\log (1 + \frac{1}{n}) +  \frac{1}{2}\log\det(c n^p\Delta_s + (n+1)\Delta_t) - \frac{1}{2}\log\det(c n^p\Delta_s + n\Delta_t)   \\
    &= \frac{d - j}{2}\log (1 + \frac{1}{n}) + \frac{j}{2}\log(1+\frac{1}{n}) + \frac{1}{2}\log\det(\mathbf{I}_j + \frac{c n^p}{n+1}\Delta_s\Delta^{-1}_t) - \frac{1}{2}\log\det(\mathbf{I}_j + \frac{cn^p}{n}\Delta_s\Delta^{-1}_t) \\
    &= \frac{d - j}{2}\log (1 + \frac{1}{n}) + \frac{j}{2}\log(1+\frac{1}{n}) + \frac{1}{2}\frac{c n^p}{n+1} \operatorname{Tr}(\Delta_s\Delta^{-1}_t) - \frac{1}{2}\frac{c n^p}{n}\operatorname{Tr}(\Delta_s\Delta^{-1}_t)  + o(\frac{1}{n}) \\
    &= \frac{d - j}{2}\log (1 + \frac{1}{n}) + \frac{j}{2}\log(1+\frac{1}{n}) + \frac{1}{2}\frac{c n^p}{n(n+1)}\operatorname{Tr}(\Delta_s\Delta^{-1}_t) + o(\frac{1}{n}) \\
    &\asymp \frac{d}{n}
\end{align}
For $p \geq 1$:
\begin{align}
    &I(Z'_t;\Theta_t = \theta^*_t, \Theta_s = \theta^*_s|D^n_t, D^{m}_s)  = \frac{d - j}{2}\log (1 + \frac{1}{n}) +  \frac{1}{2}\log\det(c n^p\Delta_s + (n+1)\Delta_t) - \frac{1}{2}\log\det(c n^p\Delta_s + n\Delta_t)   \\
    &= \frac{d - j}{2}\log (1 + \frac{1}{n}) + \frac{1}{2}\log\det(\mathbf{I}_j + \frac{n+1}{cn^p}\Delta_t\Delta^{-1}_s) - \frac{1}{2}\log\det(\mathbf{I}_j + \frac{n}{cn^p}\Delta_t\Delta^{-1}_s) \\
    &=  \frac{d - j}{2}\log (1 + \frac{1}{n}) + \frac{1}{2}\frac{n+1}{c n^p} \operatorname{Tr}(\Delta_t\Delta^{-1}_s) - \frac{1}{2}\frac{n}{c n^p}\operatorname{Tr}(\Delta_t\Delta^{-1}_s)  + o(\frac{1}{n^p}) \\
    &= \frac{d - j}{2}\log (1 + \frac{1}{n}) + \frac{1}{2c n^p}\operatorname{Tr}(\Delta_t\Delta^{-1}_s)  + o(\frac{1}{n^p}) \\
    &\asymp  \frac{d-j}{n} + \frac{j}{n^p}
\end{align}
To conclude,
\begin{equation}
 \mathcal{R}_I  \asymp \frac{d - j}{n} + \frac{j}{n \vee n^p }  , 
\end{equation}
which completes the proof.
\end{proof}

\subsection{Proof of Corollary~\ref{coro:rate-gene-online}} \label{proof:rate-gene-online}
\begin{proof}
From Theorem~\ref{theorem:gene-para}, we have
\begin{align}
I(D^n_t;\Theta_t = \theta^*_t, \Theta_s = \theta^*_s|D^{m}_s)   = \frac{1}{2} \log \operatorname{det}(n I_t(\theta^{*}_{tr}))  + \frac{1}{2} \log \operatorname{det}(\mathbf{I}_{j} + \frac{n}{m} \Delta_t \Delta^{-1}_s) +  O(1).
\end{align}
This quantity depends on the source sample size $m = cn^p$, assume $m = c n^p$ with some positive constant $c$. For $0 \leq p < 1$:
\begin{align}
    & I(D^n_t;\Theta_t = \theta^*_t, \Theta_s = \theta^*_s|D^{m}_s)   = \frac{1}{2} \log \operatorname{det}(n I_t(\theta^{*}_{tr}))  + \frac{1}{2} \log \operatorname{det}(cn^p \Delta_s + n \Delta_t ) -  \frac{1}{2} \log \operatorname{det}(cn^p \Delta_s)  +  O(1) \\
    &= \frac{1}{2} \log \operatorname{det}(n I_t(\theta^{*}_{tr})) + \frac{j}{2}\log n - \frac{jp}{2} \log n + \frac{1}{2}\frac{1}{n^{1-p}} \operatorname{Tr}(c\Delta_s\Delta^{-1}_t) - \frac{1}{2}\log\operatorname{det} (c\Delta_s)+  O(1) \\
    &\asymp (d-j)\log n + j(1-p)\log n.
\end{align}
For $p \geq 1$:
\begin{align}
    & I(D^n_t;\Theta_t = \theta^*_t, \Theta_s = \theta^*_s|D^{m}_s)   = \frac{1}{2} \log \operatorname{det}(n I_t(\theta^{*}_{tr}))  + \frac{1}{2} \log \operatorname{det}(c n^p \Delta_s + n \Delta_t ) -  \frac{1}{2} \log \operatorname{det}(c n^p \Delta_s)  +  O(1) \\
    &= \frac{1}{2} \log \operatorname{det}(n I_t(\theta^{*}_{tr})) + \frac{1}{2}\log \operatorname{det}(\mathbf{I}_j + \frac{1}{cn^{p-1}}\Delta_t\Delta^{-1}_s) + O(1) \\
    &= \frac{1}{2} \log \operatorname{det}(n I_t(\theta^{*}_{tr})) + \frac{1}{2}\frac{1}{cn^{p-1}}\operatorname{Tr}(\Delta_t\Delta^{-1}_s) + O(1) + o(\frac{1}{n^{p-1}}) \\
    &\asymp (d-j)\log n + \frac{j}{n^{p-1}}.
\end{align}
which completes the proof.
\end{proof}

\subsection*{Proof of Theorem~\ref{thm:time-varying}} \label{proof:time-varying}
\begin{proof}
Let us give an asymptotic estimation on $I(D^{n_i}_{t,i};\Theta_{t,i} = \theta^*_{t,i}|D^m_s, D^{n_i}_{t,i})$ as all $n_i$, $n_{i-1}$ and $m$ are large enough shown in the following theorem.
\begin{theorem}
 Under Assumptions~1 and 2, with $\Theta_s,\Theta_{t,i-1}. \Theta_{t,i} \in \mathbb{R}^d$ defined in the paper and as $n_{i-1}, n_i, m \rightarrow \infty$, the mixture strategy with proper prior $\omega(\Theta_s,\Theta_{t,i-1}, \Theta_{t,i})$ yields
 \begin{small}
\begin{align}
I(D^{n_i}_{t,i};\Theta_{t,i} = \theta^*_{t,i}|D^m_s, D^{n_{i-1}}_{t,i-1})  - &  \log\operatorname{det}\left(\mathbf{I}_{j_i \times j_i} + \frac{n_i}{m+n_{i-1}}\Delta_{ct,i}\Delta^{-1}_{cst,i} \right)  - \log \operatorname{det}(\mathbf{I}_{c_i\times c_i} + \frac{n_i}{n_{i-1}} \Delta_{t,i} \Delta^{-1}_{t,i-1}) - \log\operatorname{det}(n_i I_{t,i}(\theta^*_{tr,i})) \nonumber  \\
&\rightarrow  (d - j_i - c_i)\log\frac{1}{2\pi e} + \frac{2}{\omega(\theta^*_{t,i}|\theta^*_{t,i-1},\theta^*_s)}.
\end{align}
 \end{small}
\end{theorem}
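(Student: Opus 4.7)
My plan closely follows the argument used for Theorem~\ref{theorem:gene-para} but must accommodate the finer five-way parameter decomposition that TVTL introduces. Write $\boldsymbol\theta = (\theta_{c,i},\, \theta_{sr,i},\, \theta_{v,i},\, \theta_{tr,i-1},\, \theta_{tr,i})$ with respective dimensions $(j_i,\, d-j_i,\, c_i,\, d-j_i-c_i,\, d-j_i-c_i)$. By the chain rule for KL divergence applied to the factorization in Assumption~\ref{asp:para-dist-timevarying}, I rewrite the conditional mutual information as
\begin{align*}
I(D^{n_i}_{t,i};\Theta_{t,i}=\theta^*_{t,i}|D^m_s,D^{n_{i-1}}_{t,i-1})
&= D\bigl(P_{\boldsymbol\theta^*}(D^m_s, D^{n_{i-1}}_{t,i-1}, D^{n_i}_{t,i}) \,\|\, Q(D^m_s, D^{n_{i-1}}_{t,i-1}, D^{n_i}_{t,i})\bigr) \\
&\quad - D\bigl(P_{\boldsymbol\theta^*}(D^m_s, D^{n_{i-1}}_{t,i-1}) \,\|\, Q(D^m_s, D^{n_{i-1}}_{t,i-1})\bigr),
\end{align*}
exactly as in Section~\ref{proof:gene-para}.

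Under properness of $\omega$ and Assumption~\ref{asp:para-trans}, I would invoke the Clarke--Barron asymptotic normality on each KL term. The key observation is that the two resulting Fisher information matrices differ only in the rows and columns associated with $\theta_{tr,i}$, and the scaling of each diagonal block is governed by which datasets depend on the corresponding parameters: $\theta_{sr,i}$ by $m$, $\theta_{tr,i-1}$ by $n_{i-1}$, $\theta_{tr,i}$ by $n_i$; $\theta_{v,i}$ by $n_{i-1}+n_i$ in the numerator and $n_{i-1}$ in the denominator; and $\theta_{c,i}$ by $m+n_{i-1}+n_i$ in the numerator and $m+n_{i-1}$ in the denominator. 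Off-diagonal coupling blocks obey the same rule. The prior contributions collapse by marginalization to $\log \omega(\theta^*_{t,i}|\theta^*_{t,i-1},\theta^*_s)^{-1}$, and the $\log(2\pi e)$ constant acquires a multiplicity equal to the dimension difference $d - j_i - c_i$, matching the right-hand side of the claim.

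The main obstacle is simplifying the resulting ratio of log-determinants into exactly the three claimed factors. I would do this by a nested Schur-complement reduction, peeling off the private blocks in order. First eliminate $\theta_{tr,i}$ (present only in the numerator); this contributes $\det(n_i I_{t,i}(\theta^*_{tr,i}))$ together with rank-one corrections to the remaining blocks that, in the statement's notation, are absorbed into the effective Fisher blocks $\Delta_{t,i}$ and $\Delta_{ct,i}$. Next eliminate $\theta_{v,i}$; since the two target episodes inform it with weights $n_{i-1}+n_i$ (numerator) and $n_{i-1}$ (denominator), this reduction yields the factor $\det\bigl(\mathbf{I}_{c_i} + \tfrac{n_i}{n_{i-1}} \Delta_{t,i}\Delta^{-1}_{t,i-1}\bigr)$ by the same algebra used in the OTL proof with $n_{i-1}$ playing the role of $m$. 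Finally eliminate $\theta_{c,i}$, which both source and past target inform, yielding $\det\bigl(\mathbf{I}_{j_i} + \tfrac{n_i}{m+n_{i-1}} \Delta_{ct,i}\Delta^{-1}_{cst,i}\bigr)$; the denominator $m+n_{i-1}$ is precisely the combined weight informing the shared block in the conditioning distribution. The remaining blocks $\theta_{sr,i}$ and $\theta_{tr,i-1}$ appear with identical scaling in numerator and denominator and cancel outright after their own Schur reductions. Keeping the five-way block algebra consistent through this nested elimination is the technically delicate part; once verified, combining the result with Theorem~\ref{coro:general-loss-time-variant} and the hypotheses $n_{i-1}\asymp n_i$, $m\asymp n_i^p$ yields Theorem~\ref{thm:time-varying} by summation over episodes.
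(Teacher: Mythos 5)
Your proposal follows essentially the same route as the paper: the CMI is written as the difference of two Clarke--Barron-type KL divergences, asymptotic normality is applied to the joint $(3d-2j_i-c_i)$- and $(2d-j_i)$-dimensional Fisher information matrices with exactly the block scalings you describe ($m+n_{i-1}+n_i$ vs.\ $m+n_{i-1}$ on the shared block, $n_i+n_{i-1}$ vs.\ $n_{i-1}$ on the target-common block, etc.), and the ratio of determinants is reduced by Schur-complement/block-determinant identities to the three claimed factors, with the prior terms collapsing to $\log\omega(\theta^*_{t,i}|\theta^*_{t,i-1},\theta^*_s)^{-1}$ and the $\log(2\pi e)$ multiplicity $d-j_i-c_i$ arising from the dimension difference. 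Your passing remark that the two matrices ``differ only in the rows and columns associated with $\theta_{tr,i}$'' is imprecise (the $\theta_{c,i}$ and $\theta_{v,i}$ blocks also change scale), but your subsequent description of the scalings corrects this, so the argument matches the paper's.
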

\begin{proof}
Rewrite the conditional mutual information in terms of the KL divergence as
\begin{align}
    &I(D^{n_i}_{t,i};\Theta_{t,i} = \theta^*_{t,i}|D^m_s, D^{n_{i-1}}_{t,i-1}) = \mathbb{E}_{\theta^*_s,\theta^*_{t,i},\theta^*_{t,i-1}}\left[\log \frac{P_{\theta^{*}_{t,i}}(D^{n_i}_{t,i})}{Q(D^{n_i}_{t,i}|D^{n_{i-1}}_{t,i-1}, D^m_s)}\right] \\
    &= D(P_{\theta^*_s,\theta^*_{t,i},\theta^*_{t,i-1}}(D^m_s, D^{n_{i-1}}_{t,i-1},D^{n_i}_{t,i})\|Q(D^m_s, D^{n_{i-1}}_{t,i-1},D^{n_i}_{t,i})) -  D(P_{\theta^*_s,\theta^*_{t,i-1}}(D^m_s, D^{n_{i-1}}_{t,i-1})\|Q(D^m_s, D^{n_{i-1}}_{t,i-1})).
\end{align}
Let us align the parameter as $\boldsymbol{\Theta} = (\Theta_{c,i}, \Theta_{v,i}, \Theta_{sr,i}, \Theta_{tr,i-1}, \Theta_{tr,i})$ and $\boldsymbol{\Theta}_{s} = (\Theta_{c,i}, \Theta_{v,i}, \Theta_{sr,i}, \Theta_{tr,i-1})$ and we define a set of Fisher information matrices as
\begin{align}
I_{cs,i}(\theta^*_{c,i}) &= - \mathbb{E}_{\theta^*_s}\left[ \nabla^2_{\Theta_{c,i}}  \log P(Z_s |\Theta_{c,i}, \theta^*_{sr,i}) \right]\Big|_{\Theta_{c,i} = \theta^*_{c,i}} \in \mathbb{R}^{j_i\times j_i}, \\
I_{s,i}(\theta^*_{sr,i}) &= - \mathbb{E}_{\theta^*_s}\left[ \nabla^2_{\Theta_{sr,i}}  \log P(Z_s |\theta^*_{c,i}, \Theta_{sr,i})\right]\Big|_{\Theta_{sr,i} = \theta^*_{sr,i}} \in \mathbb{R}^{(d-j_i) \times (d-j_i)}, \\
I_{ct,i-1}(\theta^*_{c,i}) &= - \mathbb{E}_{\theta^*_{t,i-1}}\left[ \nabla^2_{\Theta_{c,i}}  \log P(Z_{t,i-1} |\Theta_{c,i}, \theta^*_{v,i}, \theta^*_{tr,i-1}) \right]\Big|_{\Theta_{c,i} = \theta^*_{c,i}} \in \mathbb{R}^{j_i\times j_i},\\
I_{ct,i}(\theta^*_{c,i}) &= - \mathbb{E}_{\theta^*_t}\left[ \nabla^2_{\Theta_{c,i}}  \log P(Z_{t,i} |\Theta_{c,i}, \theta^*_{v,i}, \theta^*_{tr,i}) \right]\Big|_{\Theta_{c,i} = \theta^*_{c,i}} \in \mathbb{R}^{j_i\times j_i},\\
I_{t,i-1}(\theta^*_{tr,i-1}) &= - \mathbb{E}_{\theta^*_{t,i-1}}\left[ \nabla^2_{\Theta_{tr,i-1}}  \log P(Z_{t,i-1} |\theta^*_{c,i}, \theta^*_{v,i}, \Theta_{tr,i-1})\right]\Big|_{\Theta_{tr,i-1} = \theta^*_{tr,i-1}} \in \mathbb{R}^{(d-c_i - j_i) \times (d- c_i - j_i)},\\
I_{t,i}(\theta^*_{tr,i}) &= - \mathbb{E}_{\theta^*_t}\left[ \nabla^2_{\Theta_{tr,i}}  \log P(Z_{t,i} |\theta^*_{c,i}, \Theta_{tr,i})\right]\Big|_{\Theta_{tr,i} = \theta^*_{tr}} \in \mathbb{R}^{(d-c_i - j_i) \times (d- c_i - j_i)},\\
I_{v_{i-1}}(\theta^*_{v,i}) &= - \mathbb{E}_{\theta^*_{t,i-1}}\left[ \nabla^2_{\Theta_{v,i}}  \log P(Z_{t,i-1} |\theta^*_{c,i}, \Theta_{v,i}, \theta^*_{tr,i-1}) \right]\Big|_{\Theta_{v,i} = \theta^*_{v,i}} \in \mathbb{R}^{c_i \times c_i}, \\
I_{v_i}(\theta^*_{v,i}) &= - \mathbb{E}_{\theta^*_{t,i}}\left[ \nabla^2_{\Theta_{v,i}}  \log P(Z_{t,i} |\theta^*_{c,i}, \Theta_{v,i}, \theta^*_{tr,i}) \right]\Big|_{\Theta_{v,i} = \theta^*_{v,i}} \in \mathbb{R}^{c_i \times c_i}, \\
I_{sc}(\theta^*_{c,i}, \theta^*_{sr,i}) &= -\mathbb{E}_{\theta^*_s}\left[\frac{\partial\log P(Z_s |\theta^*_{c,i}, \theta^*_{sr,i})}{  \partial \Theta^p_{c,i} \partial \Theta^q_{sr,i}} \right]_{ \begin{matrix}
p = 1,\cdots, j_i,\\
 q = 1,\cdots, d  - j_i
 \end{matrix}} \in \mathbb{R}^{j_i\times (d - j_i)}, \\
I_{ctr,i-1}(\theta^*_{c,i}, \theta^*_{tr,i-1}) &= - \mathbb{E}_{\theta^*_{t,i-1}}\left[\frac{\partial\log P(Z_{t,i-1} |\Theta_{c,i} =\theta^*_{c,i}, \theta^*_{v,i}, \Theta_{tr,i-1} = \theta^*_{tr,i-1})}{\partial \Theta^{p}_{c,i} \partial \Theta^q_{tr,i-1}} \right]_{ \begin{matrix}
p = 1,\cdots, j_i,\\
 q = 1,\cdots, d - c_i - j_i
 \end{matrix}} \in \mathbb{R}^{j_i \times (d- c_i - j_i)}, \\
 I_{ctr,i}(\theta^*_c, \theta^*_{tr,i}) &= - \mathbb{E}_{\theta^*_t}\left[\frac{\partial\log P(Z_{t,i} |\Theta_{c,i} =\theta^*_{c,i}, \theta^*_{v,i}, \Theta_{tr,i} = \theta^*_{tr,i})}{\partial \Theta^{p}_{c,i} \partial \Theta^{q}_{tr,i}} \right]_{ \begin{matrix}
p = 1,\cdots, j_i,\\
 q = 1,\cdots, d - c_i - j_i
 \end{matrix}} \in \mathbb{R}^{j_i \times (d- c_i - j_i)}, \\
 I_{vt,i-1}(\theta^*_{v,i},\theta^*_{tr,i-1}) &= - \mathbb{E}_{\theta^*_t}\left[\frac{\partial\log P(Z_{t,i-1} |\theta^*_{c,i}, \Theta_{v,i} = \theta^*_{v,i}, \Theta_{tr,i-1} = \theta^*_{tr,i-1})}{\partial \Theta^{p}_{v,i} \partial \Theta^q_{tr,i-1}} \right]_{ \begin{matrix}
p = 1,\cdots, c_i,\\
 q = 1,\cdots, d - c_i - j_i
 \end{matrix}} \in \mathbb{R}^{c_i \times (d- c_i - j_i)},  \\
  I_{vt,i}(\theta^*_v,\theta^*_{tr,i}) &= - \mathbb{E}_{\theta^*_t}\left[\frac{\partial\log P(Z_{t,i} |\theta^*_{c,i}, \Theta_{v,i} = \theta^*_{v,i}, \Theta_{tr,i} = \theta^*_{tr,i})}{\partial \Theta^p_{v,i} \partial \Theta^q_{tr,i}} \right]_{ \begin{matrix}
p = 1,\cdots, c_i,\\
 q = 1,\cdots, d - c_i - j_i
 \end{matrix}} \in \mathbb{R}^{c_i \times (d- c_i - j_i)},  \\
 I_{cv_{i-1}}(\theta^*_{c,i},\theta^*_{v,i}) &= - \mathbb{E}_{\theta^*_{t,i-1}}\left[\frac{\partial\log P(Z_{t,i-1} |\Theta_{c,i} = \theta^*_{c,i}, \Theta_{v,i} = \theta^*_{v,i}, \theta^*_{tr,i-1} = \theta^*_{tr,i-1})}{\partial \Theta^p_{v,i} \partial \Theta^q_{tr,i-1}} \right]_{ \begin{matrix}
p = 1,\cdots, j_i,\\
 q = 1,\cdots,  c_i 
 \end{matrix}} \in \mathbb{R}^{j_i \times c_i},  \\
  I_{cv_i}(\theta^*_{c,i},\theta^*_{v,i}) &= - \mathbb{E}_{\theta^*_t}\left[\frac{\partial\log P(Z_{t,i} |\Theta_{c,i} = \theta^*_{c,i}, \Theta_{v,i} = \theta^*_{v,i}, \theta^*_{tr,i})}{\partial \Theta^p_{v,i} \partial \Theta^q_{tr,i}} \right]_{ \begin{matrix}
p = 1,\cdots, j_i,\\
 q = 1,\cdots, c_i
 \end{matrix}} \in \mathbb{R}^{j_i \times c_i},
\end{align} 
where $\Theta^p$ ($\Theta^q$) denotes the $p$th ($q$th) element in $\Theta$. To simplify the notations, we omit the function variables for all Fisher information matrices (for example, we write $I_{cs,i}(\theta^*_{c,i})$ as $I_{cs,i}$). Then the asymptotic normality implies that
\begin{align}
    D(P_{\theta^*_s,\theta^*_{t,i},\theta^*_{t,i-1}}(D^m_s, D^{n_{i-1}}_{t,i-1},D^{n_i}_{t,i})\|Q(D^m_s, D^{n_{i-1}}_{t,i-1},D^{n_i}_{t,i})) - \frac{1}{2}\log\operatorname{det}\left( \mathbf{I}_{\boldsymbol \theta^*}
    \right) \rightarrow \frac{3d - 2j_i - c_i}{2}\log \frac{1}{2\pi e} + \log\frac{1}{\omega(\theta^*_s,\theta^*_t)},
\end{align}
where the Fisher information matrix is defined as
\begin{align}
    \mathbf{I}_{\boldsymbol \theta^*} =  \begin{bmatrix} 
    m I_{cs,i} + n_{i-1}I_{ct,i-1} + n_iI_{ct,i}  & n_i  I_{cv_i}  + n_{i-1}  I_{cv_{i-1}} & mI_{sc,i} & n_{i-1}I_{ctr,i-1} & n_iI_{ctr,i}   \\
    n_i I^T_{ct,i}  + n_{i-1} I^T_{ct,i-1} & n_iI_{v_i} + n_{i-1}I_{v_{i-1}} & \mathbf{0} & n_{i-1}I_{vt,i-1} & n_iI_{vt,i} \\
    mI^T_{sc,i} & \mathbf{0} & mI_{s,i} & \mathbf{0} & \mathbf{0} \\   
    n_{i-1}I^T_{ctr,i-1} & n_{i-1}I^T_{vt,i-1} & \mathbf{0} & n_{i-1} I_{t,i-1}  & \mathbf{0} \\
    n_iI^T_{ctr,i} & n_iI^T_{vt,i} & \mathbf{0} & \mathbf{0} & n_i I_{t,i}
    \end{bmatrix},
\end{align}
as all $n_i$,$n_{i-1}$ and $m$ go to infinity. Similarly,
\begin{align}
D(P_{\theta^*_s,\theta^*_{t,i-1}}(D^m_s, D^{n_{i-1}}_{t,i-1})\|Q(D^m_s, D^{n_{i-1}}_{t,i-1})) - \frac{1}{2}\log\operatorname{det}\left( \mathbf{I}_{\boldsymbol \theta^*_s}
    \right) \rightarrow \frac{2d - j_i}{2}\log \frac{1}{2\pi e} + \log\frac{1}{\omega(\theta^*_s,\theta^*_t)},
\end{align}
where
\begin{align}
    \mathbf{I}_{\boldsymbol \theta^*_s} &= \begin{bmatrix} 
     m I_{cs,i} + n_{i-1}I_{ct,i-1}   &  n_{i-1}  I_{cv_{i-1}} &  mI_{sc,i}   &  n_{i-1}I_{ctr,i-1}  \\
    n_{i-1}  I^T_{cv_i-1} & n_{i-1}I_{v_i-1} & \mathbf{0} & n_{i-1}I_{vt,i-1}   \\
     mI^T_{sc,i} & \mathbf{0} & mI_{s,i} & \mathbf{0}  \\   
    n_{i-1}I^T_{ctr,i-1} & n_{i-1}I^T_{vt,i-1} & \mathbf{0} & n_{i-1} I_{t,i-1}\\
    \end{bmatrix}. 
\end{align}
Then by subtraction, we have
\begin{align}
    \frac{\log\operatorname{det}(\mathbf{I}_{\boldsymbol \theta^*})}{\log\operatorname{det}(\mathbf{I}_{\boldsymbol \theta^*_s} )} = \log\operatorname{det}\left(\mathbf{I}_{j_i \times j_i} + \frac{n_i}{m+n_{i-1}}\Delta_{ct,i}\Delta^{-1}_{cst,i} \right)  + \log \operatorname{det}(\mathbf{I}_{c_i\times c_i} + \frac{n_i}{n_{i-1}} \Delta_{t,i} \Delta^{-1}_{t,i-1}) + \log\operatorname{det}(n_iI_{t,i}), 
\end{align}
where $\Delta_{ct,i} = I_{ct,i} - I_{ctr,i}I^{-1}_{t,i}I^T_{ctr,i}$, $\Delta_{cst,i} = \frac{m}{m+n_{i-1}}(I_{cs} - I_{sc,i}I^{-1}_{s,i}I^T_{sc,i} )+ \frac{n_{i-1}}{m+n_{i-1}} (I_{ct,i-1} - I_{ctr,i-1}I^{-1}_{t,i-1}I^T_{ctr,i-1})$, $\Delta_{t,i} = I_{v_i} - I_{vt,i}I^{-1}_{t,i}I^T_{vt,i}$, and $\Delta_{t,i-1} = I_{v_{i-1}} - I_{vt,i-1}I^{-1}_{t,i-1}I^T_{vt,i-1}$. Then we have the asymptotic estimation as
\begin{small}
\begin{align}
I(D^{n_i}_{t,i};\Theta_{t,i} = \theta^*_{t,i}|D^m_s, D^{n_{i-1}}_{t,i-1}) \rightarrow &  \log\operatorname{det}\left(\mathbf{I}_{j_i \times j_i} + \frac{n_i}{m+n_{i-1}}\Delta_{ct,i}\Delta^{-1}_{cst,i} \right)  + \log \operatorname{det}(\mathbf{I}_{c_i\times c_i} + \frac{n_i}{n_{i-1}} \Delta_{t,i} \Delta^{-1}_{t,i-1}) + \log\operatorname{det}(n_iI_{t,i}) \nonumber  \\
&+(d - j_i - c_i)\log\frac{1}{2\pi e} + \frac{2}{\omega(\theta^*_{t,i}|\theta^*_{t,i-1},\theta^*_s)}.
\end{align}
\end{small}
\end{proof}
\noindent Based on the theorem above, by putting things together, finally we reach
\begin{align}
    \mathcal{R}_{TV} \leq & M \Bigg( l \sum_{i=1}^{l} n_i \Big( \log\operatorname{det}\left(\mathbf{I}_{j_i \times j_i} + \frac{n_i}{m+n_{i-1}}\Delta_{ct,i}\Delta^{-1}_{cst,i} \right) \nonumber + \log \operatorname{det}(\mathbf{I}_{c_i\times c_i} + \frac{n_l}{n_{l-1}} \Delta_{t,i} \Delta^{-1}_{t,i-1}) + \log\operatorname{det}(n_iI_{t,i}) \nonumber  \\
    &+(d - j_i - c_i)\log\frac{1}{2\pi e} + \frac{2}{\omega(\theta^*_{t,i}|\theta^*_{t,i-1},\theta^*_s)}\Big) \Bigg)^{\frac{1}{2}}. 
\end{align}
Since $n_{l-1} \asymp n_l$ and $m \asymp n^p_l$, using the same procedure from Corollary~\ref{coro:rate-gene-online} we will arrive at
\begin{align}
    \mathcal{R}_{TV} \lesssim &  \sqrt{ k \sum_{l=1}^{k}  n_l \Big( j_i (1 \wedge n_l^{1-p})+ c_i   + (d-c_i-j_i)\log n_i + \frac{2}{\omega(\theta^*_{t,i}|\theta^*_{t,i-1},\theta^*_s)} \Big)  }.
\end{align}

\end{proof}

\subsection{Proof of Proposition~\ref{prop:neg-inst}}\label{proof:neg-inst}
\begin{proof}
Since we have
\begin{align}
    \mathcal{R}_I = \mathbb{E}_{\theta^*_t,\theta^*_s}\left[  \log \frac{P_{\theta^*_t}(Z'_t)}{Q(Z'_t|D^n_t, D^m_s)}\right], 
\end{align}
where we use the mixture strategy for the conditional distribution $Q$ as
\begin{align}
    \mathbb{E}\left[ \log Q(Z'_t|D^n_t, D^m_s)\right] &= \mathbb{E}\left[ \log  \frac{\int P_{\theta_t}(D^n_t, Z'_t) P_{\theta_s}(D^m_s) \omega(\theta_t,\theta_s) d\theta_t d\theta_s }{\int P_{\theta_t}(D^n_t) P_{\theta^*_s}(D^m_s) \omega(\theta_t,\theta_s) d\theta_t d\theta_s  }\right] \\
    &= \mathbb{E}\left[ \log  \frac{\int P_{\theta_t}(Z'_t) P_{\theta_t}(D^n_t) \omega(\theta_t|\theta_s) d\theta_t P(\theta_s|D^m_s) d\theta_s }{\int P_{\theta_t}(D^n_t) \omega(\theta_t|\theta_s) d\theta_t P(\theta_s|D^m_s) d\theta_s} \right] \\
    &= \mathbb{E}\left[ \log  \frac{\int P_{\theta_t}(Z'_t) P_{\theta_t}(D^n_t) \omega(\theta_t|\theta^*_s) d\theta_t }{\int P_{\theta_t}(D^n_t) \omega(\theta_t|\theta^*_s) d\theta_t } \right] \\
    &= \mathbb{E}\left[ \log  \int P_{\theta_t}(Z'_t) P_{\omega}(\theta_t|D^n_t) d\theta_t \right] \\
&\leq \max_{\tilde{\theta}_t \in\operatorname{supp}(\omega(\Theta_t|\Theta^*_s))} \mathbb{E}\left[ \log  \int  P_{\tilde{\theta}_t}(Z'_t) P_{\omega}(\theta_t|D^n_t) d\theta_t \right] \\
    & =  \mathbb{E}\left[ \log  \int P_{\tilde{\theta}_t}(Z'_t) P_{\omega}(\theta_t|D^n_t) d\theta_t \right] \\
    &= \mathbb{E}\left[ \log  P_{\tilde{\theta}_t}(Z'_t) \right].
\end{align}
The inequality holds as $\tilde{\theta}_t = \argmin_{\theta_t \in\operatorname{supp}(\omega(\Theta_t|\Theta^*_s))} D_{\textup{KL}}(P_{\theta^*_t}(Z_t)\|P_{{\theta_t}}(Z_t))$ and we define the conditional posterior as
\begin{align}
    P_{\omega}(\theta_t|D_t) = \frac{\omega(\theta_t|\theta^*_s)P_{\theta_t}(D^n_t)}{\int \omega(\theta_t|\theta^*_s)P_{\theta_t}(D^n_t) d\theta_t }.
\end{align}
The domain of this posterior is the same as the support of $\omega(\theta_s|\theta_t)$. Therefore, minimising the KL divergence $D_{\textup{KL}}(P_{\theta^*_t}(Z_t)\|P_{{\theta_t}}(Z_t))$ w.r.t. $\theta_t$ is equivalent to maximising the cross entropy as
\begin{align}
    \argmin_{\theta_t \in\operatorname{supp}(\omega(\Theta_t|\Theta^*_s))} D_{\textup{KL}}(P_{\theta^*_t}(Z_t)\|P_{{\theta_t}}(Z_t)) = \argmax_{\theta_t \in\operatorname{supp}(\omega(\Theta_t|\Theta^*_s))} \mathbb{E}_{\theta^*_t}[\log P_{\theta_t}(Z_t)].
\end{align}
Then the result follows. 
\end{proof}

\subsection{Proof of Proposition \ref{prop:neg-online}} \label{proof:neg-online}
\begin{proof}
We need to prove there exists a prior $\omega(\theta_s, \theta_t)$, the expected regrets such that
\begin{align}
    \mathbb{E}_{\theta^*_s,\theta^*_t} \left[ \log \frac{P_{\theta^*_t}(D^n_t)}{Q(D^n_t| D^{m}_s)} \right] > \mathbb{E}_{\theta^*_t} \left[ \log \frac{P_{\theta^*_t}(D^n_t)}{\hat{Q}(D^n_t)} \right]. 
\end{align}
It is equivalent to prove that, 
\begin{align}
    \mathbb{E}_{\theta^*_t, \theta^*_s} \left[ \log \frac{Q(D^m_s)\hat{Q}(D^n_t)}{Q(D^n_t, D^{m}_s)} \right] > 0.
\end{align}
Let us examine the logarithmic term,
\begin{align}
    \log \frac{Q(D^m_s)\hat{Q}(D^n_t)}{Q(D^n_t, D^{m}_s)} &= \log \frac{\hat{Q}(D^n_t) \int P_{\theta_t,\theta_s}(D^m_s)\omega(\theta_t, \theta_s)d \theta_t d\theta_s}{\int P_{\theta_t,\theta_s}(D^n_t, D^m_s)\omega(\theta_t, \theta_s)d \theta_t d\theta_s} \\
    &= \log \frac{\hat{Q}(D^n_t) \int P_{\theta_s}(D^m_s)\omega(\theta_s) d\theta_s}{\int P_{\theta_t,\theta_s}(D^n_t, D^m_s)\omega(\theta_t, \theta_s)d \theta_t d\theta_s} \\
    &= \log \frac{1}{\int \int \hat{Q}(\theta_t|D^n_t)\frac{\omega(\theta_t|\theta_s)}{\hat{\omega}(\theta_t)}d \theta_t Q(\theta_s|D^m_s)d\theta_s}.
\end{align}
When both $m$ and $n$ are sufficient enough and the marginal prior distribution $\omega(\theta_s)$ and $\hat{\omega}(\theta_t)$ are proper, $\hat{Q}(\theta_t|D^n_t)$ and $Q(\theta_s|D^m_s)$ will be concentrated near $\theta^*_t$ and $\theta^*_s$. Then the above equation becomes
\begin{align}
       \log \frac{Q(D^m_s)\hat{Q}(D^n_t)}{Q(D^n_t, D^{m}_s)}  &= - \log {\int^{\theta^*_s + \delta_s}_{\theta^*_s -\delta_s}\int^{\theta^*_t + \delta_t}_{\theta^*_t -\delta_t} \hat{Q}(\theta_t|D^n_t)\frac{\omega(\theta_t|\theta^*_s)}{\hat{\omega}(\theta_t)}  d \theta_t Q(\theta_s|D^m_s)d\theta_s},
\end{align}
for some small $\delta_t$ and $\delta_s$. Since the prior $\omega(\theta_t|\theta_s)$ is imposed improperly, then $\omega(\theta_t|\theta_s)$ has zero density around $\theta^*_t$, then since for any $\hat{\omega}(\theta_t) > 0$, the following inequality holds.
\begin{align}
     \mathbb{E}_{\theta^*_s,\theta^*_t} \left[ \log \frac{\hat{Q}(D^n_t)Q(D^m_s)}{Q(D^n_t,D^m_s)} \right]  &=  -\mathbb{E}_{\theta^*_s,\theta^*_t} \left[ \log {\int^{\theta^*_s + \delta_s}_{\theta^*_s -\delta_s}\int^{\theta^*_t + \delta_t}_{\theta^*_t -\delta_t} \hat{Q}(\theta_t|D^n_t)\frac{\omega(\theta_t|\theta^*_s)}{\hat{\omega}(\theta_t)}  d \theta_t Q(\theta_s|D^m_s)d\theta_s} \right] \\
    & > -\mathbb{E}_{\theta^*_s,\theta^*_t} \left[  \log {\int^{\theta^*_s + \delta_s}_{\theta^*_s -\delta_s}\int^{\theta^*_t + \delta_t}_{\theta^*_t -\delta_t} \hat{Q}(\theta_t|D^n_t)\frac{\hat{\omega}(\theta_t)}{\hat{\omega}(\theta_t)}  d \theta_t Q(\theta_s|D^m_s)d\theta_s} \right] \\
    & = 0,
\end{align}
when the source and target are sufficiently large. Therefore, it implies that
\begin{align}
    \mathcal{R}_{\omega(\Theta_s,\Theta_t)}(n) > \mathcal{R}_{\hat{\omega}(\Theta_t)}(n).
\end{align}
Furthermore, if we rewrite the regret as
\begin{align*}
    \mathcal{R}_O &= \mathbb{E}_{\theta^*_t}\left[ \log\frac{P_{\theta^*_t}(D^n_t)}{Q(D^n_t|D^m_s)} \right] \\
    &= H_{P_{\theta^*_t}}(D^n_t) - \mathbb{E}_{\theta^*_t}\left[\log Q(D^n_t|D^m_s) \right] \\
    &= H_{P_{\theta^*_t}}(D^n_t) - \mathbb{E}_{\theta^*_t}\left[ \int \int P_{\theta_t}(D^n_t)\omega(\theta_t|\theta_s)  d\theta_t Q(\theta_s|D^m_s) d\theta_s \right] \\
    &\overset{(a)}{=} H_{P_{\theta^*_t}}(D^n_t) - \mathbb{E}_{\theta^*_t}\left[ \int P_{\theta_t}(D^n_t)\omega(\theta_t|\theta^*_s)  d\theta_t \right] \\
    &\overset{(b)}{\geq} H_{P_{\theta^*_t}}(D^n_t) - \mathbb{E}_{\theta^*_t}\left[ P_{\tilde{\theta}_t}(D^n_t) \right] \\
    &= D_{\textup{KL}}(P_{\theta^*_t}(D^n_t)\| P_{\tilde{\theta}_t}(D^n_t)) \\
    &\overset{(c)}{=} n D_{\textup{KL}}(P_{\theta^*_t}(Z_t)\| P_{\tilde{\theta}_t}(Z_t)),
\end{align*}
where $(a)$ follows that when $m$ goes to sufficiently large and we assume that the prior $\omega(\theta_s)$ is proper, the posterior $Q(\theta_s|D^m_s)$ will approach the true parameter $\theta^*_s$. $(b)$ holds as we define
\begin{equation}
\tilde{\theta}_t = \argmin_{\theta_t \in\operatorname{supp}(\omega(\Theta_t|\Theta^*_s))} D_{\textup{KL}}(P_{\theta^*_t}(Z_t)\|P_{{\theta_t}}(Z_t)),
\end{equation}
which is equivalent to maximise the cross entropy as
\begin{equation}
\tilde{\theta}_t = \argmax_{\theta_t \in\operatorname{supp}(\omega(\Theta_t|\Theta^*_s))} \mathbb{E}_{\theta^*_t}\left[ \log P_{\theta_t}(Z_t)  \right].
\end{equation}
With the assumption~\ref{asp:para-dist}, with the i.i.d. property we have
\begin{equation}
\tilde{\theta}_t = \argmax_{\theta_t \in\operatorname{supp}(\omega(\Theta_t|\Theta^*_s))} \mathbb{E}_{\theta^*_t}\left[ \log P_{\theta_t}(D^n_t)  \right].
\end{equation}
Therefore, $(b)$ holds as
\begin{equation}
\mathbb{E}_{\theta^*_t}\left[ \int P_{\theta_t}(D^n_t)\omega(\theta_t|\theta^*_s)  d\theta_t \right]  \leq \max_{\tilde{\theta}_t \in\operatorname{supp}(\omega(\Theta_t|\Theta^*_s))} \mathbb{E}_{\theta^*_t}\left[ \int P_{\tilde{\theta}_t}(D^n_t) \omega(\theta_t|\theta^*_s)  d\theta_t \right] = \mathbb{E}_{\theta^*_t}\left[ P_{\tilde{\theta}_t}(D^n_t) \right], 
\end{equation}
and finally $(c)$ follows as target data are drawn i.i.d. from $P_{\theta^*_t}$. With the dual properties, the proof of the upper bound follows the same procedures as the lower upper.
\end{proof}

\subsection{Proof of Proposition \ref{claim:positive}} \label{proof:positive}
\begin{proof}
We need to prove under the certain assumptions, there exists a prior $\omega(\theta_s, \theta_t)$, the expected regrets such that
\begin{align}
    \mathbb{E}_{\theta^*_s,\theta^*_t} \left[ \log \frac{P_{\theta^*_t}(D^n_t)}{Q(D^n_t| D^{m}_s)} \right] < \mathbb{E}_{\theta^*_t} \left[ \log \frac{P_{\theta^*_t}(D^n_t)}{\hat{Q}(D^n_t)} \right]. 
\end{align}
Rewrite the expectation and it is equivalent to prove that
\begin{align}
    \mathbb{E}_{\theta^*_t, \theta^*_s} \left[ \log \frac{Q(D^m_s)\hat{Q}(D^n_t)}{Q(D^n_t, D^{m}_s)} \right] < 0.
\end{align}
Similarly, when $D^m_s$ is sufficient enough, the density $P(\theta^*_s|D^m_s)$ will concentrate around $\theta^*_s$ , say $P(|\theta_s - \theta^*_s| < \delta_s) \rightarrow 0$ as $m$ goes to infinity, furthermore if $D^n_t$ is also very large, $p(\theta_t|D^n_t)$ will be concentrated near $\theta^*_t$ such that
\begin{align}
       \log \frac{Q(D^m_s)\hat{Q}(D^n_t)}{Q(D^n_t, D^{m}_s)}  &= - \log {\int^{\theta^*_s + \delta_s}_{\theta^*_s -\delta_s}\int^{\theta^*_t + \delta_t}_{\theta^*_t -\delta_t} \hat{Q}(\theta_t|D^n_t)\frac{\omega(\theta_t|\theta^*_s)}{\hat{\omega}(\theta_t)}  d \theta_t Q(\theta_s|D^m_s)d\theta_s}.
\end{align}
Since we assume the support $\omega(\theta_t|\theta_s)$ is a proper subset of $\Theta$ and $\omega(\theta_t|\theta_s)$ is proper over $\theta_t$, that is, this conditional prior has  positive density around $\theta^*_t$. Let us define
\begin{align}
    \hat{\Omega} &= \int_{\theta^*_t - \delta_t}^{\theta^*_t + \delta_t}\hat{\omega}(\theta_t)d\theta_t \\
    \Omega &= \int_{\theta^*_t - \delta_t}^{\theta^*_t + \delta_t} \omega(\theta_t|\theta_s) d\theta_t.
\end{align}
Then there always exists a prior such that for any $\theta_s$ around $\theta^*_s$,
$$
\Omega - \hat{\Omega} = \Delta > 0,
$$
with the choice of the prior
\begin{align}
\omega(\theta_t|\theta_s) = \hat{\omega}(\theta_t) + \frac{\Delta}{2\delta_t}.
\end{align}
This specific prior will lead to $\log\frac{\hat{\omega}(\theta_t)}{\omega(\theta_t|\theta_s)} < 1$ and the quantity above will be strictly less than zero, which is, positive transfer.
\end{proof}

\bibliographystyle{apalike}
\bibliography{reference}

\appendix

\section{Appendix}

\subsection{Proof of Lemma~\ref{lemma:bernoulli}}
\begin{proof}
\begin{align}
Q(D^n_t|D^m_s) &= \int_{\theta_s}\int_{\theta_t|\theta_s}  P(D^n_t|\theta_t) P(\theta_t|\theta_s)d\theta_t P(\theta_s|D^m_s)  d\theta_s \\
&= \int^{\theta^*_s + c}_{\theta^*_s - c} \frac{1}{2c} P(D^n_t|\theta_t) d\theta_t \\
&=\frac{1}{2c} \int^{\theta^*_s + c}_{\theta^*_s - c} (\theta_t)^{k_t} (1-\theta_t)^{n - k_t} d\theta_t 
\end{align}
We denote,
\begin{equation}
I(n, k)=\left(\begin{array}{l}
n \\
k
\end{array}\right) \int_{0}^{a} x^{k}(1-x)^{n-k} d x
\end{equation}
Then we have,
\begin{align}
I(n, k) &=\left(\begin{array}{l}
n \\
k
\end{array}\right)\left(\left[ \frac{-x^{k}(1-x)^{n-k+1}}{n-k+1}\right]_{0}^{a}+\frac{k}{n-k+1} \int_{0}^{a} x^{k-1}(1-x)^{n-k+1} d x\right) \\
&= \left(\begin{array}{l}
n \\
k
\end{array}\right)\frac{-a^{k}(1-a)^{n-k+1}}{n-k+1} + \left(\begin{array}{l}
n \\
k
\end{array}\right) \frac{k}{n-k+1}\left(\begin{array}{c}
n \\
k-1
\end{array}\right)^{-1}I(n,k-1) \\
&= \left(\begin{array}{l}
n \\
k
\end{array}\right)\frac{-a^{k}(1-a)^{n-k+1}}{n-k+1} + I(n,k-1)
\end{align}
By induction,
\begin{align}
I(n, k) &=  \sum^k_{i=1} \left(\begin{array}{l}
n \\
i
\end{array}\right)\frac{-a^{i}(1-a)^{n-i+1}}{n-i+1} + I(a,0) \\
&= \sum^k_{i=1} \left(\begin{array}{l}
n \\
i
\end{array}\right)\frac{-a^{i}(1-a)^{n-i+1}}{n-i+1} + \frac{1-(1-a)^{n+1}}{n+1}
\end{align}
Hence,
\begin{align}
\int_{0}^{a} x^{k}(1-x)^{n-k} d x = \left(\begin{array}{l}
n \\
k
\end{array}\right)^{-1} \left( \sum^k_{i=1} \left(\begin{array}{l}
n \\
i
\end{array}\right)\frac{-a^{i}(1-a)^{n-i+1}}{n-i+1} + \frac{1-(1-a)^{n+1}}{n+1} \right)
\end{align}
and for any $b > a$,
\begin{align}
\int_{0}^{b} x^{k}(1-x)^{n-k} d x = \left(\begin{array}{l}
n \\
k
\end{array}\right)^{-1} \left( \sum^k_{i=1} \left(\begin{array}{l}
n \\
i
\end{array}\right)\frac{-b^{i}(1-b)^{n-i+1}}{n-i+1} + \frac{1-(1-b)^{n+1}}{n+1} \right)
\end{align}
By subtraction,
\begin{align}
\frac{1}{b - a}\int_{a}^{b} x^{k}(1-x)^{n-k} d x =  \frac{1}{b-a}\left(\begin{array}{l}
n \\
k
\end{array}\right)^{-1} \left( \sum^k_{i=1} \left(\begin{array}{l}
n \\
i
\end{array}\right)\frac{a^{i}(1-a)^{n-i+1} - b^{i}(1-b)^{n-i+1} }{n-i+1} + \frac{(1-a)^{n+1}-(1-b)^{n+1}}{n+1} \right)
\end{align}
\end{proof}

\end{document}